\definecolor{darkgreen}{rgb}{0.0,0,0.9}
\DeclareSymbolFont{rsfs}{U}{rsfs}{m}{n}
\DeclareSymbolFontAlphabet{\mathscrsfs}{rsfs}
\newcommand\reallywidehat[1]{%
\savestack{\tmpbox}{\stretchto{%
  \scaleto{%
    \scalerel*[\widthof{\ensuremath{#1}}]{\kern.1pt\mathchar"0362\kern.1pt}%
    {\rule{0ex}{\textheight}}
  }{\textheight}%
}{2.4ex}}%
\stackon[-6.9pt]{#1}{\tmpbox}%
}
\DeclareSymbolFont{rsfs}{U}{rsfs}{m}{n}
\DeclareSymbolFontAlphabet{\mathscrsfs}{rsfs}
\numberwithin{equation}{section}
\newtheoremstyle{myexample} 
    {\topsep}                    
    {\topsep}                    
    {\rm }                   
    {}                           
    {\bf }                   
    {.}                          
    {.5em}                       
    {}  
\newcommand*{\rom}[1]{\expandafter\@slowromancap\romannumeral #1@}
\begin{document}

\title{\bf Kernel regression in high dimensions: Refined analysis beyond double descent}

\author{Fanghui Liu\thanks{Department of Electrical Engineering
		(ESAT-STADIUS), KU Leuven} \;\;\;\;  Zhenyu Liao\thanks{Department of Statistics, University of California, Berkeley} \;\;\;\; 
Johan A.K. Suykens\footnotemark[1]}

\maketitle

\begin{abstract}
	\noindent In this paper, we provide a precise characterization of generalization properties of high dimensional kernel ridge regression across the under- and over-parameterized regimes, depending on whether the number of training data $n$ exceeds the feature dimension $d$.
	By establishing a bias-variance decomposition of the expected excess risk, we show that, while the bias is (almost) independent of $d$ and monotonically decreases with $n$, the variance depends on $n,d$ and can be unimodal or monotonically decreasing under different regularization schemes.
	Our refined analysis goes beyond the double descent theory by showing that, depending on the data eigen-profile and the level of regularization, the kernel regression risk curve can be a double-descent-like, bell-shaped, or monotonic function of $n$. Experiments on synthetic and real data are conducted to support our theoretical findings.
\end{abstract}


\section{Introduction}

Interpolation learning \cite{mei2019generalization,hastie2019surprises,bartlett2020benign} has recently attracted growing attention in the machine learning community. 
This is mainly because current state-of-the-art neural networks appear to be models of this type: they are able to interpolate the training data while still generalize well on test data, even in the presence of label noise \cite{zhang2016understanding}.
It has been empirically observed that other models including random features, decision trees, and as simple as linear regression also exhibit similar phenomenon \cite{bartlett2020benign,belkin2019reconciling,liu2020survey}.
This is somewhat striking as it goes against the conventional wisdom of \emph{bias-variance trade-off}\cite{cucker2002mathematical}: predictors that generalize well must trade off the model complexity against training data fitting.
The double descent theory \cite{belkin2019reconciling} resolves this paradox by revisiting the bias-variance trade-off and showing that the model generalization error exhibits a phase transition at the \emph{interpolation point}: moving away from this point on either side tends to reduce the generalization error.

The double descent phenomenon has recently inspired intense theoretical research \cite{mei2019generalization,gerace2020generalisation,wu2020optimal,liao2020random} and has been further extended to multiple descent \cite{chen2020multiple,liang2020multiple,adlam2020neural} on various models. 
One line of work formalized the argument that, even when no explicit regularization is imposed, \emph{implicit regularization} is encoded in the model via the choice of optimization algorithms and techniques, e.g., stochastic gradient descent (SGD) \cite{hardt2016train}, dropout \cite{srivastava2014dropout}, early stopping \cite{ali2019continuous}, and ensemble methods \cite{lejeune2020implicit}.
Different from these ``external'' schemes, the kernel interpolation estimator \cite{liang2020just,belkin2019does} directly benefits from its intrinsic kernel structure that serves as an \emph{implicit regularization} to help both interpolate and approximate. 
In fact, (strictly) positive-definite kernels can interpolate an arbitrary number of data points \cite{wendland2004scattered}, and thus kernel spaces contain (nearly) optimal interpolants \cite{ghorbani2019linearized,li2020generalization}.
Although the kernel space is rich enough to contain models that generalize well, the generalization property of kernel method, for example how it depends on the choice of kernel, its interplay with the data and the level of regularization, still remains unclear. In particular, the question whether the double descent phenomenon exists in the kernel regression models is still unanswered \cite{liang2020just,bordelon2020spectrum}.
As such, refined analyses are needed to have a thorough understanding of kernel estimators, notably in the high dimensional regime of interest. This is indeed the objective of the article.

\begin{table*}[t]
	\centering
	\caption{Trends of the variance ${\tt{V}}$ with respect to $n$ in the $n<d$ case. The notation $\nearrow$ means ${\tt{V}}$ increases with $n$; $\rightarrow$ for ${\tt{V}}$ stays unchanged; and $\searrow$ for ${\tt{V}}$ decreases with $n$, see Figure~\ref{tableten1}; and $r_* := \operatorname{rank}(\bm X \bm X^{\!\top})$. From left to the right column, the regularization $\lambda$ increases, and a large  $\lambda$ leads to a small value of peak point $n_* := n_*(\lambda)$, which may even disappear. Note that $n_*$ is different for three eigenvalue decays of $\bm X \bm X^{\!\top}/d$. See Section~\ref{sec:nleqd} for details. }\label{tablevar}
	\smallskip
	\begin{threeparttable}
		\begin{tabular}{c|c|c|c|c|c|ccccc}
			\toprule[1.5pt]
			eigenvalue decay & $\lambda = 0$                                      & \multicolumn{4}{c}{$\lambda := \bar{c} n^{-\vartheta}$ (KRR)}                         \\ \midrule[1pt]
			\multirow{3}{*}{\emph{harmonic decay}} &  \multicolumn{1}{c|}{\multirow{3}{*}{ $\nearrow$ $\rightarrow$}}                                  & $1 \geq \vartheta \geq \frac{1}{2(2-\bar{c})}$            & \multicolumn{4}{c}{$\vartheta < \frac{1}{2(2-\bar{c})}$} \\ \cline{3-7} 
			&  & \multirow{2}{*}{$\nearrow$ $\rightarrow$} & $r_*<d \leq n_*$     & $r_* \leq n_* \leq d$     & $n_* \leq r_* < d$   & $n_* \leq c < r_* < d$~\tnote{1}    \\ \cline{4-7} 
			& \multicolumn{1}{c|}{}                   &                    & $\nearrow$ $\rightarrow$      & $\nearrow$ $\rightarrow$       & $\nearrow$ $\searrow$ $\rightarrow$  & $\searrow$ $\rightarrow$    \\ \midrule[1pt]
			
			\multirow{3}{*}{\emph{polynomial decay}} & \multicolumn{1}{c|}{\multirow{3}{*}{ $\nearrow$ $\rightarrow$}}                                   & $1 \geq \vartheta \geq \frac{1}{1+\frac{1}{2a}}$            & \multicolumn{4}{c}{$\vartheta < \frac{1}{1+\frac{1}{2a}}$} \\ \cline{3-7} 
			&  & \multirow{2}{*}{$\nearrow$ $\rightarrow$} & $r_*<d \leq n_*$     & $r_* \leq n_* \leq d$     & $n_* \leq r_* < d$   & $n_* \leq c < r_* < d$   \\ \cline{4-7} 
			& \multicolumn{1}{c|}{}                   &                    & $\nearrow$ $\rightarrow$      & $\nearrow$ $\rightarrow$       & $\nearrow$ $\searrow$ $\rightarrow$  & $\searrow$ $\rightarrow$   \\ \midrule[1pt]   
			
			\multirow{2}{*}{\emph{exponential decay}} & \multicolumn{1}{c|}{\multirow{2}{*}{$\nearrow$ $\rightarrow$}} & & $r_*<d \leq n_*$     & $r_* \leq n_* \leq d$     & $n_* \leq r_* < d$   & $n_* \leq c < r_* < d$  \\  \cline{4-7} 
			& \multicolumn{1}{c|}{}     &                          & $\nearrow$ $\rightarrow$      & $\nearrow$ $\rightarrow$     & $\nearrow$ $\searrow$ $\rightarrow$  & $\searrow$  $\rightarrow$   \\ \bottomrule[1.5pt]      
		\end{tabular}
		\begin{tablenotes}
			\footnotesize
			\item[1] Here $c$ is some constant such that $n > c$ always holds as $n$ is required to be large in theory and practice.  
		\end{tablenotes}
	\end{threeparttable}
\end{table*}

\begin{figure*}[!htb]
	\centering
	\subfigure[trends of variance]{\label{tableten1}
		\includegraphics[width=0.23\textwidth]{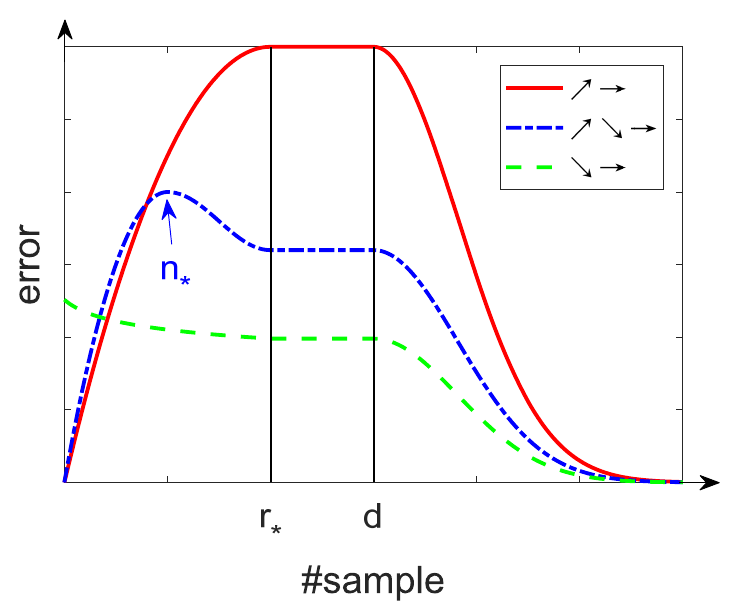}}
	\subfigure[double descent]{\label{tabledoub}
		\includegraphics[width=0.23\textwidth]{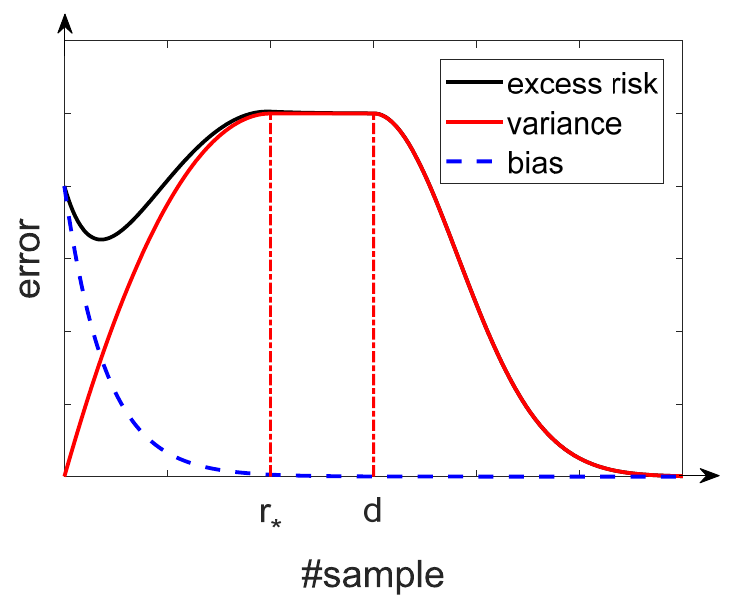}}
	\subfigure[bell-shaped]{\label{tableuni}
		\includegraphics[width=0.23\textwidth]{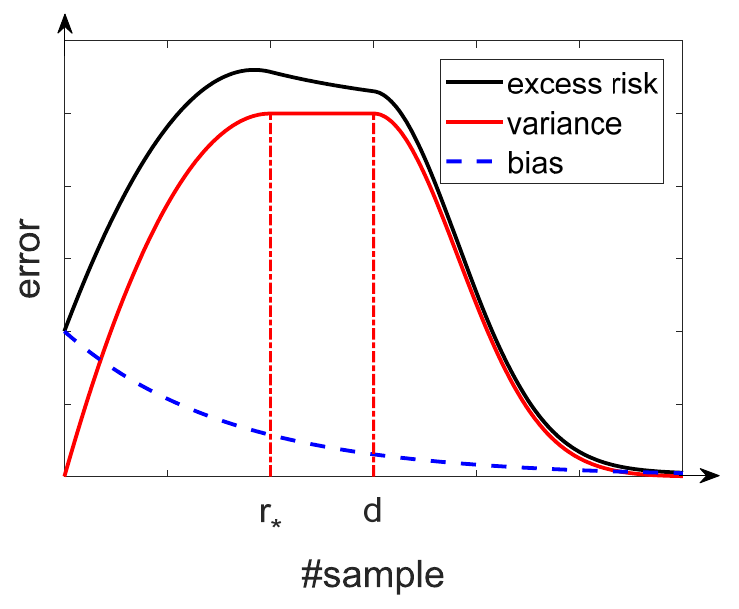}}
	\subfigure[monotonically decreasing]{\label{tabledec}
		\includegraphics[width=0.23\textwidth]{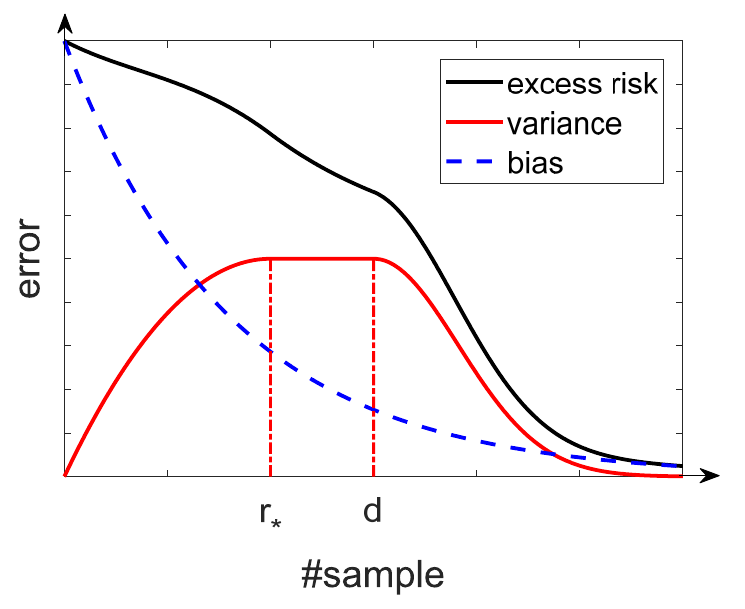}}
	\caption{(a) Trends of variance under different regularization schemes corresponding to Table~\ref{tablevar}. (b-d) Trends of the risk curve under various bias and variance can be double descent, bell-shaped, and monotonically decreasing.}\label{fig-tendency}
\end{figure*}

Here, we consider the kernel ridge regression (KRR) estimator \cite{cucker2002mathematical,suykens2002least,liu2020analysis} in a high dimensional setting with data dimension $d$ and size $n$ both large, and treat the kernel interpolation as a special case of KRR by taking the explicit regularization to be zero.
More precisely, by virtue of the linearization of kernel matrices in high dimensions \cite{liang2020just,el2010spectrum,elkhalil2020risk,liao2018spectrum,liao2019lssvm}, we disentangle the \emph{implicit} regularization of kernel interpolation estimators in an \emph{explicit} manner.
As a result, both implicit and explicit regularization schemes can be systematically studied within the proposed framework.
Mathematically, KRR aims to solve the following empirical risk minimization problem on a training set $\bm z := \{ (\bm x_i, y_i) \}_{i=1}^n$ with data $\bm x_i \in \mathbb{R}^{d}$ and responses $ y_i \in \mathbb{R}$:
\begin{equation}\label{1fzrs}
f_{\bm{z},\lambda} \!:=\! \argmin_{f \in \mathcal{H}} \left\{ \frac{1}{n} \sum_{i=1}^{n} \big(f(\bm x_i) \!-\! y_i \big)^2 \!+\! \lambda \langle f,f\rangle_{\mathcal{H}} \!\right\}\,,
\end{equation}
where an explicit Tikhonov regularization term induced by a reproducing kernel Hilbert space (RKHS) $\mathcal H$ is added to the least-squares objective. 
In statistical learning theory \cite{cucker2007learning}, the regularization parameter $\lambda > 0$ is generally taken to depend on the sample size $n$ in such a way that $\lim_{n \rightarrow \infty} \lambda(n) = 0$. Here we assume that $\lambda := \bar{c} n^{-\vartheta}$ with some $\vartheta \geq 0$ and $0 \leq \bar{c} \leq 1$ to cover the interpolation case.

In this paper, we propose a novel bias-variance decomposition of the KRR expected excess risk, and derive non-asymptotic bounds for both bias and variance. This precise assessment leads to fruitful discussions as a function of different data eigenvalue decays and regularization schemes.
Our main findings include:
\begin{itemize}
	\item  We demonstrate that, for data dimension $d$ large, the kernel matrix admits the same eigenvalue decay as $\bm X \bm X^{\!\top}/d$, where $\bm X = [\bm x_1, \dots, \bm x_n]^{\!\top} \in \mathbb{R}^{n \times d}$ is the data matrix. So in high dimensions, the eigenvalue decay of $\bm K$ is almost determined by the data, as reflected in our error bound for the bias. 
	\item The explicit regularization $\lambda := \bar{c}n^{-\vartheta}$ largely affects the peak point of the variance: a large $\lambda$ decreases the model complexity, and thus corresponds to a small value of interpolation point $n_* \equiv n_*(\lambda)$. 
	Table~\ref{tablevar} shows that, under a small (or zero) regularization so that $r_* \leq n_*$ with $ r_*:= \operatorname{rank}(\bm X \bm X^{\!\top}/d)$: the error bound for variance ${\tt{V}}$ monotonically increases with $n$ until $n := r_*$, as in the red curve of Figure~\ref{tableten1}.
	Under a moderate regularization with $n_* \leq r_*$: ${\tt{V}}$ first increases with $n$ until $n := n_*$ and then decreases. In this case, the peak point will move to the left due to $n_* < d$, see the blue curve in Figure~\ref{tableten1}. Under a large regularization with $n_* \leq c$ for some constant $c$, ${\tt{V}}$ monotonically decreases with $n$, as in the green curve of Figure~\ref{tableten1}. 
	\item Our error bounds for the bias and the variance exhibit different characteristics.  More specifically, the bias bound is (almost) independent of the data/feature dimension $d$ and monotonically decreases with $n$ at a certain $\mathcal{O}(\lambda)$ (learning) rate as in the classical learning theory \cite{cucker2007learning,Wang2011Optimal,steinwart2007fast}.
	Besides, the variance bound depends on $n$ and $d$, and exhibits monotonic decreasing or unimodal with $n$ under different regularizations. Hence, the expected excess risk, as the sum of bias and variance, can be double descent (Figure~\ref{tabledoub}), bell-shaped (Figure~\ref{tableuni}), or monotonic decreasing (Figure~\ref{tabledec}), depending on the level of \emph{implicit} and \emph{explicit} regularizations. 
	This is in agreement with empirical findings in neural networks \cite{yang2020rethinking}.
	\item Our non-asymptotic results show that, for large but fixed $d$, both the variance and bias tends to zero as $n \rightarrow \infty$ under $\lambda := \bar{c} n^{-\vartheta}$, implying that the excess risk approaches zero. Based on this, in the double descent case particularly, the minimum of the expected error in the over-parameterized $n>d$ regime is \emph{lower} than that in the $n<d$ regime. This claim cannot be obtained from \cite{liang2020just}. 
\end{itemize}
The rest of the paper is organized as follows.
We briefly introduce problem settings in Section~\ref{sec:pre}.
In Section~\ref{sec:mainres}, we present our main results on the generalization property of KRR in high dimensions and briefly sketch the main ideas of the proof.
Discussions on the derived error bounds are given in Section~\ref{sec:discusserr}.
In Section~\ref{sec:exp}, we report numerical experiments to support our theoretical results and the conclusion is drawn in Section~\ref{sec:conclusion}.

\section{Problem Settings and Preliminaries}
\label{sec:pre}

We work in the high dimensional regime for some large $d, n$ with $ c \leq d/n \leq C$ for some constants $c,C >0$.
For notational simplicity, we denote by $a(n) \lesssim b(n)$: there exists a constant $\widetilde{C}$ independent of $n$ such that $a(n) \leq \widetilde{C} b(n)$, and analogously for $\asymp$ and $ \gtrsim$.

\subsection{Kernel Ridge Regression Estimator}

Let $X \subseteq \mathbb{R}^d$ be a metric space and $Y \subseteq \mathbb{R}$, the instances $(\bm x_i, y_i)$ in the training set $\bm z = \{  (\bm x_i, y_i) \}_{i=1}^n \in Z^n $ are assumed to be independently drawn from a non-degenerate Borel probability measure $\rho$ on $X \times Y$. The \emph{target function} of $\rho$ is defined by
\begin{equation}\label{eq:def-f-rho}
f_{\rho}(\bm x) = \int_Y y \,\mathrm{d} \rho(y \mid \bm x),~ \bm x \in X\,,
\end{equation}
where $\rho(\cdot\mid\bm x)$ is the conditional distribution of $\rho$ at $\bm x \in X$.
Define the response vector $\bm y = [y_1, y_2, \cdots, y_n]^{\!\top} \in \mathbb R^n$ and the kernel matrix $\bm K = \{ k(\bm x_i, \bm x_j) \}_{i,j=1}^n$ induced by a positive definite kernel $k(\cdot,\cdot)$, KRR aims to find a hypothesis $f: X \rightarrow Y$ such that $f(\bm x)$ is a good approximation of the response $y \in Y$ corresponding to a new instance $\bm x \in X$.
This is actually an empirical risk minimization in problem~\eqref{1fzrs}.
By denoting $k(\bm x, \bm X) = [k(\bm x, \bm x_1), k(\bm x, \bm x_2), \cdots, k(\bm x, \bm x_n)]^{\!\top} \in \mathbb{R}^n$, the closed-form of KRR estimator in Eq.~\eqref{1fzrs} is
\begin{equation}\label{krrclo}
f_{\bm z, \lambda}(\bm x) = k(\bm x, \bm X)^{\!\top} (\bm K+ n \lambda \bm I)^{-1} \bm y\,.
\end{equation}
We consider two popular positive definite kernel classes of (i) the inner-product kernel of the form $k(\bm x_i, \bm x_j) = h\left( \langle \bm x_i, \bm x_j \rangle/d \right)$ and (ii) the \emph{radial} kernel function $k(\bm x_i, \bm x_j) = h\left( \| \bm x_i - \bm x_j \|_2^2/d \right)$. Here $h(\cdot): \mathbb{R} \rightarrow \mathbb{R}$ is a nonlinear function that is assumed to be (locally) smooth, as in \cite{el2010spectrum,liang2020just}.
Examples include commonly used kernels such as linear kernels, polynomial kernels, Sigmoid kernels, exponential kernels, and Gaussian kernels, to name a few.

The expected (quadratic) risk is defined as $\mathcal{E}(f) = \int_Z (f(\bm x) - y)^2 \mathrm{d} \rho$ and the empirical risk functional is defined on the training set $\bm z$, i.e., $ \mathcal{E}_{\bm z}(f) = \frac{1}{n} \sum_{i=1}^{n} \big(f( \bm x_i) - y_i \big)^2$.
To measure the estimation quality of $f_{\bm z, \lambda}$, one natural way is the \emph{expected excess risk}: 
$\mathbb{E}_{y|\bm x}[\mathcal{E}(f_{\bm z, \lambda}) - \mathcal{E}(f_{\rho})]$.
Specifically, in KRR, the expected excess risk admits $\mathbb{E}_{y|\bm x}[\mathcal{E}(f_{\bm z, \lambda}) - \mathcal{E}(f_{\rho})] = \mathbb{E}_{y|\bm x}\| f_{\bm{z},\lambda}  - f_{\rho} \|^2_{\mathcal{L}_{\rho_X}^{2}}$,
which is exactly in the weighted $\mathcal{L}^{2}$-space with the norm $\|f\|^2_{\mathcal{L}^{2}_{\rho_{{X}}}} =  \int_{{X}} |f(\bm x)|^{2} \mathrm{d} \rho_{X}(\bm x) $.

\subsection{Background on RKHS}

Now we characterize the integral operators defined by a kernel. 
Given a kernel $k$, its integral operator $L_{K}: \mathcal{L}_{\rho_X}^2 \rightarrow \mathcal{L}_{\rho_X}^2$ admits
\begin{equation}
(L_{K} f)(\cdot)=\int_{X} k(\cdot, \bm x) f(\bm x) d \rho_{X}(\bm x), \quad \forall f \in \mathcal{L}_{\rho_{X}}^{2} \,.
\end{equation}
Since $L_K$ is compact, positive definite and self-adjoint, by the spectral theorem (see, Theorem A.5.13 in \cite{Steinwart2008SVM}), there exists countable pairs of eigenvalues and eigenfunctions $\{\mu_i, \psi_i\}_{i =1}^{\infty}$ of $L_K$ such that $L_K \psi_i = \mu_i \psi_i$, 
where $\{ \psi \}_{i=1}^{\infty}$ are orthogonal basis of $\mathcal{L}_{\rho_X}^2(X)$ and $\mu_1 \geq \mu_2 \cdots > 0$ with $\lim\limits_{i \rightarrow \infty} \mu_i = 0$.
Accordingly, by Mercer's theorem, we have
$ k(\bm x, \bm x') = \sum_{i=1}^{\infty} \mu_i \psi_i(\bm x) \psi_i(\bm x')$, 
and there exists a constant $\kappa \geq 1$ such that 
$\sup_{\bm x \in X} \sum_{i=1}^{\infty} \mu_i \psi_i^2(\bm x) \leq \kappa^2
$.
It holds by $\kappa:= \max\{ 1, \sup_{\bm x \in X} \sqrt{k(\bm x, \bm x)} \}$.
Based on the data matrix $\bm X$ and the integral operator $L_K$, the empirical integral operator is given by
$L_{K, \bm X} = \frac{1}{n} \sum_{i=1}^n k(\cdot, \bm x_i) \otimes k(\cdot, \bm x_i)$,
which converges to the data-free limit $L_K$ at an $\mathcal{O}(1/\sqrt{n})$ rate \cite{de2009elastic}.

\section{Main Results}
\label{sec:mainres}

In this section, we state our main result under some basic/technical assumptions, compare it with existing results, and sketch the main ideas of our proof.

\subsection{Basic results}
\label{sec:basic}

To illustrate our analysis, we need the following three standard assumptions.
\begin{assumption}\label{assexist}
	(Existence of $f_{\rho}$)
	We assume $f_{\rho} \in \mathcal{H}$.
\end{assumption}
This is a standard assumption in learning theory and assumes that the target function $f_\rho$ defined in Eq.~\eqref{eq:def-f-rho} is indeed realizable, see also \cite{richards2019optimal,Rudi2017Generalization,cucker2007learning,steinwart2007fast}.

\begin{assumption}\label{assber}
	(Noise condition \cite{liang2020just,dobriban2018high})
	There exists $\sigma$ such that
	$\mathbb{E}[(f_{\rho}(\bm x) - y)^2 \mid \bm x ] \leq \sigma^2, $ almost surely.
\end{assumption}
This is a broad model for the noise in the output $y$, containing uniformly bounded or sub-Gaussian noise; and is in fact weaker than the standard Bernstein condition, e.g., in \cite{blanchard2010optimal}.

\begin{assumption}\label{assum8m} ((8+$m$)-moments \cite{liang2020just,liu2020ridge})
	Let $\bm x_i = \bm \Sigma_d^{1/2} \bm t_i$, where $\bm t_i \in \mathbb{R}^d$ has i.i.d. entries with zero mean, unit variance, and a finite (8+$m$)-moments, i.e., its entry $\bm t_i(j)$, $1 \leq j \leq d$, satisfies $\mathbb{E}[\bm t_i(j)] = 0$, $\mathbb{V}[\bm t_i(j)] = 1$, and $\mathbb{E}(|\bm t_i(j)|) \leq C d^{\frac{2}{8+m}}$ such that $\mathbb E [\bm x_i \bm x_i^{\!\top}] = \bm \Sigma_d$ with a bounded spectral norm $\| \bm \Sigma_d \|_2$, for some $m > 0$.
\end{assumption}
This is a standard setting in high-dimensional statistics and random matrix theory \cite{el2010spectrum,dobriban2018high,liang2020just,hastie2019surprises,elkhalil2020risk} that assumes that the data are drawn from some not-too-heavy-tailed distribution, with possibly (involved) structure between the entries.

\begin{table}[tp]
	\centering
	\fontsize{7}{8}\selectfont
	\begin{threeparttable}
		\caption{Parameters of the linearized kernel $\widetilde{\bm K^{\operatorname{lin}}}$ in \cite{el2010spectrum}.}
		\label{tabparam}
		\begin{tabular}{cccccccccccccccccccc}
			\toprule
			parameters &inner-product kernels & radial kernels  \cr
			\midrule
			$\alpha$  &$h(0)+h^{\prime \prime}(0) \frac{\operatorname{tr}\left(\bm \Sigma_{d}^{2}\right)}{2d^{2}}$ & $h(2\tau)+2h^{\prime \prime}(2\tau) \frac{\operatorname{tr}\left(\bm \Sigma_{d}^{2}\right)}{d^{2}}$ \cr
			\midrule
			$\beta$  &$h^{\prime}(0)$ & $-2h^{\prime}(2\tau)$ \cr
			\midrule
			$\gamma$  &$h(\tau)-h(0)-\tau h^{\prime}(0)$ & $h(0)+ 2\tau h^{\prime}(2\tau) - h(2\tau)$ \cr
			\midrule
			$\bm E$ & $\bm 0_{n \times n}$ & $h^{\prime}(2\tau) \bm A + \frac{1}{2}h^{\prime \prime}(2\tau) \bm A \odot \bm A$ \tnote{1} \cr
			\bottomrule
		\end{tabular}
		\begin{tablenotes}
			\footnotesize
			\item[1] $\bm A := \bm 1 \bm \psi^{\!\top} + \bm \psi \bm 1^{\!\top}$, where $\bm \psi \in \mathbb{R}^n$ with $\psi_i := \| \bm x_i \|^2_2/d - \tau$ and $\tau := \operatorname{tr}(\bm \Sigma_d)/d$.
		\end{tablenotes}
	\end{threeparttable}\vspace{-0.1cm}
\end{table}

To aid our proof, we need some extra results.
In \cite{el2010spectrum}, it has been shown that the kernel matrix $\bm K$ in high dimensions can be well approximated by $\widetilde{\bm K^{\operatorname{lin}}}$ in spectral norm, i.e., $\| \bm K - \widetilde{\bm K^{\operatorname{lin}}} \|_2 \rightarrow 0$ as $n,d \to \infty$\vspace{-0.1cm}
\begin{equation}\label{eqlinear}
\widetilde{\bm K^{\operatorname{lin}}} := \alpha \bm 1\bm 1^{\!\top}+\beta \frac{\bm X \bm X^{\!\top}}{d} + \gamma \bm I + \bm E \,,
\end{equation}
with non-negative parameters $\alpha$, $\beta$, $\gamma$, and the additional matrix $\bm E$ given in Table~\ref{tabparam}, see some typical examples in Appendix~\ref{sec:examples}.
Here $\gamma$ is the \emph{implicit} regularization parameter in kernel estimator that depends on the nonlinear function $h$ in the kernel $k$ and the data structure $\bm \Sigma_d$.
According to Eq.~\eqref{eqlinear}, denote the shortcut $\widetilde{\bm X} := \beta {\bm X \bm X^{\!\top}}/{d}+\alpha \bm 1\bm 1^{\!\top}$, we show in high dimensions that, $\bm K$ admits the same eigenvalue decay as $\widetilde{\bm X}$ and $\bm X \bm X^{\!\top}/d$ (see details in Appendix~\ref{sec:proofeig}).
Subsequently, we introduce the following quantity function
\begin{equation}\label{defNbx}
\mathcal{N}^b_{\widetilde{\bm X}}: = \operatorname{tr}\left[(\widetilde{\bm X} + b \bm I_n)^{-2} \widetilde{\bm X}\right] = \sum_{i=1}^n \frac{\lambda_{i}(\widetilde{\bm X} )}{\left[b +\lambda_{i}(\widetilde{\bm X})\right]^{2}}\,,
\end{equation} 
which is associated with various quantity functions in \cite{ali2019continuous,dobriban2018high,liang2020just,jacot2020kernel,nakkiran2020optimal} and, as we shall see, plays an important role in determining the variance behavior.
We will discuss at length $\mathcal{N}^b_{\widetilde{\bm X}}$ based on different data eigenvalue decays in Section~\ref{sec:discusserr}.

Formally, our main results of KRR in a high-dimensional regime are stated as follows.
\begin{theorem} (Basic result) \label{promainba}
	Under Assumptions~\ref{assexist}-\ref{assum8m}, let $0 < \delta <1/2$, $\theta = \frac{1}{2} - \frac{2}{8+m}$, $d$ large enough, taking the regularization parameter $\lambda := \bar{c} n^{-\vartheta}$ with $0 \leq \vartheta \leq 1/2$, for any given $\varepsilon > 0$, it holds with probability at least $1 - 2\delta - d^{-2}$ with respect to the draw of ${\bm X}$ that
	\begin{equation}\label{resmainba}
	\mathbb{E}_{y|\bm x}\big\| f_{\bm{z},\lambda} \! - \! f_{\rho} \big\|^2_{\mathcal{L}_{\rho_X}^{2}}  \!\lesssim \! n^{- \vartheta}\! \log^4\! \Big(\frac{2}{\delta}\Big)  + {\tt{V}}_1 + {\tt{V}}_2 \,,
	\end{equation}
	with ${\tt{V}}_1 := \frac{\sigma^2\beta}{d} \mathcal{N}^{n\lambda + \gamma}_{\widetilde{\bm X}}$ and the residual term ${\tt{V}}_2$
	\begin{equation*}
	{\tt{V}}_2 := \left\{
	\begin{array}{rcl}
	\begin{split}
	&\frac{\sigma^2 \log ^{2+4\varepsilon} d}{(n\lambda+\gamma)^2 d^{4 \theta-1}} ,~\mbox{for inner-product kernels} \\
	& \frac{\sigma^2}{(n\lambda+\gamma)^2} d^{-2 \theta} \log ^{1+\varepsilon} d,~\mbox{for radial kernels}\,.
	\end{split}
	\end{array} \right.
	\end{equation*}
\end{theorem}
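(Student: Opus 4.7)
The plan is to apply a conditional bias--variance decomposition and handle each piece with a different technology, invoking the high-dimensional kernel linearization of \cite{el2010spectrum} only where the dimension $d$ actually enters the answer. Writing $\bm y=f_\rho(\bm X)+\bm\epsilon$ with centered noise of variance at most $\sigma^2$ (Assumption~\ref{assber}) and using that $f_{\bm z,\lambda}$ is linear in $\bm y$, one gets
\begin{equation*}
\mathbb{E}_{y|\bm x}\bigl\|f_{\bm z,\lambda}-f_\rho\bigr\|^2_{\mathcal{L}^2_{\rho_X}}
=\underbrace{\bigl\|\mathbb{E}_{y|\bm x}[f_{\bm z,\lambda}]-f_\rho\bigr\|^2_{\mathcal{L}^2_{\rho_X}}}_{=:\,{\tt B}}
+\underbrace{\mathbb{E}_{y|\bm x}\bigl\|f_{\bm z,\lambda}-\mathbb{E}_{y|\bm x}[f_{\bm z,\lambda}]\bigr\|^2_{\mathcal{L}^2_{\rho_X}}}_{=:\,{\tt V}}.
\end{equation*}
The design is for ${\tt B}$ to deliver the $n^{-\vartheta}\log^4(2/\delta)$ term and for ${\tt V}$ to produce ${\tt V}_1+{\tt V}_2$.

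For the bias, Assumption~\ref{assexist} and the usual operator calculus give $\mathbb{E}_{y|\bm x}[f_{\bm z,\lambda}]-f_\rho=-\lambda(L_{K,\bm X}+\lambda)^{-1}f_\rho$ as an identity in $\mathcal{L}^2_{\rho_X}$. Writing $f_\rho=L_K^{1/2}g$ (the isomorphism between $\mathcal{H}$ and $\mathrm{Ran}(L_K^{1/2})$), the elementary inequality $\|\lambda(L_K+\lambda)^{-1}L_K^{1/2}\|\le\sqrt{\lambda}/2$ already yields a clean $\mathcal{O}(\lambda)$ bound in the idealized operator $L_K$. To replace $L_K$ by its data counterpart $L_{K,\bm X}$, I would use the resolvent identity $(L_{K,\bm X}+\lambda)^{-1}=(L_K+\lambda)^{-1}+(L_K+\lambda)^{-1}(L_K-L_{K,\bm X})(L_{K,\bm X}+\lambda)^{-1}$ twice and combine with the Hilbert--Schmidt Bernstein concentration $\|L_K-L_{K,\bm X}\|\lesssim n^{-1/2}\log(2/\delta)$ of \cite{de2009elastic}; the two concentration passes account for $\log^4(2/\delta)$ and the substitution $\lambda=\bar c n^{-\vartheta}$ produces the announced $n^{-\vartheta}\log^4(2/\delta)$ rate. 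This step is dimension-free and consumes the $2\delta$ part of the probability budget.

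For the variance, the computation starts from
\begin{equation*}
{\tt V}=\sigma^2\operatorname{tr}\!\bigl[(\bm K+n\lambda\bm I)^{-2}\bm M\bigr],\qquad
\bm M_{ij}:=\mathbb{E}_{\bm x}\bigl[k(\bm x,\bm x_i)k(\bm x,\bm x_j)\bigr]=\sum_{\ell}\mu_\ell^2\psi_\ell(\bm x_i)\psi_\ell(\bm x_j),
\end{equation*}
where the second identity uses Mercer's theorem. I would then (i) replace $\bm K$ by $\widetilde{\bm K^{\operatorname{lin}}}=\widetilde{\bm X}+\gamma\bm I+\bm T$ via \cite{el2010spectrum}, so that $(\bm K+n\lambda\bm I)^{-1}$ is close to $(\widetilde{\bm X}+(n\lambda+\gamma)\bm I)^{-1}$ (recall $\bm T=\bm 0$ for inner-product kernels and $\bm T$ is low rank for radial ones), and (ii) Taylor-expand $h$ around $0$ (inner-product) or $2\tau$ (radial) inside $k(\bm x,\bm x_i)k(\bm x,\bm x_j)$ and integrate over $\bm x$ using $\mathbb{E}[\bm x]=0$, $\mathbb{E}[\bm x\bm x^{\!\top}]=\bm\Sigma_d$. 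The leading order is $\bm M=\alpha^2\bm 1\bm 1^{\!\top}+(\beta/d)\cdot\beta\bm X\bm\Sigma_d\bm X^{\!\top}/d+R$, which collapses into $(\beta/d)\widetilde{\bm X}$ plus lower-order pieces; substituting the two surrogates reproduces the main term $\tfrac{\sigma^2\beta}{d}\operatorname{tr}[\widetilde{\bm X}(\widetilde{\bm X}+(n\lambda+\gamma)\bm I)^{-2}]={\tt V}_1$, and the mismatches go into ${\tt V}_2$.

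The principal obstacle is propagating the linearization error through the quadratic form $\operatorname{tr}[(\bm K+n\lambda\bm I)^{-2}\bm M]$ without losing factors of $n$. A crude spectral-times-nuclear bound is too lossy; instead I would use the resolvent identity together with $\|(\bm K+n\lambda\bm I)^{-1}\|_2\le(n\lambda+\gamma)^{-1}$ (so $\gamma$ acts as the implicit regularizer that keeps the inverse controlled in the $\lambda\to 0$ limit) and apply the sharp, kernel-specific spectral rates of \cite{el2010spectrum}, namely $\|\bm K-\widetilde{\bm K^{\operatorname{lin}}}\|_2\lesssim d^{1/2-2\theta}\log^{1+2\varepsilon}d$ for inner-product kernels and $\lesssim d^{-\theta}\log^{(1+\varepsilon)/2}d$ for radial ones. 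Squaring these rates and dividing by $(n\lambda+\gamma)^2$ gives precisely the two cases of ${\tt V}_2$, while the $d^{-2}$ slack in the probability statement is the tail coming from the high-probability version of that spectral concentration theorem.
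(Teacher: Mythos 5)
Your decomposition is exactly the paper's Lemma~\ref{lemerr}: since $\mathbb{E}_{y|\bm x}[f_{\bm z,\lambda}]=k(\cdot,\bm X)^{\!\top}(\bm K+n\lambda\bm I)^{-1}f_\rho(\bm X)=f_{\bm X,\lambda}$, your conditional bias--variance split with ${\tt B}=\|f_{\bm X,\lambda}-f_\rho\|^2$ is literally what the paper writes, and your variance starting point ${\tt V}\le\sigma^2\operatorname{tr}[(\bm K+n\lambda\bm I)^{-2}\bm M]$ is the same object as the paper's $\sigma^2\,\mathbb{E}_{\bm x}\|(\bm K+n\lambda\bm I)^{-1}k(\bm x,\bm X)\|_2^2$. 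Your variance argument, despite being phrased as a Taylor expansion of the second-moment matrix $\bm M$ rather than a linearization of the kernel vector $k(\bm x,\bm X)$ followed by integration, reproduces the same computation (for inner-product kernels $\bm M\approx h(0)^2\bm 1\bm 1^{\!\top}+\beta^2\bm X\bm\Sigma_d\bm X^{\!\top}/d^2$, the paper's Eq.~\eqref{klambdn}), uses the same rates from \cite{el2010spectrum} for ${\tt V}_2$, and correctly identifies $\gamma$ as the regularizer that controls $(\bm K+n\lambda\bm I)^{-1}$ as $\lambda\to0$. One correction: your phrase ``collapses into $(\beta/d)\widetilde{\bm X}$'' glosses over the $h(0)^2\bm 1\bm 1^{\!\top}$ piece, which the paper has to bound separately as a rank-one perturbation rather than fold into $\mathcal{N}^b_{\widetilde{\bm X}}$; and the rate you cite (scaling like $d^{1/2-2\theta}$) is the one for $\|k(\bm x,\bm X)-k^{\operatorname{lin}}(\bm x,\bm X)\|_2$, not for $\|\bm K-\widetilde{\bm K^{\operatorname{lin}}}\|_2$, which is $\mathcal{O}(d^{-\theta})$.

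Where you genuinely diverge is the bias. The paper does not reach $n^{-2\vartheta r}\log^4(2/\delta)$ via a single resolvent expansion plus the crude $\|L_K-L_{K,\bm X}\|\lesssim n^{-1/2}\log(2/\delta)$; it carefully factorizes $\lambda(L_{K,\bm X}+\lambda I)^{-1}(L_K-L_{K,\bm X})(L_K+\lambda I)^{-1}L_K^r g_\rho$ into $\bigl((L_{K,\bm X}+\lambda I)^{-1/2}(L_K+\lambda I)^{1/2}\bigr)$, $\bigl((L_K+\lambda I)^{-1/2}(L_K-L_{K,\bm X})\bigr)$, etc., and controls each factor with effective-dimension concentration (Lemmas~\ref{LKfirst}--\ref{lemmaterm1}). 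Your shortcut gives $\|f_{\bm X,\lambda}-f_\lambda\|^2\lesssim n^{-(1-\vartheta)}\log^2(2/\delta)$, which for $\vartheta\le1/2$ is dominated by the approximation error $\|f_\lambda-f_\rho\|^2\lesssim\lambda\asymp n^{-\vartheta}$, so you do recover (in fact slightly sharpen) the stated $n^{-\vartheta}\log^4(2/\delta)$. This buys a much shorter proof for Theorem~\ref{promainba}. What it loses: it would not yield Theorem~\ref{promain}'s $n^{-2\vartheta r}$ rate for general source exponents $r$, where the approximation error no longer dominates and the balance between $\frac{1}{n\lambda}$ and $\mathcal{N}(\lambda)$ inside Lemma~\ref{LKfirst} is what allows $\vartheta$ to range up to $\frac{1}{1+\eta}$. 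Also, your claim that ``two concentration passes account for $\log^4(2/\delta)$'' is backwards --- a single resolvent pass gives $\log^2$, and a second pass with the same crude $\|(L_{K,\bm X}+\lambda)^{-1}\|\le\lambda^{-1}$ degrades rather than helps; the paper's $\log^4$ arises because $\|f_{\bm X,\lambda}-f_\lambda\|$ carries a $\log^2$ factor from two separate high-probability events, squared.
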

\noindent{\bf Remark:} The first term in Eq.~\eqref{resmainba} is the bound of the bias, which is independent of $d$ and monotonically decreases with $n$. The sum ${\tt{V}}_1 + {\tt{V}}_2$ is the bound of the variance that depends on both $n$ and $d$. Note that ${\tt{V}}_2$ monotonically decreases with $n$, and approaches to zero for a large $n$. Therefore, the error bound for ${\tt{V}}_1 \asymp \frac{1}{d} \mathcal{N}^{n\lambda + \gamma}_{\widetilde{\bm X}}$ is the key part of estimates for the variance and will be discussed in in Section~\ref{sec:discusserr}, where $n \lambda$ corresponds to the \emph{explicit} regularization and $\gamma$ the \emph{implicit} regularization. 
We will demonstrate that ${\tt{V}}_1$ can be monotonically decreasing or unimodal under different regularization schemes.
Such monotonic bias and unimodal variance can lead to various behaviors of the excess risk, including monotonically decreasing, double descent, and bell-shaped risk curve, as illustrated in Figure~\ref{fig-tendency} of introduction.

\subsection{Refined result}
Based on the basic result, if we consider two additional assumptions, i.e., extending Assumption~\ref{assexist} by considering the regularity of $f_{\rho}$ and studying spectral decay of $k$ via complexity of $\mathcal{H}$, we can obtain a refined result.
\begin{assumption}\label{sourcecon} (Source condition \cite{cucker2007learning}) 
	For some  $0 < r \leq 1$, there exists $g_{\rho} \in \mathcal{L}_{\rho_X}^2$ satisfying $\| g_{\rho} \|_{\mathcal{L}_{\rho_X}^2} \leq R$ such that $f_{\rho} = L_K^r g_{\rho}$.
\end{assumption}
It has been widely used in the literature of learning theory to assess the regularity of $f_{\rho}$ \cite{cucker2007learning,zhang2013divide,Rudi2017Generalization}, which indicates $f_{\rho}$ belongs to the range space of $L_K^r$.
Assumption~\ref{assexist} is the worst case of Assumption~\ref{sourcecon} by choosing $r=1/2$ since $\| f \|_{\mathcal{L}_{\rho_X}^2} = \| L_K^{1/2} f \|_{\mathcal{H}},~\forall f \in {\mathcal{L}_{\rho_X}^2}$.
\begin{assumption}
	(Capacity condition \cite{cucker2007learning})
	\label{effectass}
	For any $\lambda > 0$, there exist $Q > 0$ and $\eta \in [0,1]$ such that
	\begin{equation*}\label{nlambda}
	\begin{split}
	\mathcal{N}(\lambda) &:= \operatorname{tr}\left((L_K+\lambda I)^{-1} L_K \right) \leq Q^2 \lambda^{-\eta}\,.
	\end{split}
	\end{equation*}
\end{assumption}
The notation $\mathcal{N}(\lambda)$ denotes the ``effective dimension" and can be regarded as a ``measure of size" of the RKHS.
This is a natural and widely used assumption in the literature \cite{cucker2007learning,zhang2013divide,Rudi2017Generalization}.
Assumption~\ref{effectass} always holds for $\eta=1$ and $Q=\kappa$ where $\kappa:= \max\{ 1, \sup_{\bm x \in X} \sqrt{k(\bm x, \bm x)} \}$ as $L_K$ is a trace class operator.
Its kernel matrix form is
$d_{\bm K}^{\lambda} := \operatorname{tr} \left( (\bm K + \lambda \bm I_n)^{-1} \bm K \right) = \sum_{i=1}^n \frac{\lambda_i(\bm K)}{\lambda_i(\bm K) + \lambda}$ \cite{avron2017random,li2019towards}.
While Assumption~\ref{effectass} can be further refined to obtain a bound that depends on $d$ \cite{pagliana2020interpolation}, here we focus on the eigenvalue decay of $\bm K$, see Section \ref{sec:discusserr} for details.

Based on the above discussion, we obtain a refined result of Theorem~\ref{promainba} as below.
\begin{theorem}(Refined result)\label{promain}
	Under Assumptions~\ref{assber}-\ref{effectass}, let $0 < \delta <1/2$, $\theta = \frac{1}{2} - \frac{2}{8+m}$, and $d$ large enough, taking $\lambda := \bar{c} n^{-\vartheta}$ with $0 \leq \vartheta \leq \frac{1}{1+\eta}$, then for any given $\varepsilon > 0$, it holds with probability at least $1 - 2\delta - d^{-2}$
	\begin{equation}\label{resmain}
	\mathbb{E}_{y|\bm x}\big\| f_{\bm{z},\lambda} \! - \! f_{\rho} \big\|^2_{\mathcal{L}_{\rho_X}^{2}}  \!\!\leq\!  n^{-2 \vartheta r}\! \log^4\! \Big(\frac{2}{\delta}\Big)  + {\tt{V}}_1 + {\tt{V}}_2 \,,
	\end{equation}
	where ${\tt{V}}_1 $ and ${\tt{V}}_2 $ are the same as in Theorem~\ref{promainba}.
\end{theorem}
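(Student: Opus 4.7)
The plan is to reuse the structure of the proof of Theorem~\ref{promainba} and to replace only the bias analysis, since the variance terms $\mathtt{V}_1$ and $\mathtt{V}_2$ in the statement are identical in the two theorems. Concretely, I would first write $\mathbb{E}_{y|\bm x}\| f_{\bm z,\lambda} - f_\rho \|^2_{\mathcal{L}^2_{\rho_X}}$ as a deterministic bias piece depending only on $\bm X$ and $f_\rho$, plus a stochastic variance piece coming from the centered noise $\epsilon = y - f_\rho(\bm x)$. The variance piece does not feel the regularity of $f_\rho$: by Assumption~\ref{assber}, the noise has bounded conditional second moment, so the entire variance analysis of Theorem~\ref{promainba} carries through verbatim. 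In particular, one linearizes $\bm K$ via Eq.~\eqref{eqlinear}, rewrites $(\bm K + n\lambda \bm I)^{-1}$ as a perturbation of $(\widetilde{\bm X} + (n\lambda+\gamma)\bm I)^{-1}$, and uses Assumption~\ref{assum8m} together with the concentration of $\bm K - \widetilde{\bm K^{\operatorname{lin}}}$ to obtain the same $\mathtt{V}_1 + \mathtt{V}_2$ with the same constants.

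The novelty lies in the bias. Passing to integral operators via $f_{\bm z,\lambda}(\cdot) = \sum_i \alpha_i k(\cdot,\bm x_i)$ and writing $f_\rho = L_K^r g_\rho$ under Assumption~\ref{sourcecon}, the noiseless bias in $\mathcal{L}^2_{\rho_X}$ equals $\lambda \,(L_{K,\bm X}+\lambda I)^{-1} f_\rho$ composed with $L_K^{1/2}$. The natural decomposition is
\begin{equation*}
\lambda (L_{K,\bm X} + \lambda I)^{-1} L_K^r g_\rho = \lambda (L_{K,\bm X} + \lambda I)^{-1} \bigl(L_K^r - L_{K,\bm X}^r\bigr) g_\rho + \lambda (L_{K,\bm X} + \lambda I)^{-1} L_{K,\bm X}^r g_\rho.
\end{equation*}
The second term is deterministic and handled by functional calculus: $\sup_{t \geq 0}\, \lambda\, t^r/(t+\lambda) \lesssim \lambda^{\min(r,1)}$, so it contributes $\lambda^{2\min(r,1)}$. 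The first term is bounded by the operator inequality $\|L_K^r - L_{K,\bm X}^r\| \lesssim \|L_K - L_{K,\bm X}\|^{\min(r,1)}$ combined with the $\mathcal{O}(1/\sqrt{n})$ concentration of $L_{K,\bm X}$ to $L_K$ recalled in Section~\ref{sec:pre}. Substituting $\lambda = \bar c n^{-\vartheta}$ then yields a bias of order $n^{-2\vartheta r}\log^4(2/\delta)$, which matches the first term of Eq.~\eqref{resmain}.

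The main obstacle is to bring in the capacity condition (Assumption~\ref{effectass}) cleanly enough that the admissible range widens from $\vartheta \leq 1/2$ (as in Theorem~\ref{promainba}) to $\vartheta \leq 1/(1+\eta)$. The sharp tool here is a Bernstein-type inequality in the Hilbert space of Hilbert–Schmidt operators applied to $(L_K+\lambda I)^{-1/2}(L_{K,\bm X} - L_K)(L_K+\lambda I)^{-1/2}$: its variance proxy is precisely $\mathcal{N}(\lambda)/n \lesssim Q^2\,\lambda^{-\eta}/n$, and requiring this quantity to be $\mathcal{O}(1)$ forces $n^{-1}\lambda^{-\eta} \lesssim \lambda$, i.e., $\vartheta(1+\eta)\leq 1$. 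A secondary technical difficulty is the saturation regime $r>1/2$, where the $\min(r,1)$ Lipschitz bound on operator powers is no longer tight and must be replaced by an integral representation (Dunford–Taylor / double operator integrals) to obtain the correct $\lambda^{2r}$ scaling up to $r=1$. Once these refinements are in place, summing the refined bias bound with the unchanged $\mathtt{V}_1 + \mathtt{V}_2$ produces Eq.~\eqref{resmain}.
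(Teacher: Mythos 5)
Your high-level plan is right: the variance side ($\mathtt{V}_1+\mathtt{V}_2$) only invokes Assumptions~\ref{assber} and~\ref{assum8m} and carries over unchanged; all the new work is in the bias. That part matches the paper. But your bias argument takes a genuinely different route from the paper, and as written it has a gap that prevents you from reaching the stated $\vartheta\leq\tfrac{1}{1+\eta}$ range.

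The paper does not decompose $f_\rho = L_K^r g_\rho$ through the \emph{empirical} power $L_{K,\bm X}^r$. Instead it introduces the population intermediate $f_\lambda=(L_K+\lambda I)^{-1}L_K f_\rho$, splits $\mathtt{B}\lesssim\|f_{\bm X,\lambda}-f_\lambda\|^2+\|f_\lambda-f_\rho\|^2$, bounds the second by $R\lambda^r$ (Smale--Zhou), and handles $f_{\bm X,\lambda}-f_\lambda$ by a first-order resolvent identity followed by a careful redistribution of powers of $(L_K+\lambda I)^{-1/2}$ across the factors (Lemma~\ref{decombias}). The point of that redistribution is that every stochastic piece appears already ``preconditioned'' by $(L_K+\lambda I)^{-1/2}$, so the weighted concentration bound of Lemma~\ref{LKfirst} --- whose variance proxy is exactly $\sqrt{\mathcal{N}(\lambda)/n}\lesssim\lambda^{-\eta/2}/\sqrt n$ --- applies directly. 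Tracking the resulting exponent $n^{-(1-\vartheta(\eta r+1-2r))/2}$ and comparing it to $n^{-\vartheta r}$ is what produces the constraint $\vartheta(1+\eta r)\leq 1$, of which $\vartheta\leq\tfrac{1}{1+\eta}$ is a sufficient (uniform in $r\leq 1$) condition.

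Your decomposition, $\lambda(L_{K,\bm X}+\lambda I)^{-1}(L_K^r - L_{K,\bm X}^r)g_\rho + \lambda(L_{K,\bm X}+\lambda I)^{-1}L_{K,\bm X}^r g_\rho$, does not produce that preconditioned structure. The operator-Lipschitz bound $\|L_K^r-L_{K,\bm X}^r\|\lesssim\|L_K-L_{K,\bm X}\|^{\min(r,1)}$ is an \emph{unweighted} estimate, and combined with the only concentration you invoke ($\|L_{K,\bm X}-L_K\|\lesssim n^{-1/2}$) it gives a squared contribution of order $n^{-r}$. That is dominated by the desired $n^{-2\vartheta r}$ only when $\vartheta\leq 1/2$, i.e.\ you recover Theorem~\ref{promainba} but not the refinement. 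You correctly name the missing ingredient --- Bernstein in Hilbert--Schmidt norm with variance proxy $\mathcal{N}(\lambda)/n$ applied to $(L_K+\lambda I)^{-1/2}(L_{K,\bm X}-L_K)(L_K+\lambda I)^{-1/2}$ --- but that quantity never appears in your two-term split, so there is no place to plug it in. To use it you would either have to insert $(L_K+\lambda I)^{-1/2}(L_K+\lambda I)^{1/2}$ judiciously around $L_K^r-L_{K,\bm X}^r$ (which essentially rederives the paper's Lemma~\ref{decombias}), or adopt the paper's $f_\lambda$-centered decomposition outright. There is also a secondary unresolved issue: your functional-calculus bound $\sup_t \lambda t^r/(t+\lambda)\lesssim\lambda^{\min(r,1)}$ is applied to an operator in $L_{K,\bm X}$, but the $\mathcal{L}^2_{\rho_X}$ norm requires composing with $L_K^{1/2}$, and the two do not commute; bridging them is precisely what Lemma~\ref{lemmaterm1} handles (and where the capacity condition enters again). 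In short: the variance reuse and the ambition to widen the $\vartheta$ range are both on target, but the bias decomposition you chose is not compatible with the effective-dimension machinery needed to certify $\vartheta$ up to $\tfrac{1}{1+\eta}$.
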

\noindent{\bf Remark:} Compared to classical learning theory results \cite{fischer2017sobolev} achieving $\mathcal{O}(n^{-\frac{2r+1}{2r+1+\eta}})$ learning rates, the parameter $\eta$ in our results only effects the selection range of $\lambda$, which is nearly independent of the learning rates to some extent. That means, the spectral decay of a kernel function $k$ in high dimensions is almost irrelevant to its kernel type.
In fact, the eigenvalue decay of the kernel matrix in our model largely depends on the data, which is in essence different from classical learning theory results. 
Therefore, our result reflects a certain ``universality" on the kernel function in high dimensional problems, which shows consistency to \cite{el2010spectrum}.

\subsection{Related work}
We provide non-asymptotic results that systematically analyze both implicit and explicit regularization schemes within a unified framework. 

{\bf Implicit regularization in kernel/linear interpolation:}
Implicit regularization can be induced by minimum norm solutions in linear interpolation \cite{derezinski2019exact,kobak2020optimal}, or the curvature of the kernel function in kernel interpolation \cite{liang2020just}.
Compared to the risk curve in \cite{liang2020just} that converges to a non-zero constant, the risk curve in our results tends to zero when $n \gg d$.
Hence our result demonstrates that, in the double descent case, the minimum of the expected risk in the second descent is lower than the first descent; while the same claim cannot be obtained from \cite{liang2020just}.
Besides, under the basic $f_{\rho} \in \mathcal{H}$ case, our bias bound is based on the eigen-decay (trends) of the kernel matrix $\bm K$ and thus can be (almost) independent of $d$, achieving an optimal learning rate $\mathcal{O}(\lambda)$ in a minimax case. This is different from \cite{liang2020just} that corresponds to the sum of tailed eigenvalues of $\bm K$.
Specifically, if we directly set $\lambda$ to zero, our result for the bias still holds, which can be bounded by $ \| L_{K,\bm X} - L_K \|_{\mathcal{L}_{\rho_X}^{2}} \lesssim \mathcal{O}(1/\sqrt{n})$.

{\bf Explicit regularization in kernel/linear regression:}
We provide non-asymptotic results that refine a series of asymptotic analyses, e.g., the Stieltjes transform approach in \cite{hastie2019surprises,elkhalil2020risk,yang2020rethinking,jacot2020implicit} and the statistical mechanic approach in \cite{canatar2020statistical}. 
In fact, by considering the limiting eigenvalue distribution of $\bm X \bm X^{\!\top}/d$ via its Stieltjes transform $\frac{1}{n}\mathcal{N}^b_{\bm X \bm X^{\!\top}/d} \!\approx\!   m(-b) - b m'(-b) $, for $m(b)$ the solution to the popular Mar{\u c}enko–Pastur equation \cite{marvcenko1967distribution}, our error bound recovers \cite[Theorem 5]{hastie2019surprises} with $b:=\lambda$ and isotropic features $\bm \Sigma_d = \bm I_d$.
Finite sample analyses are often based on a finer control of the Stieltjes transform \cite{jacot2020kernel} or the effective rank \cite{bartlett2020benign,chinot2020benign}.
However, the aforementioned results are generally limited to Gaussian \cite{jacot2020kernel,nakkiran2020optimal} and sub-Gaussian data \cite{bartlett2020benign,chinot2020benign,caron2020finite}, or Gaussian covariates \cite{richards2020asymptotics}. Here we consider a much broader family of distributions. 
Besides, under some specific situations, the regularization parameter $\lambda$ in (generalized) linear regression can be negative \cite{kobak2020optimal} or optimal tuned \cite{nakkiran2020optimal,wu2020optimal} so as to generalize well.
Recent research \cite{ghorbani2019linearized,liang2020multiple,bordelon2020spectrum} on kernel regression in $n:= \mathcal{O}(d^c)$ shows different trends.

\subsection{Proof framework}
\label{sec:proofframework}
The proof of our results is fairly technical and lengthy, and we briefly sketch some main ideas of Theorem~\ref{promain} here. 
Note that, Theorem~\ref{promainba} is a special case of Theorem~\ref{promain} by taking $r=1/2$ and $\eta = 1$.
The modified error decomposition, the error bounds of variance for radial kernels, and estimates for bias are the main elements of novelty in the proof.

In order to estimate the error $\mathbb{E}_{y\mid \bm x}\| f_{\bm{z},\lambda}  - f_{\rho} \|$ in the $\mathcal{L}_{\rho_X}^{2}$ space, we
need the following intermediate functions.
Define $f_{\lambda} = (L_K + \lambda I)^{-1} L_K f_{\rho}$, where $I$ is the identity operator, then $f_{\lambda}$ is actually the minimizer of the following problem
$
f_\lambda = \argmin_{f \in \mathcal{H}} \Big\{ \| f - f_{\rho} \|^2_{\mathcal{L}_{\rho_X}^{2}} + \lambda \| f \|_{{\mathcal{H}}}^2 \Big\} $.
Besides, by defining \vspace{-0.1cm}
\begin{equation*}
f_{\bm X, \lambda}(\bm x) = k(\bm x, \bm X)^{\!\top} (\bm K+ n \lambda \bm I)^{-1} f_{\rho}(\bm x)\,,
\end{equation*}
we have $f_{\bm X, \lambda} = (L_{K,\bm X} + \lambda I)^{-1} L_{K,\bm X} f_{\rho}$.
Accordingly, the variance-bias decomposition is stated in the following lemma, with proof deferred to Appendix~\ref{sec:prooflemma1}.
\begin{lemma}\label{lemerr}
	Let $f_{\bm{z},\lambda}$ be the minimizer of problem~\eqref{1fzrs}, $\mathbb{E}_{y|\bm x}\| f_{\bm{z},\lambda}  - f_{\rho} \|^2_{\mathcal{L}_{\rho_X}^{2}}$ can be bounded by
	\begin{equation*}
	\begin{split}
	& \mathbb{E}_{y \mid \bm x}\big\| f_{\bm{z},\lambda}  - f_{\rho} \big\|^2_{\mathcal{L}_{\rho_X}^{2}} = {\tt{B}} + {\tt{V}} \\
	& \leq 2\left( \| f_{\bm X, \lambda} - f_{\lambda} \|^2_{\mathcal{L}^2_{\rho_X}} + \| f_{\lambda} - f_{\rho} \|^2_{\mathcal{L}^2_{\rho_X}} \right) +  {\tt{V}} 
	\end{split}
	\end{equation*}
	where the bias $\tt{B}$ is defined as
	\begin{equation}\label{biaslemma1}
	{\tt{B}} := \mathbb{E}_{\bm x} \big\| k(\bm x, \cdot)^{\!\top} (\bm K + n\lambda \bm I)^{-1} f_{\rho}(\bm X) - f_{\rho} \big\|^2_{\mathcal{L}_{\rho_X}^{2}} \,,
	\end{equation}
	where $f_{\rho}(\bm X) = [f_{\rho}(\bm x_1), f_{\rho}(\bm x_1), \cdots, f_{\rho}(\bm x_n)]^{\!\top} \in \mathbb{R}^n$ and the variance ${\tt{V}}$ is defined as 
	\begin{equation}\label{defvariance}
	{\tt{V}} := \mathbb{E}_{\bm x,y} \left\| k(\bm x, \cdot)^{\!\top} (\bm K + n\lambda \bm I)^{-1} \bm \epsilon \right\|^2_{\mathcal{L}_{\rho_X}^{2}}\,,
	\end{equation}
	where $\bm \epsilon:= \bm y - f_{\rho}(\bm X)$ satisfying $\mathbb{E}_{y | \bm x} [\bm \epsilon] = 0$.
\end{lemma}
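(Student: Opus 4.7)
The plan is to establish the two claims of Lemma~\ref{lemerr} in sequence: first the exact bias-variance decomposition $\mathbb{E}_{y\mid\bm x}\|f_{\bm z,\lambda}-f_{\rho}\|^{2}_{\mathcal{L}_{\rho_X}^{2}} = {\tt{B}}+{\tt{V}}$, and then the triangle-inequality bound on ${\tt{B}}$ using the intermediate functions $f_{\bm X,\lambda}$ and $f_{\lambda}$.

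For the decomposition, I would start from the closed form \eqref{krrclo} and split the response vector as $\bm y = f_{\rho}(\bm X) + \bm\epsilon$ with $\mathbb{E}_{y\mid\bm x}[\bm\epsilon] = 0$. By linearity of $(\bm K+n\lambda\bm I)^{-1}$, this gives
\begin{equation*}
f_{\bm z,\lambda}(\bm x) - f_{\rho}(\bm x) = \underbrace{k(\bm x,\bm X)^{\!\top}(\bm K+n\lambda\bm I)^{-1} f_{\rho}(\bm X) - f_{\rho}(\bm x)}_{A(\bm x)} \; + \; \underbrace{k(\bm x,\bm X)^{\!\top}(\bm K+n\lambda\bm I)^{-1}\bm\epsilon}_{B(\bm x)}.
\end{equation*}
Expanding the squared $\mathcal{L}_{\rho_X}^{2}$-norm produces a cross term $2\langle A,B\rangle_{\mathcal{L}_{\rho_X}^{2}}$. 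Since $A$ is deterministic once the training features $\bm X$ are fixed, and $B$ is linear in $\bm\epsilon$, Fubini combined with $\mathbb{E}_{y\mid\bm x}[\bm\epsilon]=0$ makes this cross term vanish. The remaining $\|A\|^{2}_{\mathcal{L}_{\rho_X}^{2}}$ is exactly ${\tt{B}}$ as defined in \eqref{biaslemma1}, and $\mathbb{E}_{y\mid\bm x}\|B\|^{2}_{\mathcal{L}_{\rho_X}^{2}}$ is ${\tt{V}}$ as in \eqref{defvariance}.

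For the bias bound, I would first identify ${\tt{B}}$ with the operator-theoretic quantity $\|f_{\bm X,\lambda} - f_{\rho}\|^{2}_{\mathcal{L}_{\rho_X}^{2}}$ by proving the sampling-operator identity
\begin{equation*}
k(\bm x,\bm X)^{\!\top}(\bm K+n\lambda\bm I)^{-1} f_{\rho}(\bm X) \;=\; \bigl[(L_{K,\bm X}+\lambda I)^{-1} L_{K,\bm X}\, f_{\rho}\bigr](\bm x) \;=\; f_{\bm X,\lambda}(\bm x).
\end{equation*}
This follows by introducing the sampling operator $S_{\bm X}:\mathcal{H}\to\mathbb{R}^{n}$ with $(S_{\bm X} f)_{i} = f(\bm x_{i})/\sqrt{n}$, observing that $S_{\bm X}^{*}S_{\bm X} = L_{K,\bm X}$ and $S_{\bm X}S_{\bm X}^{*} = \bm K/n$, and applying the push-through identity $(S_{\bm X}^{*}S_{\bm X}+\lambda I)^{-1}S_{\bm X}^{*} = S_{\bm X}^{*}(S_{\bm X}S_{\bm X}^{*}+\lambda I_{n})^{-1}$. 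Once this identification is made, I would insert $f_{\lambda}$ as an intermediate function and apply $(a+b)^{2} \leq 2a^{2}+2b^{2}$ pointwise in $\bm x$ followed by integration against $\rho_X$, yielding ${\tt{B}} \leq 2\|f_{\bm X,\lambda}-f_{\lambda}\|^{2}_{\mathcal{L}_{\rho_X}^{2}} + 2\|f_{\lambda}-f_{\rho}\|^{2}_{\mathcal{L}_{\rho_X}^{2}}$.

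The main obstacle is the clean execution of the push-through identity above, since one has to move between the finite-dimensional representation involving $\bm K$ and the infinite-dimensional operator $L_{K,\bm X}$ acting on $\mathcal{H}$. The bookkeeping requires $f_{\rho}\in\mathcal{H}$ (granted by Assumption~\ref{assexist}) so that $L_{K,\bm X}f_{\rho}$ is well-defined as an element of $\mathcal{H}$, and the $1/n$ normalization in $L_{K,\bm X}$ must be tracked consistently with the $n\lambda$ factor in $(\bm K+n\lambda\bm I)$. After this identification, both the decomposition step and the triangle inequality reduce to standard Hilbert-space manipulations, and the lemma follows.
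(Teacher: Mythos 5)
Your proposal is correct and follows essentially the same route as the paper's proof: the same split of $f_{\bm z,\lambda}-f_\rho$ into the deterministic term $A(\bm x)$ and the noise term $B(\bm x)$, the same use of $\mathbb{E}_{y\mid\bm x}[\bm\epsilon]=0$ to kill the cross term, the identification ${\tt B}=\|f_{\bm X,\lambda}-f_\rho\|^2_{\mathcal{L}^2_{\rho_X}}$, and then the elementary bound $\|f_{\bm X,\lambda}-f_\rho\|^2 \le 2\|f_{\bm X,\lambda}-f_\lambda\|^2 + 2\|f_\lambda-f_\rho\|^2$. The only difference is that you spell out the sampling-operator / push-through identity justifying $f_{\bm X,\lambda}=(L_{K,\bm X}+\lambda I)^{-1}L_{K,\bm X}f_\rho$, which the paper simply asserts in Section~\ref{sec:proofframework} and reuses inside the proof without rederivation.
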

It is clear that, the variance term does not depend on the target function $f_{\rho}$, and the bias is independent of the residual error $\bm \epsilon$. 
Proof for the bias $	{\tt{B}}  \lesssim n^{-2\vartheta r} \log^4 \big(\frac{2}{\delta}\big) $ can be found in Appendix~\ref{sec:proofbiass}.
Proof for the variance ${\tt{V}} \lesssim {\tt{V}}_1 + {\tt{V}}_2$ refers to Appendix~\ref{app:provar}.

\section{Discussion on Error Bounds}
\label{sec:discusserr}

In this section, we discuss our Theorem~\ref{promain} for different eigenvalue profiles of $\widetilde{\bm X}$ in the two regimes of $n < d$ and $n > d$.
Since $\bm K$ shares the same eigenvalue decay as $\bm X \bm X^{\!\top}/d$ and $\widetilde{\bm X}$ (see Proposition~\ref{sec:proeig} in Appendix~\ref{sec:proofeig}), we do not distinguish the eigen-decay of these two data matrices in the subsequent discussions.
We first focus on the variance ${\tt{V}}$ that can be unimodal or monotonically decreasing with $n$ under different regularization schemes. Subsequently, we investigate the total risk curve as the sum of bias and variance.
Note that $\widetilde{\bm X}$ has different numbers of non-zero eigenvalues under the two regimes, we denote $r_* := \operatorname{rank}(\widetilde{\bm X}) \leq \min\{ n,d \}$, which, as we shall see, plays a significant role in characterizing the different cases of our bounds.

\subsection{Variance trend for $n < d$}
\label{sec:nleqd}

We consider here three eigenvalue decays of $\widetilde{\bm X}$: \emph{harmonic}, \emph{polynomial}, and \emph{exponential decay} \cite{bach2013sharp,li2019towards}.

\begin{table}[t]
	\centering
	\caption{Three eigenvalue decays of $\widetilde{\bm X}$.  }\label{tableeig}
	\begin{threeparttable}
		\begin{tabular}{c|c|ccccccc}
			\toprule[1.2pt]
			\multirow{2}{*}{eigenvalue decay}    & \multicolumn{2}{c}{$\lambda_{i}(\widetilde{\bm X})$  }                         \\ 
			\cline{2-3}
			& $i \leq r_*$ & $i>r_*$ \\
			\midrule[0.8pt]
			\emph{harmonic decay} &  $n/i$               & \multirow{3}{*}{0} \\  
			\emph{polynomial decay} &  $n i^{-2a}$ with $a>1/2$ & \\
			\emph{exponential decay} &  $ne^{-ai}$ with $a > 0$ &\\
			\bottomrule[1.2pt]      
		\end{tabular}
	\end{threeparttable}
\end{table}

\begin{proposition}\label{pro:decay}
	Under the three eigenvalue decays in Table~\ref{tableeig}, denote $r_* = \operatorname{rank}(\widetilde{\bm X})$, then the quantity function $\mathcal{N}^{b}_{\widetilde{\bm X}}$ with $b := n\lambda + \gamma$ can be bounded by\\
	1) harmonic decay: 
	$\mathcal{N}^{b}_{\widetilde{\bm X}} \leq \frac{n}{b^2} \ln \frac{n+(r_*+1)b}{n+b} = \mathcal{O}(\frac{n}{b^2})$.\\
	2) polynomial decay: 
	$\mathcal{N}^{b}_{\widetilde{\bm X}} \leq \frac{\widetilde{C}}{2ab} \left( \frac{n}{b} \right)^{\frac{1}{2a}}$, where $\widetilde{C}$ is some constant.\\
	3) exponential decay: $\mathcal{N}^{b}_{\widetilde{\bm X}} \!\leq\!  \frac{1}{a} \left( \frac{1}{b+ne^{-a(r_*+1)}} \!-\! \frac{1}{b+ne^{-a}} \right)$.
\end{proposition}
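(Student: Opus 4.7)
The plan is to insert the eigenvalue profiles of Table~\ref{tableeig} into the definition
\begin{equation*}
\mathcal{N}^b_{\widetilde{\bm X}} = \sum_{i=1}^{r_*} \frac{\lambda_i(\widetilde{\bm X})}{\bigl(b + \lambda_i(\widetilde{\bm X})\bigr)^2},
\end{equation*}
(using that the terms with $i>r_*$ vanish by the very definition of $r_* = \operatorname{rank}(\widetilde{\bm X})$), replace the resulting sum by a definite integral via a sum-to-integral comparison, and carry out each one-dimensional integral in closed form. The same basic recipe produces all three stated estimates; only the change of variable differs across decays.

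For the exponential case $\lambda_i = n e^{-ai}$, the substitution $u = n e^{-ax}$, under which $du = -au\,dx$, converts $\int \tfrac{ne^{-ax}}{(b+ne^{-ax})^2}\,dx$ into $-\tfrac{1}{a}\int\tfrac{du}{(b+u)^2} = \tfrac{1}{a(b+u)} + C$, and evaluating the primitive between $x = 1$ and $x = r_*+1$ yields exactly the claimed bound. For the polynomial case $\lambda_i = n i^{-2a}$, the analogous substitution $u = n x^{-2a}$ reduces the integral to $\int \tfrac{u^{-1/(2a)}}{(u+1)^2}\,du$ (up to a constant Jacobian factor), which converges since $2a > 1$ and produces the $(n/b)^{1/(2a)}$ scaling with the prefactor $1/(2ab)$. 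For the harmonic case $\lambda_i = n/i$, the summand $\tfrac{ni}{(bi+n)^2}$ has antiderivative $\tfrac{n}{b^2}\bigl[\ln(bx+n) + \tfrac{n}{bx+n}\bigr]$, so that evaluating between $1$ and $r_*+1$ and then discarding the nonpositive tail correction $\tfrac{n^2}{b^2}\bigl[\tfrac{1}{b(r_*+1)+n} - \tfrac{1}{b+n}\bigr]$ produces precisely $\tfrac{n}{b^2}\ln\tfrac{n+(r_*+1)b}{n+b}$.

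The main subtlety lies in justifying the sum-to-integral step itself. The function $\lambda \mapsto \lambda/(b+\lambda)^2$ is bell-shaped: it increases for $\lambda < b$, peaks with value $1/(4b)$ at $\lambda = b$, and decreases for $\lambda > b$. Consequently the summand as a function of $i$ is not monotone across the full range $1, \ldots, r_*$, and the naive inequality $\sum_i a(i) \leq \int_1^{r_*+1} a(x)\,dx$ can fail across the transition index $i_*$ where $\lambda_{i_*} \approx b$ (explicitly $i_* \approx n/b$ for harmonic, $(n/b)^{1/(2a)}$ for polynomial, and $\log(n/b)/a$ for exponential). The hard part of the proof is therefore to split the sum at $i_*$ and apply the correct direction of the integral comparison on each monotone piece ($a(i) \leq \int_{i-1}^i a$ on the decreasing tail, $a(i) \leq \int_i^{i+1} a$ on the increasing head); the leftover boundary term at $i_*$ has size at most $a(i_*) \leq 1/(4b)$, which is dominated by the main integral in every case and can be absorbed into the stated bounds.
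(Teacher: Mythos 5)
Your proposal follows essentially the same route as the paper's Appendix~F argument: insert each eigenvalue profile into $\mathcal{N}^b_{\widetilde{\bm X}} = \sum_{i\le r_*}\lambda_i/(b+\lambda_i)^2$, compare the sum with an integral, and evaluate by the changes of variable $u = bt/n$, $u = n t^{-2a}$, and $u = n e^{-at}$ respectively. Your explicit treatment of the non-monotonicity of the summand (split at the transition index, account for the $\mathcal{O}(1/b)$ boundary term) is actually \emph{more} careful than the paper's, which likewise splits the sum at the index $q$ where $\lambda_i \approx b$ but then bounds both pieces by $\int_1^{q+1}$ and $\int_{q+1}^{r_*+1}$, applying the left-endpoint Riemann comparison in the wrong direction on the decreasing tail; your note that the extra peak term must be absorbed is exactly the step the paper glosses over.
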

\begin{proof}
	The proof can be found in Appendix~\ref{sec:proofdecay}.
\end{proof}

According to Proposition~\ref{pro:decay}, we summarize our results in Table~\ref{tablevar} and discuss them as follows:

{\bf \emph{Harmonic decay}}:
${\tt{V}}_1 \leqslant \mathcal{O}(\frac{n}{b^2d})$.

For $\lambda=0$, i.e., the ridgeless case, we have $b = \gamma = \mathcal{O}(1)$, and ${\tt{V}}_1 \leq \mathcal{O}(\frac{n}{d})$, which indicates ${\tt{V}}_1$ increases with $n$ in the $n<d$ regime.
For $\lambda \neq 0$, taking $\lambda := \bar{c} n^{-\vartheta}$, we have ${\tt{V}}_1 \leqslant \mathcal{O}(\frac{n}{d ( \bar{c}n^{1-\vartheta} + \gamma)^2} )$. 
To investigate the monotonicity of $g(n):= \frac{n}{d (\bar{c}n^{1-\vartheta} + \gamma)^2}$, 
define
$	n_* := \left( \frac{\gamma}{2 - 2 \vartheta - \bar{c} } \right)^{\frac{1}{1-\vartheta}}$,
we find that, a large $\lambda$ leads to  a small $n_*$. 
According to the relationship between $r_*$, $n_*$, and $d$, we can conclude that (see Table~\ref{tablevar} and the red curve in Figure~\ref{tableten1}):

When $\vartheta \geq \frac{1}{2(2-\bar{c})}$, ${\tt{V}}_1$ will increase with $n$ until $n:= r_*$ and then remain unchanged when $r_*<n<d$. \\
When $\vartheta < \frac{1}{2(2-\bar{c})}$, there are various trends as follows:\\
1) if $d < n_*$, this is the same as the $\vartheta \geq \frac{1}{2(2-\bar{c})}$ case;\\
2) if $r_* <n_*<d$, ${\tt{V}}_1$ will increase with $n$ until $n:= r_*$, and then remain unchanged when $r_* < n < d$;\\
3) if $n_* < r_* < d $, ${\tt{V}}_1$ will increase with $n$ until $n:= n_*$ and then decrease with $n$ until $n:= r_*$, and stay unchanged on $r_*<n<d$;\\
4) If $n_* < c$ such that $n > c$ always holds for some constant $c$, we have ${\tt{V}}_1$ increases with $n$ until $n:= r_*$, and then stays unchanged on $r_* < n < d$.
Remark that, for $\gamma < 2 -2 \vartheta - \bar{c} $, we have $n_* < 1$ and thus $n > n_*$, so that ${\tt{V}}_1$ always decreases with $n$ until $n:= r_*$.

{\bf \emph{Polynomial decay:}} ${\tt{V}}_1 \leqslant \mathcal{O}(\frac{1}{bd}(\frac{n}{b})^{\frac{1}{2a}})$. 

Similar to above, define
$n_* = \left( \frac{\gamma}{2a\bar{c}[1 - (1+\frac{1}{2a}) \vartheta]} \right)^{\frac{1}{1-\vartheta}}$, 
we obtain results similar to the case of \emph{harmonic decay}, but with different thresholds: $\vartheta \geq (1+\frac{1}{2a})^{-1}$ and $\vartheta <  (1+\frac{1}{2a})^{-1}$, see Table~\ref{tablevar} for details.

{\bf \emph{Exponential decay:}} ${\tt{V}}_1 \leq \frac{\widetilde{C} \beta}{ad} \big( \frac{1}{b+ne^{-a(r_*+1)}} \!-\! \frac{1}{b+ne^{-a}} \big)$.

Here we consider the monotonicity of the function $G(n) := \big( \frac{1}{b+ne^{-a(r_*+1)}} \!-\! \frac{1}{b+ne^{-a}} \big)$ with $b:=n\lambda + \gamma$ to study the trend of ${\tt{V}}_1$ regarding to $n$.
Let $n_*$ be the solution of the equation $G'(n)=0$, then we have the similar conclusion with that of \emph{harmonic decay} and \emph{polynomial decay} by the relationship between $n_*$, $r_*$, and $d$, see Table~\ref{tablevar} for details.
More specifically, under some certain conditions, ${\tt{V}}_1$ is able to monotonically decrease with $n$, refer to  Appendix~\ref{sec:appnleqd} for details.

\subsection{Variance trends for $n > d$ and total risk}
Different from the above $n<d$ case, the current $n>d$ regime admits that $\widetilde{\bm X}$ has at most $d$ non-zero eigenvalues.
In this under-parameterized regime, we are particularly interested in the behavior as $n \to \infty$.
In Appendix~\ref{sec:ngeqd}, we prove that ${\tt{V}}_1$ approaches to zero as $n \to \infty$ under the above three eigenvalue decays.

Based on the above discussions in the $n > d$ and $n<d$ regimes, we conclude that, the variance can be unimodal (small regularization) or decreasing (large regularization) as $n$ grows,
which, together with the fact that the bias is monotonically decreasing with $n$, leads to the following three configurations for the total risk:
(i) if the bias dominates at small $n$ and then decays fast (i.e., with a small regularization), we observe a double descent curve as in Figure~\ref{tabledoub}; (ii) if the bias dominates but decays slowly (with a large regularization), the risk curve will be monotonic decreasing as in Figure~\ref{tabledec};
(iii) if the variance dominates, a bell-shaped risk curve as in  Figure~\ref{tableuni} will be observed.

\begin{figure}[t]
	\centering
	\subfigure[\emph{poly kernel with order 3}]{
		\includegraphics[width=0.33\textwidth]{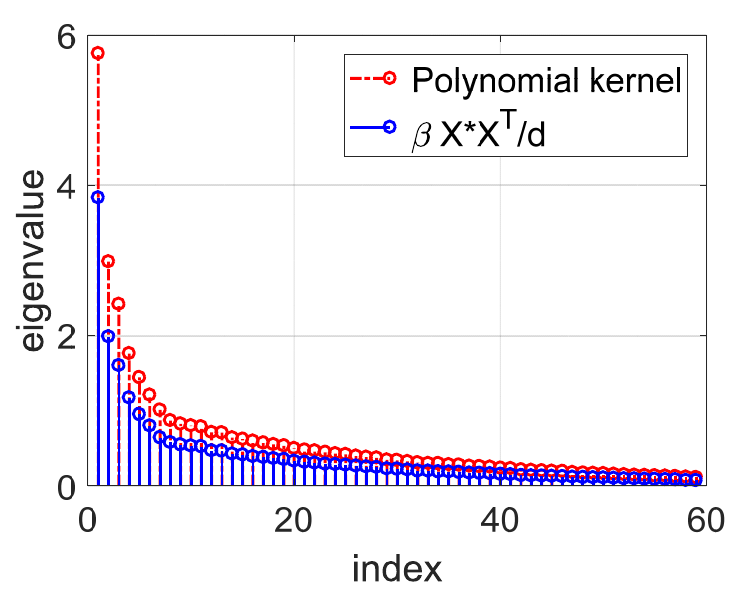}}
	\subfigure[\emph{Gaussian kernel}]{
		\includegraphics[width=0.33\textwidth]{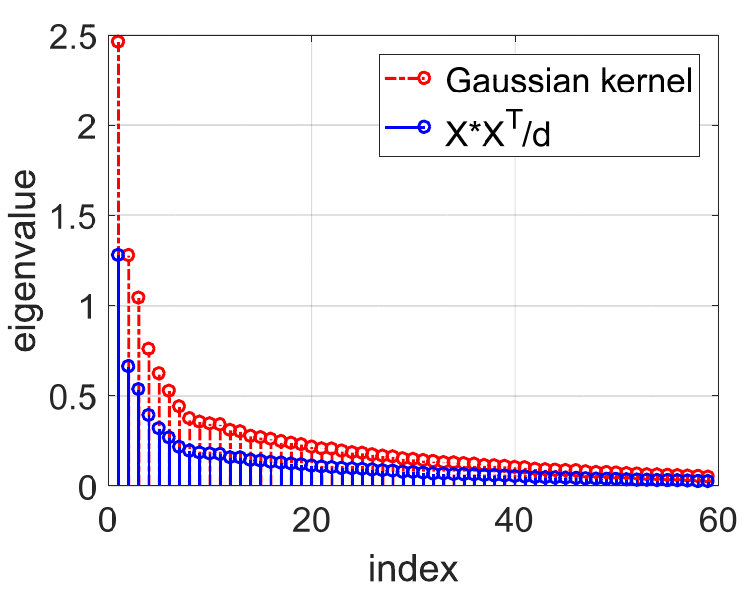}}
	\caption{Top 60 eigenvalues of two kernel matrices and their linearizations on the subset of the \emph{YearPredictionMSD} dataset. Note that the largest eigenvalue $\lambda_1$ is not plotted for better display.
	}
	\label{fig-year-eig}\vspace{-0.25cm}
\end{figure}

\begin{figure*}[!htb]
	\centering
	\subfigure[$\vartheta=2/3$]{
		\includegraphics[width=0.23\textwidth]{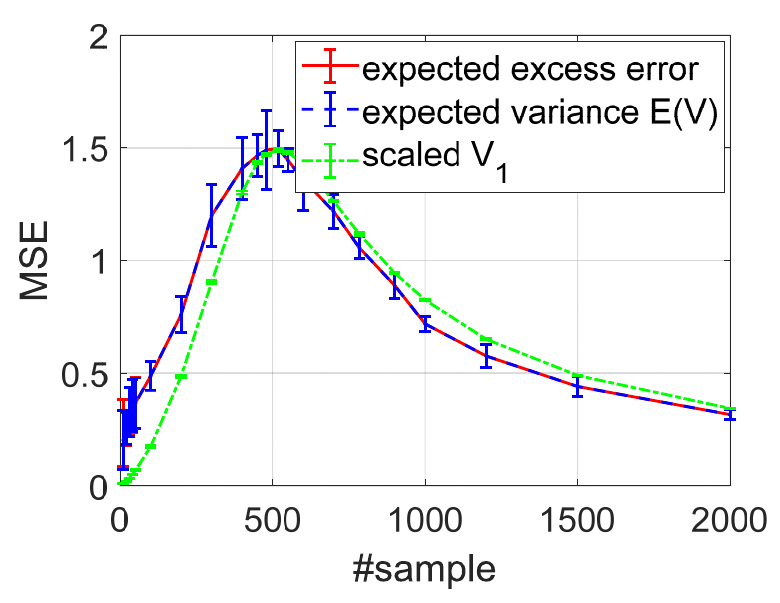}}
	\subfigure[$\vartheta=2/3$]{
		\includegraphics[width=0.23\textwidth]{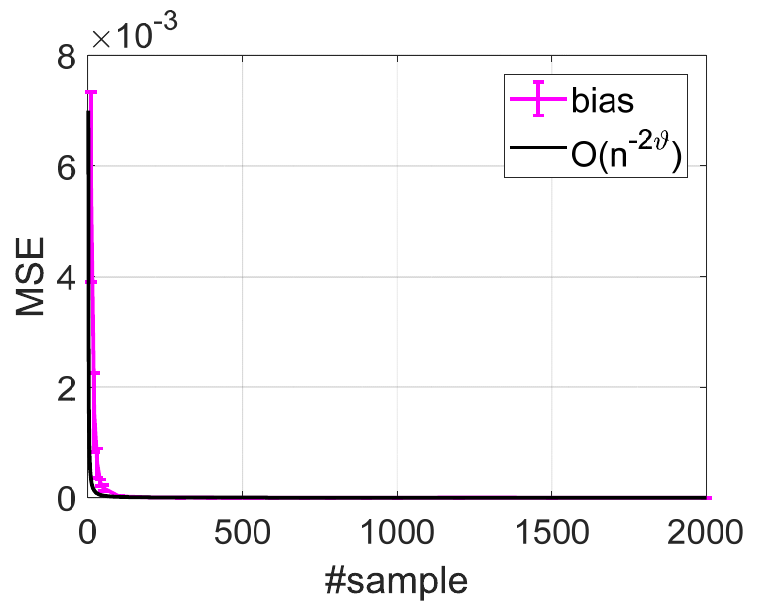}}
	\subfigure[$\vartheta=1/3$]{
		\includegraphics[width=0.23\textwidth]{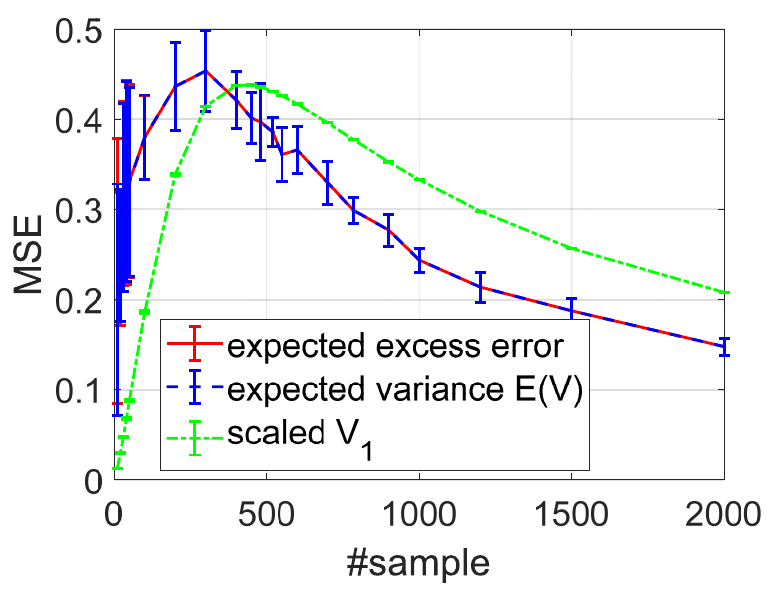}}
	\subfigure[$\vartheta=1/3$]{
		\includegraphics[width=0.23\textwidth]{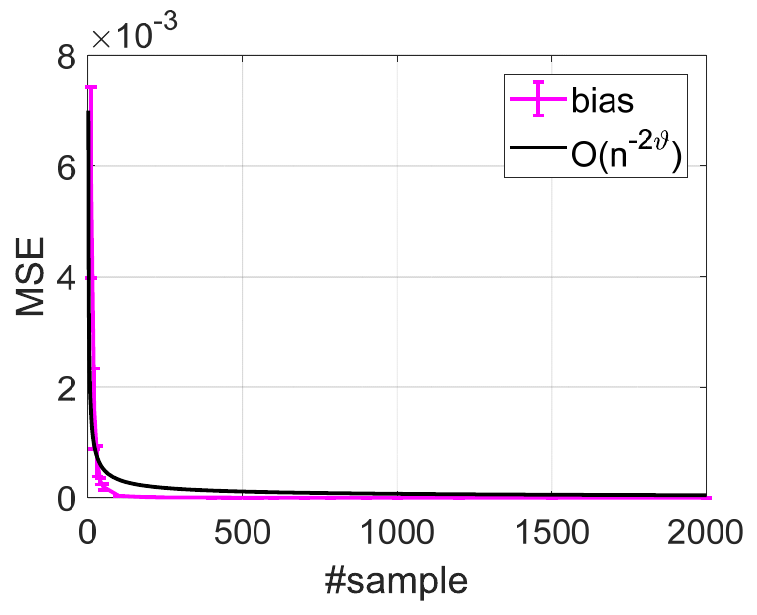}}
	\caption{Harmonic decay of $\widetilde{\bm X}$ with polynomial kernel: MSE of the expected excess risk, the variance in Eq.~\eqref{defvariance}, our derived ${\tt V_1}$, the bias in Eq.~\eqref{biaslemma1}, and our derived convergence rate $\mathcal{O}(n^{-2\vartheta r})$ with $r=1$ for different $\vartheta$.}\label{fig-polykernel} 
\end{figure*}

\begin{figure*}[!htb]
	\centering
	\subfigure[$\vartheta=2/3$]{
		\includegraphics[width=0.23\textwidth]{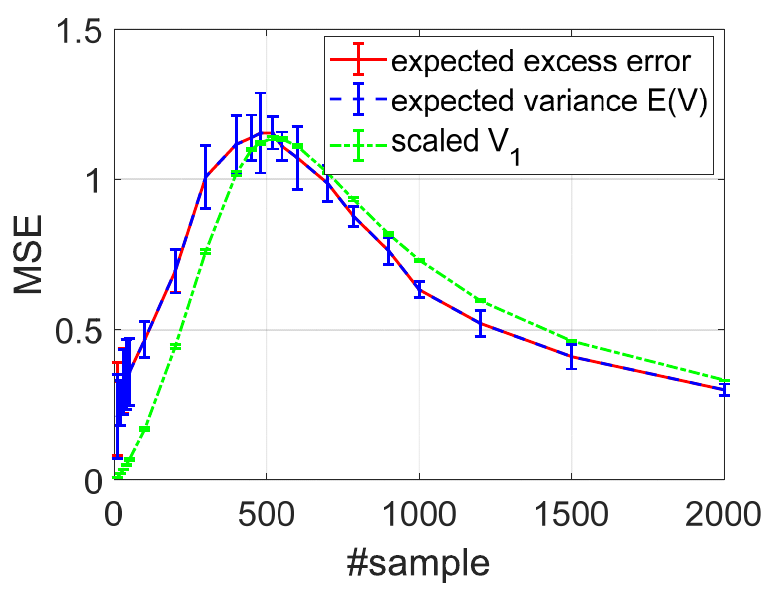}}
	\subfigure[$\vartheta=2/3$]{
		\includegraphics[width=0.23\textwidth]{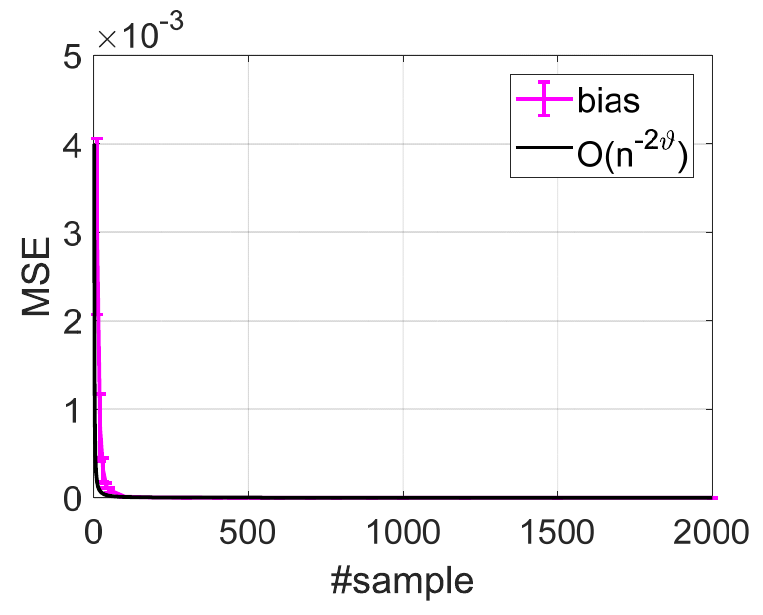}}
	\subfigure[$\vartheta=1/3$]{
		\includegraphics[width=0.23\textwidth]{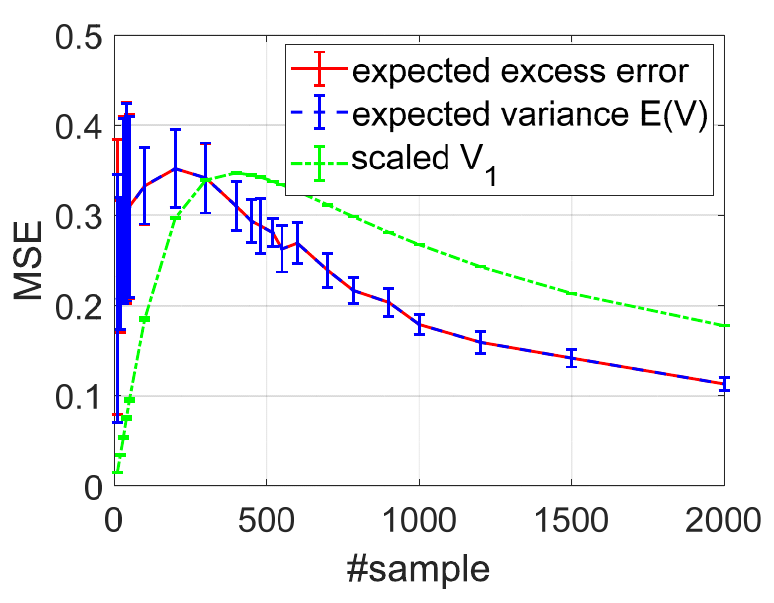}}
	\subfigure[$\vartheta=1/3$]{
		\includegraphics[width=0.23\textwidth]{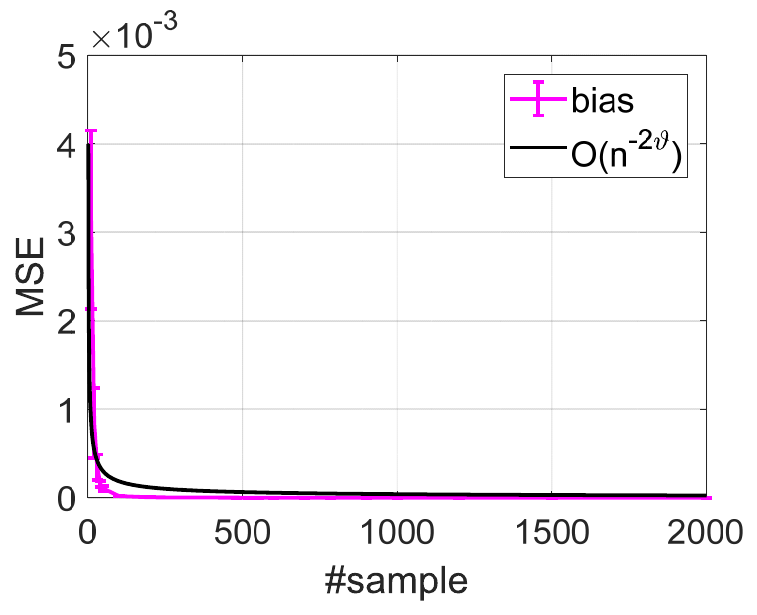}}
	\caption{Harmonic decay of $\widetilde{\bm X}$ in the Gaussian kernel case. The legend is the same as Figure~\ref{fig-polykernel}.}\label{fig-gaussiankernel}
	\vspace{-0.2cm}
\end{figure*}

\vspace{-0.1cm}
\section{Numerical Results}
\label{sec:exp}
\vspace{-0.1cm}
In this section, experiments are conducted to validate our theoretical results\footnote{The source code of our implementation can be found in \url{http://www.lfhsgre.org}.}.
Polynomial kernel of degree $3$ and Gaussian kernel are evaluated on 1) a synthetic dataset that satisfies our technical assumptions and 2) a subset of the \emph{YearPredictionMSD} dataset \cite{chang2008libsvm} with 1,000 data samples and $d=90$, to study our derived error bounds for the bias and variance. More experimental results can be found in Appendix~\ref{sec:appexp}.

{\bf Eigenvalue decay equivalence:} 
Here we study the eigenvalue decay of the original polynomial/Gaussian kernel matrices and their linearization $\bm X \bm X^{\!\top}/d$ on the subset of \emph{YearPredictionMSD} dataset. 
Note that, polynomial kernels $k(\bm x, \bm x') := \left(1 +  \langle \bm x, \bm x' \rangle/d \right)^p$ admit $\beta:= p$ independent of $\bm \Sigma_d$ (see in Table~\ref{tabkernels}), so we use the linearization $\beta \bm X \bm X^{\!\top}/d$ for this kernel.
Results in Figure~\ref{fig-year-eig} demonstrate that, the original nonlinear kernels admit the same eigenvalue decay as $\bm X \bm X^{\!\top}/d$. 
More experimental results on various dataset can be found in Appendix~\ref{sec:appexpeig}.

{\bf Risk curves on synthetic dataset:} 
To quantitatively assess our derived error bounds for the bias and variance, we generate a synthetic dataset under a known $f_{\rho}$, with \emph{harmonic decay} for the data as an illustrating example. More experimental results on different eigenvalue decays refer to Appendix~\ref{sec:appexpsyn}.
To be specific, we assume $y_i = f_{\rho} (\bm x_i) + \varepsilon$ with target function $f_{\rho}(\bm x) = \sin (\| \bm x \|^2_2)$ and Gaussian noise $\varepsilon$ having zero-mean and unit-variance. The feature dimension $d$ is set to 500. The samples are generated from $\bm x_i = \bm \Sigma_d^{1/2} \bm t_i$ (and thus $\bm X^{\!\top} \bm X = \bm T^{\!\top} \bm \Sigma_d \bm T$ with $\bm T =[\bm t_1, \bm t_2, \cdots, \bm t_n]^{\!\top} $) by the following steps:
(i) take $\bm \Sigma_d$ as a diagonal matrix with its diagonal entries following with \emph{harmonic decay}, i.e., $(\bm \Sigma_d)_{ii} \propto n/i$.
(ii) take $\bm T$ as a random orthogonal matrix\footnote{We generate a random Gaussian matrix and use the QR decomposition to obtain an orthogonal matrix \cite{Yu2016Orthogonal}.} such that $\bm T^{\!\top} \bm \Sigma_d \bm T$ also has a harmonic eigen-decay with $\bm T$ having almost i.i.d entries.

Accordingly, the above generation process satisfies Assumption~\ref{assum8m}, and also 
$\bm X \bm X^{\!\top}/d$ admits the same eigenvalue decay as $\bm \Sigma_d$, which can be used to validate our discussion in Section~\ref{sec:discusserr}.
In this setting, the expected excess risk, the bias, and the variance can be directly computed to validate our derived error bounds.
The experimental results are validated across 10 trials.
Specifically, to disentangle the \emph{implicit regularization} effect of KRR on the final result, we apply the linearization of the polynomial/Gaussian kernel by setting $\gamma = 0$ in Eq.~\eqref{eqlinear}.
In this case, the explicit $\lambda := \bar{c} n^{-\vartheta}$ is the only regularization in KRR.
In our model, $\bar{c}$ is empirically set to $0.01$ to avoid a large $\lambda$ when $n$ is small.

Figures~\ref{fig-polykernel} and~\ref{fig-gaussiankernel} show results under the \emph{harmonic decay} setting for the linearization of the polynomial/Gaussian kernel, respectively. 
We observe that:
1) our error bound ${\tt{V}}_1 \asymp \frac{1}{d} \mathcal{N}^{n\lambda}_{\widetilde{\bm X}}$ exhibits the same trend as the true variance;
2) in this case, the variance dominates and we thus obtain a bell-shaped risk curve that first increases and then decreases;
3) as $\vartheta$ decreases, $\lambda$ increases and the peak point of the variance occurs at smaller and smaller $n$;
4) the bias monotonically decreases with $n$, which corresponds to our error bound for the bias at a certain $\mathcal{O}(n^{-2\vartheta r})$ rate in Theorem~\ref{promain} by taking $r=1$ as the used $f_{\rho}$ is smooth enough to achieve a good approximation error;
5) in our high-dimensional regimes, different kernels lead to the same convergence rates of the bias, which verifies our results but is different from those in classical learning theory.

\begin{figure}[t]
	\centering
	\subfigure[\emph{YearPredictionMSD}]{\label{fig-dd-year}
		\includegraphics[width=0.33\textwidth]{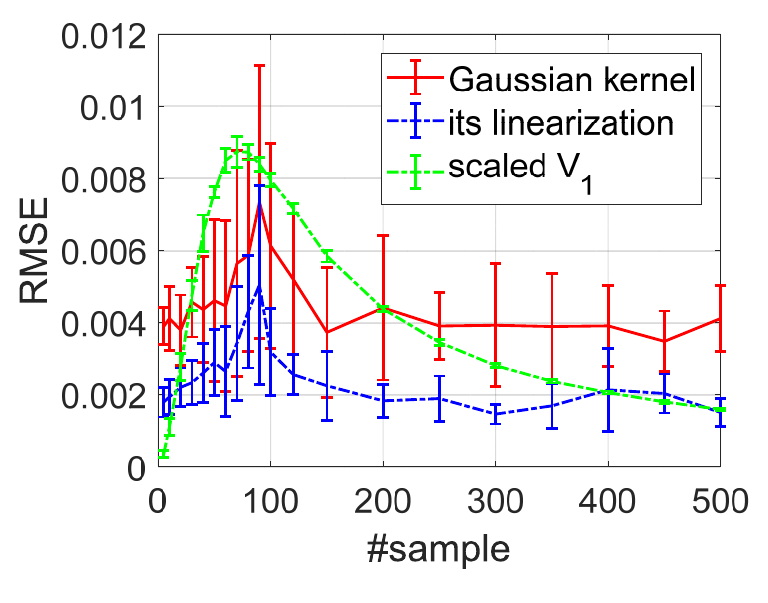}}
	\subfigure[\emph{MNIST} (digits 3 vs. 7)]{\label{fig-dd-mnist}
		\includegraphics[width=0.33\textwidth]{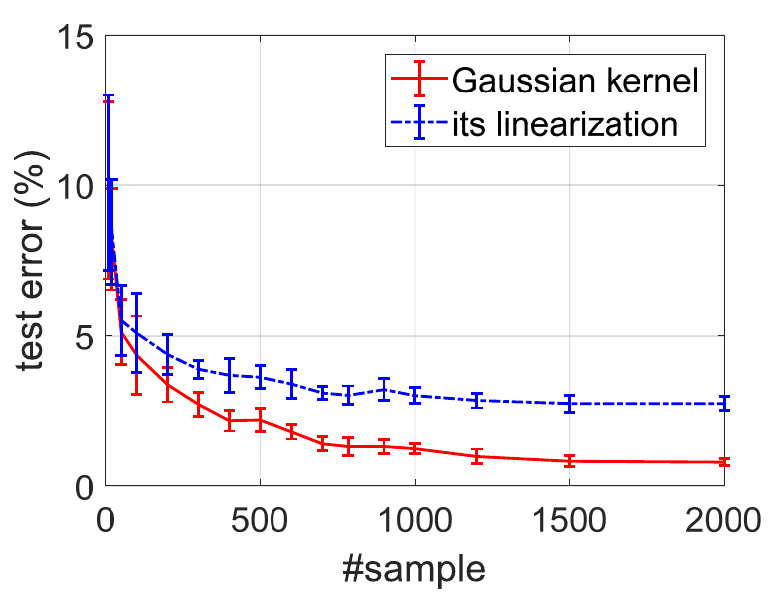}}
	\caption{The test performance of the kernel interpolation estimator and its linearization one.}\label{fig-dd}
\end{figure}

{\bf Risk curves on the real-world datasets:} Figure~\ref{fig-dd-year} shows the relative mean squared error (RMSE) of kernel ridgeless regression and its linearization in Eq.~\eqref{eqlinear} on a subset (1,000 examples) of the \emph{YearPredictionMSD} dataset averaged on 10 trials.
Figure~\ref{fig-dd-mnist} shows the classification accuracy of such two methods on the \emph{MNIST} dataset \cite{L1998Gradient}.
To evaluate the effectiveness of our error bounds, we plot the re-scaled ${\tt{V}}_1 \asymp \frac{1}{d} \mathcal{N}^{\gamma}_{\widetilde{\bm X}}$ with $\lambda = 0$.
It can be found that, kernel interpolation estimator generalizes well due to the \emph{implicit regularization}, i.e., $\gamma \neq 0$, which also exhibits a bell-shaped risk curve as our theoretical results suggest.
However, in Figure~\ref{fig-dd-mnist}, the risk curve monotonically decreases with $n$ on the \emph{MNIST} dataset \cite{L1998Gradient}, and at the same time kernel interpolation estimator and its linearization appear to generalize well.
This observation may due to the \emph{implicit regularization} parameter $\gamma$ in Eq.~\eqref{eqlinear} (of $10^{-3}$ order on this dataset) that plays a fundamental role of ``self-regularization''.  
Accordingly, the proposed analysis provides access to the high-dimensional classification problem that may establish more involved behavior than double descent, despite a clear mismatch between real-world data and the technical Assumption~\ref{assum8m}, thereby conveying a strong practical motivation for the present analysis.

\section{Conclusion}
\label{sec:conclusion}

We derived non-asymptotic expressions for the expected excess risk of kernel ridge regression estimators in the under- and over-determined regimes.
The used linearization technique of nonlinear smooth kernel allows us to discuss the impact of \emph{implicit} and \emph{explicit} regularization in a systematic manner.
Our refined analysis demonstrates that the monotonic bias and unimodal variance are able to exhibit various trends of risk curves. 
Since it is enough to require that the kernel function is differentiable in a neighborhood, our results further extend to the case of Laplace kernels \cite{rakhlin2019consistency}.

\section*{Acknowledgements}
The research leading to these results has received funding from the European Research Council under the European Union's Horizon 2020 research and innovation program / ERC Advanced Grant E-DUALITY (787960). This paper reflects only the authors' views and the Union is not liable for any use that may be made of the contained information.
This work was supported in part by Research Council KU Leuven: Optimization frameworks for deep kernel machines C14/18/068; Flemish Government: FWO projects: GOA4917N (Deep Restricted Kernel Machines: Methods and Foundations), PhD/Postdoc grant. This research received funding from the Flemish Government (AI Research Program). 
This work was supported in part by Ford KU Leuven Research Alliance Project KUL0076 (Stability analysis and performance improvement of deep reinforcement learning algorithms), EU H2020 ICT-48 Network TAILOR (Foundations of Trustworthy AI - Integrating Reasoning, Learning and Optimization), Leuven.AI Institute.


\newcommand{\etalchar}[1]{$^{#1}$}
\providecommand{\bysame}{\leavevmode\hbox to3em{\hrulefill}\thinspace}
\providecommand{\MR}{\relax\ifhmode\unskip\space\fi MR }
\providecommand{\MRhref}[2]{%
	\href{http://www.ams.org/mathscinet-getitem?mr=#1}{#2}
}
\providecommand{\href}[2]{#2}

\clearpage

\appendix
\onecolumn

Appendix is organized as follows.
\begin{itemize}
	\item Section~\ref{sec:examples} provides high dimensional linearizations of some typical smooth kernels as concrete examples of Table~\ref{tabparam}.
	\item In Section~\ref{sec:proofeig}, we demonstrate that, a kernel matrix in high dimensions admits the same eigenvalue decay as $\widetilde{\bm X}$ and ${\bm X \bm X^{\!\top}}/{d}$.
	\item Our proof framework includes the error decomposition in Section~\ref{sec:prooflemma1}, the error bound for the bias in Section~\ref{sec:proofbiass} and for the variance in Section~\ref{app:provar}, respectively.
	\item Section~\ref{sec:proofdecay} discusses the quantity function $\mathcal{N}^{n\lambda + \gamma}_{\widetilde{\bm X}} $ based on three eigenvalue decays: \emph{harmonic decay}, \emph{polynomial decay}, and \emph{exponential decay} in the $n < d$ and $n > d$ regimes.
	\item Some additional experiments are presented in Section~\ref{sec:appexp} to further validate our theoretical results.
\end{itemize} 

\section{Examples of kernels and their linearizations}
\label{sec:examples}
In this section, we present linearization of some typical kernels by Eq.~\eqref{eqlinear}.
Here we assume that $\alpha, \beta, \gamma \geq 0$ to ensure the positive definiteness of the approximated kernel matrix $\widetilde{\bm K}^{\text{lin}}$.
Table~\ref{tabkernels} reports the results of three inner-product kernels including polynomial kernel, linear kernel, exponential kernel; as well as a radial kernel: the common-used Gaussian kernel.
We can find that $\alpha, \gamma \geq 0$. Specifically, $\beta > 0$ avoids a trivial solution.
\begin{table*}[h]
	\centering
	\fontsize{7}{8}\selectfont
	\begin{threeparttable}
		\caption{Linearizations of typical kernels in high dimensions.}
		\label{tabkernels}
		\begin{tabular}{cccccccccccccccccccc}
			\toprule
			kernel &formulation & $\alpha$  & $\beta$ & $\gamma$  \cr
			\midrule
			polynomial kernels  &$k(\bm x, \bm x') := \left(1 + \frac{1}{d} \langle \bm x, \bm x' \rangle \right)^p$ & $1 + p(p-1)\frac{\operatorname{tr}\left(\bm \Sigma_{d}^{2}\right)}{2d^{2}}$ & $p$ & $(1+
			\tau)^p - 1 - p\tau$ \cr
			\midrule
			linear kernel  &$k(\bm x, \bm x') = \frac{1}{d} \langle \bm x, \bm x' \rangle$ & $0$ & $1$ & $0$ \cr
			\midrule
			exponential kernel  &$k(\bm x, \bm x') = \exp(\frac{2}{d} \langle \bm x, \bm x' \rangle)$ & $ 1 + 2\frac{\operatorname{tr}\left(\bm \Sigma_{d}^{2}\right)}{d^{2}} $ & $2$ & $\exp(2\tau) - 1 - 2\tau$ \cr
			\midrule
			Gaussian kernel & $k(\bm x, \bm x') = \exp\left( -\frac{1}{d} \| \bm x - \bm x' \|_2^2 \right)$ & $\exp(-2\tau)\left[1+2\frac{\operatorname{tr}\left(\bm \Sigma_{d}^{2}\right)}{d^{2}} \right]$ & $2\exp(-2\tau)$ & $1-2\tau \exp(-2\tau) - \exp(-2\tau)$ \cr
			\bottomrule
		\end{tabular}
	\end{threeparttable}\vspace{-0.3cm}
\end{table*}

\section{Eigenvalue decay equivalence}
\label{sec:proofeig}

In this section, we demonstrate that, in high dimensions, a kernel matrix induced by inner-product kernels or radial kernels admits the same eigenvalue decay as $\widetilde{\bm X}= \beta {\bm X \bm X^{\!\top}}/{d}+\alpha \bm 1\bm 1^{\!\top}$ and ${\bm X \bm X^{\!\top}}/{d}$.

For notational simplicity, denote the inner-product kernel matrix $\bm K_{\operatorname{inner}}$ and its linearization $\widetilde{{\bm K}^{\operatorname{lin}}_{\operatorname{inner}}}$; the radial kernel matrix $\bm K_{\operatorname{radial}}$ and its linearization $\widetilde{{\bm K}^{\operatorname{lin}}_{\operatorname{radial}}}$.
\begin{proposition}\label{sec:proeig}
	The inner-product kernel matrix $\bm K_{\operatorname{inner}}$ admits the same eigenvalue decay as $\widetilde{\bm X}$ and ${\bm X \bm X^{\!\top}}/{d}$.
\end{proposition}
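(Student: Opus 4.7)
The plan is to chain three standard matrix perturbation arguments. First, I would invoke El Karoui's linearization result already cited in the paper: $\|\bm K_{\operatorname{inner}} - \widetilde{\bm K^{\operatorname{lin}}_{\operatorname{inner}}}\|_2 \to 0$ as $n,d \to \infty$. For inner-product kernels, Table~\ref{tabparam} tells us $\bm T = \bm 0_{n\times n}$, so the linearization collapses to
\[
\widetilde{\bm K^{\operatorname{lin}}_{\operatorname{inner}}} \;=\; \alpha \bm 1\bm 1^{\!\top} + \beta\,\frac{\bm X\bm X^{\!\top}}{d} + \gamma \bm I \;=\; \widetilde{\bm X} + \gamma\bm I.
\]
Applying Weyl's inequality to the $o(1)$ spectral-norm approximation immediately gives $\lambda_i(\bm K_{\operatorname{inner}}) = \lambda_i(\widetilde{\bm X}) + \gamma + o(1)$ for every $i$, so in the high-dimensional limit the $i$-th eigenvalue of $\bm K_{\operatorname{inner}}$ differs from that of $\widetilde{\bm X}$ only by a constant additive shift $\gamma$. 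Since the nonzero eigenvalues of $\widetilde{\bm X}$ scale as $\Theta(n)$ (cf. Table~\ref{tableeig}) while $\gamma$ is $\mathcal{O}(1)$, this shift is negligible relative to the leading eigenvalues and the qualitative decay profile (harmonic, polynomial, or exponential) is unchanged.

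Second, I would pass from $\widetilde{\bm X}$ to $\beta\,\bm X\bm X^{\!\top}/d$ by noting that they differ only by the rank-one perturbation $\alpha \bm 1\bm 1^{\!\top}$. By the Cauchy interlacing theorem for rank-one perturbations of symmetric matrices, for every $i \geq 2$,
\[
\lambda_{i}\!\left(\beta\tfrac{\bm X\bm X^{\!\top}}{d}\right) \;\leq\; \lambda_i(\widetilde{\bm X}) \;\leq\; \lambda_{i-1}\!\left(\beta\tfrac{\bm X\bm X^{\!\top}}{d}\right),
\]
so all but the very top eigenvalue of $\widetilde{\bm X}$ are sandwiched between consecutive eigenvalues of $\beta\,\bm X\bm X^{\!\top}/d$; in particular the decay exponent is preserved. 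The top eigenvalue, which may be inflated by the rank-one term, is irrelevant for the bounds on $\mathcal{N}^b_{\widetilde{\bm X}}$ used elsewhere, since that quantity depends on the full profile (and the inflation is at most of the same order $\Theta(n)$). Rescaling by the positive constant $\beta$ then completes the equivalence between $\bm X\bm X^{\!\top}/d$ and $\widetilde{\bm X}$.

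Combining the two steps, we obtain the chain $\lambda_i(\bm K_{\operatorname{inner}}) \asymp \lambda_i(\widetilde{\bm X}) \asymp \lambda_i(\bm X\bm X^{\!\top}/d)$ for all $i$ in any regime where the decaying part of the spectrum dominates the constant additive $\gamma$, which covers exactly the leading-order behavior invoked in the discussion of Table~\ref{tableeig}. The analogue for radial kernels can be handled identically by further bounding the extra matrix $\bm T$ in Table~\ref{tabparam}, observing that it is itself a low-rank term (built from $\bm A = \bm 1\bm\psi^{\!\top}+\bm\psi\bm 1^{\!\top}$) with vanishing spectral norm in high dimension under Assumption~\ref{assum8m}, so another application of Weyl plus interlacing absorbs it.

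The main obstacle I anticipate is bookkeeping at the level of the tail, not the head, of the spectrum. The additive shift $\gamma$ and the rank-one bump $\alpha\bm 1\bm 1^{\!\top}$ do not affect the ordered decay law of the $\Theta(n)$ eigenvalues, but they do shift the effective rank $r_* = \operatorname{rank}(\widetilde{\bm X})$ reported in Table~\ref{tableeig}; one has to make clear that the claim is about the decay pattern of the leading eigenvalues (which is what the proofs of Proposition~\ref{pro:decay} actually use), not about exact eigenvalue-by-eigenvalue equality. Handling this carefully, and stating the result as an $\asymp$-equivalence rather than an equality, should be the only delicate point; the rest reduces to Weyl's inequality, Cauchy interlacing, and the pre-existing El Karoui approximation.
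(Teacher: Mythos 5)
Your argument matches the paper's own proof: both invoke El Karoui's spectral-norm linearization plus Weyl's inequality to reduce to $\widetilde{\bm X}+\gamma\bm I$, and then handle the rank-one term $\alpha\bm 1\bm 1^{\!\top}$ via interlacing (you call it Cauchy interlacing, the paper calls it Weyl, but for a rank-one symmetric perturbation these give the identical sandwich). Your closing caveat about the additive shift $\gamma$ and the rank-one bump affecting the tail/effective rank is a fair point that the paper elides, but it does not change the conclusion and is not a gap in your proof.
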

\begin{proof}
	According to Theorem~2.1 in \cite{el2010spectrum}, the inner-product kernel matrix $\bm K_{\operatorname{inner}}$ can be well approximated by $\widetilde{{\bm K}^{\operatorname{lin}}_{\operatorname{inner}}}$ with 
	\begin{equation*}
	\widetilde{{\bm K}^{\operatorname{lin}}_{\operatorname{inner}}} := \beta \frac{\bm X \bm X^{\!\top}}{d} + \gamma \bm I + \alpha \bm 1\bm 1^{\!\top} \,,
	\end{equation*}
	in a spectral norm sense, where $\alpha$, $\beta$, $\gamma$ are given in Table~\ref{tabparam}.
	As a result, with high probability, the inner-product kernel matrix $\bm K_{\operatorname{inner}}$ and its linearization $\widetilde{{\bm K}^{\operatorname{lin}}_{\operatorname{inner}}}$ has the same eigenvalue.
	That means, $\bm K_{\operatorname{inner}}$ admits the same eigenvalue decay as $\widetilde{\bm X} := \beta {\bm X \bm X^{\!\top}}/{d} + \alpha \bm 1\bm 1^{\!\top} $ via a constant shift $\gamma$.
	
	Next, we shall demonstrate that $\bm K_{\operatorname{inner}}$ admits the same eigenvalue decay as $\bm X \bm X^{\!\top}/d$.
	Since $\bm 1\bm 1^{\!\top}$ is a rank-one matrix with $\lambda_1 (\bm 1\bm 1^{\!\top}) = n$, with Weyl's inequality and $\lambda_n \le \lambda_{n-1} \le \ldots \leq \lambda_1$, we have
	\begin{equation*}
	\beta \lambda_1 \left(\frac{\bm X \bm X^{\!\top}}{d}\right) + \gamma \leq \lambda_1(\widetilde{{\bm K}^{\operatorname{lin}}_{\operatorname{inner}}}) \leq \beta \lambda_1 \left(\frac{\bm X \bm X^{\!\top}}{d}\right) + \gamma + \alpha n,
	\end{equation*}
	and
	\begin{equation*}
	\beta \lambda_i \left(\frac{\bm X \bm X^{\!\top}}{d}\right) + \gamma \leq \lambda_i(\widetilde{{\bm K}^{\operatorname{lin}}_{\operatorname{inner}}}) \leq \beta \lambda_{i-1} \left(\frac{\bm X \bm X^{\!\top}}{d}\right) + \gamma\,, \quad i = 2,3, \ldots n \,,
	\end{equation*}
	so that the eigenvalue of $\widetilde{{\bm K}^{\operatorname{lin}}_{\operatorname{inner}}}$ \emph{interlaced} with those of $\beta {\bm X \bm X^{\!\top}}/{d} + \gamma \bm I$. We can thus conclude that the eigenvalue decay of $\widetilde{{\bm K}^{\operatorname{lin}}_{\operatorname{inner}}}$ is the same as that of ${\bm X \bm X^{\!\top}}/{d}$ with a constant shift and scaling, which do not effect the trend of eigenvalue decay.
	Accordingly, the inner-product-type kernel matrix ${\bm K}_{\operatorname{inner}}$ and its linearization $\widetilde{{\bm K}^{\operatorname{lin}}_{\operatorname{inner}}}$, $\widetilde{\bm X}$ admit the same eigenvalue decay as ${\bm X \bm X^{\!\top}}/{d}$, which concludes the proof.
\end{proof}
Proposition~\ref{sec:proeig} also provides a justification to study the eigenvalue decay of a radial kernel matrix.
According to Theorem~2.2 in \cite{el2010spectrum}, the radial kernel matrix $\bm K_{\operatorname{radial}}$ can be well approximated by $\widetilde{{\bm K}_{\operatorname{radial}}^{\operatorname{lin}}}$ with 
\begin{equation*}
\widetilde{{\bm K}_{\operatorname{radial}}^{\operatorname{lin}}} := \beta \frac{\bm X \bm X^{\!\top}}{d} + \gamma \bm I + \alpha \bm 1\bm 1^{\!\top} + h^{\prime}(2\tau) \bm A + \frac{1}{2}h^{\prime \prime}(2\tau) \bm A \odot \bm A \,,
\end{equation*}
in a spectral norm sense, where $\alpha$, $\beta$, $\gamma$ are given in Table~\ref{tabparam}.
Recall $\bm A := \bm 1 \bm \psi^{\!\top} + \bm \psi \bm 1^{\!\top}$, where $\bm \psi \in \mathbb{R}^n$ with $\psi_i := \| \bm x_i \|^2_2/d - \tau$, we find that $\bm A$ is a rank 2 matrix with its eigenvalues $\lambda(\bm A) = \bm 1^{\!\top} \bm \psi \pm \sqrt{n} \| \bm \psi \|_2$, and thus we have $\operatorname{rank}( \bm A \odot \bm A ) = 3$.\footnote{This can be proved using rank-one decomposition of $\bm A$.}
Hence, by virtue of Proposition~\ref{sec:proeig}, apart from the top 5 eigenvalues of the radial kernel matrix ${\bm K}_{\operatorname{radial}}$, its remaining eigenvalues follow with
\begin{equation*}\label{eigen2}
\beta \lambda_i \left(\frac{\bm X \bm X^{\!\top}}{d}\right) + \gamma \leq \lambda_i(\widetilde{{\bm K}_{\operatorname{radial}}^{\operatorname{lin}}}) \leq \beta \lambda_{i-1} \left(\frac{\bm X \bm X^{\!\top}}{d}\right) + \gamma\,, \quad i = 6,7, \ldots n \,.
\end{equation*}
Accordingly, ${\bm K}_{\operatorname{radial}}$ admits the same eigenvalue decay as ${\bm X \bm X^{\!\top}}/{d}$.

\section{Proof of Lemma~\ref{lemerr}}
\label{sec:prooflemma1}
\begin{proof}
	By virtue of the closed form of the KRR estimator in Eq.~\eqref{krrclo} and $\bm \epsilon:= \bm y - f_{\rho}(\bm X)$, we have
	\begin{equation*}
	f_{\bm{z},\lambda}(\bm x)  - f_{\rho}(\bm x) = k(\bm x, \bm X)^{\!\top} (\bm K + n\lambda \bm I)^{-1} \bm \epsilon + k(\bm x, \bm X)^{\!\top}(\bm K + n\lambda \bm I)^{-1} f_{\rho}(\bm X) - f_{\rho}(\bm x)\,,
	\end{equation*}  
	where $f_{\rho}(\bm X) = [f_{\rho}(\bm x_1), f_{\rho}(\bm x_1), \cdots, f_{\rho}(\bm x_n)]^{\!\top} \in \mathbb{R}^n$.
	According to $\mathbb{E}_{y \mid \bm x} [\bm \epsilon] = 0$, we then have
	\begin{equation*}
	\begin{split}
	\mathbb{E}_{y \mid \bm x}\big\| f_{\bm{z},\lambda}  - f_{\rho} \big\|^2_{\mathcal{L}_{\rho_X}^{2}} & =  \mathbb{E}_{\bm x} \big\| k(\bm x, \cdot)^{\!\top} (\bm K + n\lambda \bm I)^{-1} f_{\rho}(\bm X) - f_{\rho} \big\|^2_{\mathcal{L}_{\rho_X}^{2}} + \mathbb{E}_{y,\bm x} \big\| k(\bm x, \cdot)^{\!\top} (\bm K + n\lambda \bm I)^{-1} \bm \epsilon \big\|^2_{\mathcal{L}_{\rho_X}^{2}}\,.
	\end{split}
	\end{equation*}
	Based on the definition of ${\tt{B}}$, we decompose ${\tt{B}}$ as
	\begin{equation*}
	\begin{split}
	{\tt{B}} &:= \mathbb{E}_{\bm x} \big\| k(\bm x, \cdot)^{\!\top} (\bm K + n\lambda \bm I)^{-1} f_{\rho}(\bm X) - f_{\rho} \big\|^2_{\mathcal{L}_{\rho_X}^{2}} =  \| f_{\bm X, \lambda} - f_{\rho} \|^2_{\mathcal{L}^2_{\rho_X}} \\
	& \leq 2 \| f_{\bm X, \lambda} - f_{\lambda} \|^2_{\mathcal{L}^2_{\rho_X}} + 2\| f_{\lambda} - f_{\rho} \|^2_{\mathcal{L}^2_{\rho_X}} \,,
	\end{split}
	\end{equation*}
	which concludes our proof.
\end{proof}

\section{Proof for the bias}
\label{sec:proofbiass}

The error bound for the bias is given by the following theorem.
\begin{theorem} (Bias)\label{thm:biass}
	Under Assumption~\ref{sourcecon} (source condition with $0 < r \leq 1$), Assumption~\ref{effectass} (capacity condition with $0\leq \eta \leq 1$), let $0 < \delta <1/2$, taking the regularization parameter $\lambda := \bar{c}n^{-\vartheta}$ with $0 \leq \vartheta \leq \frac{1}{1+\eta}$, there holds with probability at least $1 - 2\delta$, we have
	\begin{equation*}
	{\tt{B}} \leq 2\left( \| f_{\bm X, \lambda} - f_{\lambda} \|^2_{\mathcal{L}^2_{\rho_X}} + \| f_{\lambda} - f_{\rho} \|^2_{\mathcal{L}^2_{\rho_X}} \right)  \lesssim n^{-2\vartheta r} \log^4 \big(\frac{2}{\delta}\big)\,.
	\end{equation*}
	
\end{theorem}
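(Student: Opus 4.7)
The plan is to bound the two pieces of $\mathtt{B}$ separately using spectral calculus on $L_K$ together with a perturbation analysis that compares $L_{K,\bm X}$ to its data-free limit $L_K$. The first piece, $\|f_\lambda - f_\rho\|_{\mathcal{L}^2_{\rho_X}}^2$, is the deterministic \emph{approximation bias}; the second, $\|f_{\bm X,\lambda} - f_\lambda\|_{\mathcal{L}^2_{\rho_X}}^2$, is the \emph{sample bias} and is the part that requires concentration. In both cases I will use the $\mathcal{L}^2_{\rho_X}$/$\mathcal{H}$ duality $\|f\|_{\mathcal{L}^2_{\rho_X}} = \|L_K^{1/2} f\|_{\mathcal{H}}$ to translate norms into functional calculus on $L_K$.

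For the approximation bias, start from the identity
\begin{equation*}
f_\lambda - f_\rho = (L_K+\lambda I)^{-1} L_K f_\rho - f_\rho = -\lambda (L_K+\lambda I)^{-1} f_\rho.
\end{equation*}
Plugging in the source condition $f_\rho = L_K^r g_\rho$ with $\|g_\rho\|_{\mathcal{L}^2_{\rho_X}} \leq R$ gives $\|f_\lambda - f_\rho\|_{\mathcal{L}^2_{\rho_X}} \leq \lambda \sup_{t\geq 0} \frac{t^r}{t+\lambda} \cdot R \leq R\lambda^r$ for $0 < r \leq 1$, contributing $\mathcal{O}(\lambda^{2r}) = \mathcal{O}(n^{-2\vartheta r})$.

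For the sample bias, rewrite
\begin{equation*}
f_{\bm X,\lambda} - f_\lambda = \lambda\bigl[(L_K+\lambda I)^{-1} - (L_{K,\bm X}+\lambda I)^{-1}\bigr] f_\rho = \lambda (L_K+\lambda I)^{-1}(L_{K,\bm X}-L_K)(L_{K,\bm X}+\lambda I)^{-1} f_\rho,
\end{equation*}
and insert factors of $(L_K+\lambda I)^{\pm 1/2}$ to split into pieces of the form $\|(L_K+\lambda I)^{-1/2}(L_{K,\bm X}-L_K)(L_K+\lambda I)^{-1/2}\|$ and a ``change of operator'' factor $\|(L_K+\lambda I)^{1/2}(L_{K,\bm X}+\lambda I)^{-1/2}\|$. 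The former is the standard object controlled by Bernstein's inequality in Hilbert–Schmidt norm (see e.g.\ the Caponnetto–De Vito / Rudi–Rosasco machinery), yielding a high-probability bound in terms of the effective dimension $\mathcal{N}(\lambda) \leq Q^2\lambda^{-\eta}$; the latter is handled by the usual lemma that it is bounded by a constant whenever $\lambda \gtrsim \log(1/\delta)\,\mathcal{N}(\lambda)/n$. Combined with the source condition on $f_\rho$, this gives a bound of the rough form
\begin{equation*}
\|f_{\bm X,\lambda} - f_\lambda\|_{\mathcal{L}^2_{\rho_X}}^2 \lesssim \lambda^{2r} \cdot \frac{\log^4(2/\delta)\,\mathcal{N}(\lambda)}{n\lambda},
\end{equation*}
which is $\mathcal{O}(\lambda^{2r}\log^4(2/\delta))$ precisely when $n\lambda^{1+\eta} \gtrsim 1$, i.e., when $\vartheta \leq 1/(1+\eta)$.

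Adding the two contributions and inserting $\lambda = \bar c n^{-\vartheta}$ gives the claimed $n^{-2\vartheta r}\log^4(2/\delta)$ rate, holding with probability at least $1 - 2\delta$ (one $\delta$ for each concentration step). The main obstacle is the second step: one needs a careful, dimension-free operator concentration for $(L_{K,\bm X}-L_K)$ that tracks the effective dimension $\mathcal{N}(\lambda)$ rather than the ambient (infinite) RKHS dimension, and one must verify that the regime $\vartheta \leq 1/(1+\eta)$ is exactly the range in which the sample bias is dominated by the approximation bias, so that no excess $\mathcal{N}(\lambda)/(n\lambda)$ factor survives in the final statement. The $\log^4$ factor is the natural outcome of chaining two Bernstein-type concentrations (each contributing a $\log^2$ after squaring) for the two operator-norm pieces above.
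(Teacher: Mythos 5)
Your overall decomposition — approximation bias $\|f_\lambda - f_\rho\|^2$ plus sample bias $\|f_{\bm X,\lambda} - f_\lambda\|^2$ — matches the paper's Lemma~\ref{lemerr}, and your treatment of the approximation bias (source condition gives $R\lambda^r$) is exactly the paper's. The issue is the sample bias, where your sketch is missing the key technical ingredient and the intermediate bound you assert is not actually derived.

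Concretely, after the first-order resolvent expansion you must extract a power of $\lambda$ from the source condition, and your proposed factorization into $\|(L_K+\lambda I)^{-1/2}(L_{K,\bm X}-L_K)(L_K+\lambda I)^{-1/2}\|$ times a change-of-operator factor does not produce a $\lambda^r$. The paper proves Lemma~\ref{decombias}, whose crucial step is the operator interpolation inequality $\|AB^t\| \leq \|A\|^{1-t}\|AB\|^t$ applied with $A=(L_K+\lambda I)^{-1/2}(L_K-L_{K,\bm X})$, $B=(L_K+\lambda I)^{-1}$, $t=1-r$, splitting the single factor $(L_K+\lambda I)^{-1/2}(L_K-L_{K,\bm X})(L_K+\lambda I)^{-(1-r)}$ into $\|(L_K+\lambda I)^{-1/2}(L_K-L_{K,\bm X})\|^r$ (controlled via $\mathcal N(\lambda)$ through Lemma~\ref{LKfirst}) and $\|(L_K+\lambda I)^{-1/2}(L_K-L_{K,\bm X})(L_K+\lambda I)^{-1}\|^{1-r}$ (controlled via $1/(n\lambda)$ through Lemma~\ref{lemmaterm2}). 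This interpolation is precisely what allows all of $0<r\le 1$, including the delicate $r<1/2$ regime that the paper explicitly flags as subtle (citing \cite{lin2017distributed,guo2017learning}). Without it, the natural "insert $(L_K+\lambda I)^{\pm1/2}$" factoring only frees up a $\lambda^{1/2}$ (absorbing one $\lambda^{1/2}$ into $(L_{K,\bm X}+\lambda I)^{-1/2}$) and an $O(1)$ factor from $\|(L_K+\lambda I)^{-1/2}L_K^r\|$ (only valid for $r\ge 1/2$); carrying it through gives a bound whose exponent fails to be $\le -2\vartheta r$ at the endpoint $\vartheta=\frac{1}{1+\eta}$ when $\eta<1$.

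Two smaller points. First, your expansion places $(L_{K,\bm X}+\lambda I)^{-1}$ next to $f_\rho$; the paper uses the symmetric form $(L_{K,\bm X}+\lambda I)^{-1}(L_K-L_{K,\bm X})(L_K+\lambda I)^{-1}L_K^r g_\rho$, which is the convenient ordering because $(L_K+\lambda I)^{-r}L_K^r$ has norm at most one. Second, your claimed intermediate form $\lambda^{2r}\mathcal N(\lambda)\log^4/(n\lambda)$ is in fact weaker than what the paper obtains (which has $[\mathcal N(\lambda)]^r$ rather than $\mathcal N(\lambda)$); it does happen to suffice under $\vartheta\le\frac{1}{1+\eta}$, but you state it rather than derive it, and your sketched method does not obviously produce even this form.
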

In our error decomposition, $\| f_{\lambda} - f_{\rho} \|^2_{\mathcal{L}^2_{\rho_X}}$ is independent of data ${\bm X}$ that corresponds to the approximation error in learning theory \cite{cucker2007learning}; while the first term $\| f_{\bm X, \lambda} - f_{\lambda} \|^2_{\mathcal{L}^2_{\rho_X}}$ depends on ${\bm X}$, termed as bias-sample error.
To prove Theorem~\ref{thm:biass}, we need to bound the approximation error and the bias-sample error as follows.

\subsection{Bound approximation error}
In learning theory, the approximation error $\| f_{\lambda} - f_{\rho} \|_{\mathcal{L}^2_{\rho_X}}$ can be estimated by the source condition in Assumption~\ref{sourcecon}.
\begin{lemma} (Lemma 3 in \cite{smale2007learning})
	Under the source condition in Assumption~\ref{sourcecon} with $0 < r \leq 1$, the approximation error can be given by
	\begin{equation*}
	\| f_{\lambda} - f_{\rho} \|_{\mathcal{L}^2_{\rho_X}} = \| (L_K + \lambda I)^{-1} L_K f_{\rho} - f_{\rho} \|_{\mathcal{L}^2_{\rho_X}} \leq \lambda^{r} \| L_{K}^{-r} f_{\rho} \|_{\mathcal{L}^2_{\rho_X}} \leq R \lambda^{r} \,.
	\end{equation*}
\end{lemma}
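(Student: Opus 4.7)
The plan is to reduce the claim to a scalar spectral-calculus inequality via the spectral theorem applied to the compact, self-adjoint, positive operator $L_K$. First I would rewrite the error using the identity
\begin{equation*}
f_\lambda - f_\rho = (L_K + \lambda I)^{-1} L_K f_\rho - f_\rho = -\lambda (L_K + \lambda I)^{-1} f_\rho,
\end{equation*}
which is just algebraic manipulation from the definition $f_\lambda = (L_K + \lambda I)^{-1} L_K f_\rho$. Then, substituting the source condition $f_\rho = L_K^r g_\rho$ from Assumption~\ref{sourcecon}, we obtain
\begin{equation*}
f_\lambda - f_\rho = -\lambda (L_K + \lambda I)^{-1} L_K^r g_\rho,
\end{equation*}
so it suffices to bound the operator norm of $\lambda (L_K + \lambda I)^{-1} L_K^r$ on $\mathcal{L}^2_{\rho_X}$.

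Next, I would invoke the spectral decomposition of $L_K$ with eigenpairs $\{(\mu_i,\psi_i)\}_{i\geq 1}$ already introduced in the background subsection. Since $L_K$ commutes with its own functional calculus, the operator $\lambda (L_K + \lambda I)^{-1} L_K^r$ is diagonalized in the $\{\psi_i\}$ basis with eigenvalues $\lambda \mu_i^r / (\mu_i + \lambda)$. Thus its operator norm equals
\begin{equation*}
\sup_{\mu \in (0, \mu_1]} \frac{\lambda\, \mu^r}{\mu + \lambda}.
\end{equation*}

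The main (and only non-trivial) step is to show that this supremum is at most $\lambda^r$ for $0 < r \leq 1$. I would establish this by Young's inequality in the form $\mu^r \lambda^{1-r} \leq r\mu + (1-r)\lambda \leq \mu + \lambda$, which rearranges to $\lambda\mu^r/(\mu+\lambda) \leq \lambda^r$. This single-line inequality (valid precisely because $r \leq 1$) is the heart of the argument; nothing else is subtle.

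Finally, combining gives
\begin{equation*}
\| f_\lambda - f_\rho \|_{\mathcal{L}^2_{\rho_X}} \leq \lambda^r \| g_\rho \|_{\mathcal{L}^2_{\rho_X}} = \lambda^r \| L_K^{-r} f_\rho \|_{\mathcal{L}^2_{\rho_X}} \leq R\lambda^r,
\end{equation*}
where the last inequality uses $\| g_\rho \|_{\mathcal{L}^2_{\rho_X}} \leq R$ from Assumption~\ref{sourcecon}. I do not anticipate any real obstacle: the spectral theorem is already cited in the paper, and Young's inequality handles the scalar bound cleanly. The only point requiring mild care is the case $r=1$ (where Young degenerates to $\mu \leq \mu + \lambda$, which is still fine) and the invertibility of $L_K^r$ on the range it needs to act on, which is subsumed by the well-definedness of $g_\rho$ via the source condition.
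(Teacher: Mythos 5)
Your proof is correct and is essentially the standard spectral-calculus argument that underlies the cited Lemma~3 of Smale--Zhou; the paper itself gives no proof here (it only cites that reference), so there is nothing in-text to diverge from. The reduction $f_\lambda - f_\rho = -\lambda(L_K+\lambda I)^{-1}L_K^r g_\rho$, diagonalization in the eigenbasis of $L_K$, and the scalar bound $\sup_{\mu>0}\lambda\mu^r/(\mu+\lambda)\le\lambda^r$ via weighted AM--GM (Young) are all sound, and the edge cases $r=1$ and $\mu\to 0^+$ are handled correctly.
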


\subsection{Bound bias-sample error}
To bound the bias-sample error $\| f_{\bm X, \lambda} - f_{\lambda} \|_{\mathcal{L}^2_{\rho_X}}$, we need the following lemma.

\begin{lemma}\label{LKfirst} (Lemma 17 in \cite{lin2017distributed})
	For any $0 < \delta < 1$, it holds with probability at least $1 - \delta$ that
	\begin{equation*}
	\| (L_{K} + \lambda I)^{-1/2} (L_K - L_{K, \bm X}) \| \leq \frac{2 \kappa}{\sqrt{n}} \left\{ \frac{\kappa}{\sqrt{n \lambda}} + \sqrt{\mathcal{N}(\lambda)} \right\} \log \left(\frac{2}{\delta} \right)\,,
	\end{equation*}
	where $\kappa:= \max\{ 1, \sup_{\bm x \in X} \sqrt{k(\bm x, \bm x)} \}$.
\end{lemma}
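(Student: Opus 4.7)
The plan is to recognize the left-hand side as the norm of a sample average of i.i.d., zero-mean Hilbert-Schmidt operators and apply a Bernstein-type concentration inequality in Hilbert space (of the Pinelis / Caponnetto--De Vito type). Concretely, writing $A_i := k(\cdot,\bm x_i)\otimes k(\cdot,\bm x_i)$, I would set
\[
\xi_i \; := \; (L_K+\lambda I)^{-1/2}\bigl(A_i - L_K\bigr),\qquad i=1,\dots,n,
\]
viewed as elements of the separable Hilbert space $\mathrm{HS}(\mathcal H)$ of Hilbert--Schmidt operators on $\mathcal H$. Since $L_{K,\bm X}=\frac1n\sum_i A_i$ and $\mathbb E[A_i]=L_K$, the $\xi_i$ are i.i.d.\ mean-zero random elements of $\mathrm{HS}(\mathcal H)$ and $\frac1n\sum_i\xi_i=(L_K+\lambda I)^{-1/2}(L_{K,\bm X}-L_K)$. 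Because $\|\cdot\|_{\mathrm{op}}\le\|\cdot\|_{\mathrm{HS}}$, it suffices to control this average in HS-norm.

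Next I would compute the two ingredients that Bernstein requires: an almost-sure bound $M$ on $\|\xi_i\|_{\mathrm{HS}}$ and a variance proxy $\sigma^2\ge \mathbb E\|\xi_i\|_{\mathrm{HS}}^2$. For $M$, using $\|(L_K+\lambda I)^{-1/2}\|_{\mathrm{op}}\le \lambda^{-1/2}$, $\|A_i\|_{\mathrm{HS}}=\|k(\cdot,\bm x_i)\|_{\mathcal H}^2\le \kappa^2$, and $\|L_K\|_{\mathrm{HS}}\le \mathbb E\|k(\cdot,\bm x)\|_{\mathcal H}^2\le\kappa^2$, the triangle inequality gives $\|\xi_i\|_{\mathrm{HS}}\le 2\kappa^2/\sqrt{\lambda}$ a.s. For $\sigma^2$, noting $\mathbb E\|\xi_i\|_{\mathrm{HS}}^2\le \mathbb E\|(L_K+\lambda I)^{-1/2}A_i\|_{\mathrm{HS}}^2$, a direct computation with $A_i$ rank-one yields
\[
\mathbb E\bigl\|(L_K+\lambda I)^{-1/2}A_i\bigr\|_{\mathrm{HS}}^2
= \mathbb E\!\left[\|k(\cdot,\bm x)\|_{\mathcal H}^{2}\bigl\langle k(\cdot,\bm x),(L_K+\lambda I)^{-1}k(\cdot,\bm x)\bigr\rangle\right]
\le \kappa^{2}\,\mathrm{tr}\bigl((L_K+\lambda I)^{-1}L_K\bigr)=\kappa^2\mathcal N(\lambda),
\]
where the capacity condition enters through the definition of $\mathcal N(\lambda)$.

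Finally, I would invoke the Bernstein inequality for sums of i.i.d.\ Hilbert-space-valued random variables (Pinelis 1994 / Caponnetto--De Vito 2007, Prop.~2), which gives, with probability at least $1-\delta$,
\[
\left\|\frac1n\sum_{i=1}^{n}\xi_i\right\|_{\mathrm{HS}}\;\le\;2\left(\frac{M}{n}+\sqrt{\frac{\sigma^2}{n}}\right)\log\!\frac{2}{\delta},
\]
and then plug in $M\asymp \kappa^2/\sqrt{\lambda}$ and $\sigma^2\le \kappa^2\mathcal N(\lambda)$, obtaining the claimed bound after grouping the prefactor $2\kappa/\sqrt{n}$. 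The main obstacle is the variance step: naively bounding $\mathbb E\|\xi_i\|_{\mathrm{HS}}^2$ by $M^2$ costs a factor $\kappa^2/\lambda$ and destroys the $\sqrt{\mathcal N(\lambda)/n}$ rate, so one has to carefully exploit the rank-one structure of $A_i$ together with $\|k(\cdot,\bm x)\|_{\mathcal H}^2\le\kappa^2$ to replace one factor of $(L_K+\lambda I)^{-1}$ by the trace defining $\mathcal N(\lambda)$. Apart from tracking constants so that the two additive terms come out exactly as $\kappa^2/\sqrt{n\lambda}$ and $\kappa\sqrt{\mathcal N(\lambda)}$ inside the braces, the remainder of the argument is standard.
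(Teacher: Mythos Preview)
Your proposal is correct and is precisely the standard argument behind this lemma. Note, however, that the paper does not actually prove this statement: it quotes it verbatim as Lemma~17 of \cite{lin2017distributed} and uses it as a black box, so there is no ``paper's own proof'' to compare against beyond the citation. Your Bernstein-in-Hilbert-space approach (i.i.d.\ centered rank-one operators $\xi_i=(L_K+\lambda I)^{-1/2}(A_i-L_K)$ in $\mathrm{HS}(\mathcal H)$, a.s.\ bound $M\asymp\kappa^2/\sqrt{\lambda}$, variance proxy $\sigma^2\le\kappa^2\mathcal N(\lambda)$ via the rank-one identity, then Pinelis/Caponnetto--De\,Vito) is exactly how the cited reference establishes the bound, up to the constant bookkeeping you already flagged.
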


Then the bias-sample error can be decomposed into several parts.
\begin{lemma}\label{decombias}
	Under Assumption~\ref{sourcecon}, we have 
	\begin{equation*}
	\begin{split}
	\| f_{\bm X, \lambda} - f_{\lambda} \| &\leq R \lambda^{1/2} \| (L_{K,\bm X}+ \lambda I)^{-1/2} (L_{K}+ \lambda I)^{1/2} \|
	\| (L_{K}+ \lambda I)^{-1/2}(L_K - L_{K,\bm X}) \|^r \\
	&\qquad \| (L_{K}+ \lambda I)^{-1/2}(L_K - L_{K,\bm X}) (L_{K}+ \lambda I)^{-1} \|^{1-r} \,.
	\end{split}
	\end{equation*}
\end{lemma}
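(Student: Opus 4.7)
The plan is to start from the resolvent identity for the difference of two ridge-type operators, substitute the source condition, and then use a moment-style interpolation in the exponent $r$.

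First, the standard identity $(A+\lambda I)^{-1}A=I-\lambda(A+\lambda I)^{-1}$ applied with $A=L_K$ and $A=L_{K,\bm X}$ gives
\begin{equation*}
f_{\bm X,\lambda}-f_\lambda=\lambda(L_{K,\bm X}+\lambda I)^{-1}(L_{K,\bm X}-L_K)(L_K+\lambda I)^{-1}f_\rho.
\end{equation*}
I then substitute $f_\rho=L_K^r g_\rho$ from Assumption~\ref{sourcecon} with $\|g_\rho\|_{\mathcal{L}_{\rho_X}^2}\le R$. Writing $\lambda=\lambda^{1/2}\cdot\lambda^{1/2}$, absorbing one half-power via the elementary spectral bound $\|\lambda^{1/2}(L_{K,\bm X}+\lambda I)^{-1/2}\|\le 1$, and inserting $(L_K+\lambda I)^{1/2}(L_K+\lambda I)^{-1/2}=I$ at the appropriate place already produces the factors $R\,\lambda^{1/2}$ and $\|(L_{K,\bm X}+\lambda I)^{-1/2}(L_K+\lambda I)^{1/2}\|$ appearing in the claim.

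The remaining task is to bound $\|T(L_K+\lambda I)^{-1}L_K^r\|$, where $T:=(L_K+\lambda I)^{-1/2}(L_{K,\bm X}-L_K)$. The two endpoints of $r$ are easy: at $r=0$ the quantity equals $\|T(L_K+\lambda I)^{-1}\|$; at $r=1$ it is at most $\|T\|\cdot\|L_K(L_K+\lambda I)^{-1}\|\le\|T\|$. For general $r\in[0,1]$, I will establish the moment inequality
\begin{equation*}
\|T(L_K+\lambda I)^{-1}L_K^r\|\le\|T\|^{r}\,\|T(L_K+\lambda I)^{-1}\|^{1-r},
\end{equation*}
which, combined with $\|T\|=\|(L_K+\lambda I)^{-1/2}(L_K-L_{K,\bm X})\|$ and $\|T(L_K+\lambda I)^{-1}\|=\|(L_K+\lambda I)^{-1/2}(L_K-L_{K,\bm X})(L_K+\lambda I)^{-1}\|$, reassembles into the stated decomposition.

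The cleanest derivation of the moment inequality is spectral: since $L_K$ is positive self-adjoint compact with eigendecomposition $\{\mu_i,\psi_i\}$, testing both sides against a unit vector $\sum_i c_i\psi_i$ reduces the operator inequality to the scalar estimate
\begin{equation*}
\sum_i\frac{\mu_i^{2r}\,c_i^2}{(\mu_i+\lambda)^2}\le\Big(\sum_i\frac{c_i^2}{(\mu_i+\lambda)^2}\Big)^{1-r}\Big(\sum_i c_i^2\Big)^{r},
\end{equation*}
which follows from H\"older with conjugate exponents $1/(1-r)$ and $1/r$ together with the trivial bound $\mu_i^2/(\mu_i+\lambda)^2\le 1$. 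The main obstacle is precisely this interpolation step: $(L_K+\lambda I)^{-1}L_K^r$ does not factor naturally into a product whose submultiplicative norm bound yields the desired $r$-split, so one must either pass to the scalar H\"older inequality through the spectral theorem as above, or invoke complex interpolation (Hadamard three-lines) on the analytic family $z\mapsto T(L_K+\lambda I)^{-1}L_K^z$ in the strip $0\le\mathrm{Re}(z)\le 1$, where care is required to control $L_K^z$ off the real axis. Once this interpolation is in hand, multiplying the three extracted factors yields exactly the bound of Lemma~\ref{decombias}.
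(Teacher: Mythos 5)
Your opening steps match the paper's: the resolvent identity $(A+\lambda I)^{-1}A = I - \lambda(A+\lambda I)^{-1}$, the first-order operator difference $A^{-1}-B^{-1}=A^{-1}(B-A)B^{-1}$, the substitution $f_\rho = L_K^r g_\rho$, the split $\lambda = \lambda^{1/2}\cdot\lambda^{1/2}$ with $\|\lambda^{1/2}(L_{K,\bm X}+\lambda I)^{-1/2}\|\le 1$, and the insertion of $(L_K+\lambda I)^{1/2}(L_K+\lambda I)^{-1/2}$ are all exactly what the paper does. The problem is the interpolation step, and your ``cleanest derivation'' of it does not work.

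You claim that testing $\|T(L_K+\lambda I)^{-1}L_K^r\|\le \|T\|^r\|T(L_K+\lambda I)^{-1}\|^{1-r}$, with $T=(L_K+\lambda I)^{-1/2}(L_{K,\bm X}-L_K)$, ``against a unit vector'' reduces it to the displayed scalar inequality. But the scalar inequality you write is the statement
\begin{equation*}
\|(L_K+\lambda I)^{-1}L_K^r u\|^2 \le \|(L_K+\lambda I)^{-1}u\|^{2(1-r)}\,\|u\|^{2r},
\end{equation*}
in which $T$ has disappeared entirely. Feeding that back into the operator bound gives only $\|T(L_K+\lambda I)^{-1}L_K^r u\| \le \|T\|\cdot\|(L_K+\lambda I)^{-1}u\|^{1-r}\le \|T\|\cdot\|(L_K+\lambda I)^{-1}\|^{1-r}$, which is strictly weaker than the claimed $\|T\|^r\|T(L_K+\lambda I)^{-1}\|^{1-r}$: the latter exploits cancellation of $T$ against the smoothing of $(L_K+\lambda I)^{-1}$, and the scalar route cannot see it because $T$ and $(L_K+\lambda I)^{-1}$ do not commute and one cannot ``strip off'' $T$ inside a norm. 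This is a genuine operator-theoretic statement — the moment inequality $\|AB^t\|\le\|A\|^{1-t}\|AB\|^t$ for $A$ bounded and $B$ positive semi-definite — not a consequence of coordinatewise H\"older.

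The way the paper actually closes the gap is to first factor $(L_K+\lambda I)^{-1}L_K^r = (L_K+\lambda I)^{-(1-r)}\cdot(L_K+\lambda I)^{-r}L_K^r$, bound the second (contractive) factor by $1$, and then apply the moment inequality $\|AB^t\|\le\|A\|^{1-t}\|AB\|^t$ (Proposition~9 in \cite{Rudi2017Generalization}) with $A := T$, $B := (L_K+\lambda I)^{-1}$, and $t:=1-r$. If you want a self-contained proof of that inequality, the Hadamard three-lines approach you mention is the right one — but apply it to $z\mapsto A B^z$ with $B = (L_K+\lambda I)^{-1}$, which is boundedly invertible so that $B^{iy}$ is a well-defined unitary on the critical lines; trying to interpolate in $L_K^z$ as you wrote is problematic because $0\in\sigma(L_K)$ makes $L_K^{iy}$ ill-defined.
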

\begin{proof}[Proof of Lemma~\ref{decombias}]
	According to the definition of $f_{\bm X, \lambda}$ and $f_{\lambda}$, we have
	\begin{equation*}
	f_{\bm X, \lambda} - f_{\lambda} = (L_{K,\bm X} + \lambda I)^{-1} L_{K,\bm X} f_{\rho} - (L_K + \lambda I)^{-1} L_K f_{\rho}\,.
	\end{equation*}
	Due to $(A+\lambda I)^{-1}A = I - \lambda (A+ \lambda I)^{-1}$ for any bounded positive operator $A$, we have
	\begin{equation*}
	(L_K + \lambda I)^{-1} L_K f_{\rho} - (L_{K,\bm X} + \lambda I)^{-1} L_{K,\bm X} f_{\rho}  = \lambda \left[ (L_{K,\bm X} + \lambda I)^{-1} - (L_{K} + \lambda I)^{-1} \right] f_{\rho}\,.
	\end{equation*}
	Further, by virtue of the first order decomposition of operator difference:  $A^{-1} - B^{-1} = A^{-1}(B-A)B^{-1}$ for any invertible bounded operator and using the source condition in Assumption~\ref{sourcecon}, the above equation can be further expressed as
	\begin{equation*}
	\begin{split}
	&	(L_K + \lambda I)^{-1} L_K f_{\rho} - (L_{K,\bm X} + \lambda I)^{-1} L_{K,\bm X} f_{\rho} = \lambda (L_{K,\bm X} + \lambda I)^{-1} (L_{K} - L_{K,\bm X}) (L_{K}+ \lambda I)^{-1} L_{K}^r g_{\rho}\\
	& \qquad = \lambda^{1/2} \left( \lambda^{1/2} (L_{K,\bm X}+ \lambda I)^{-1/2} \right) \left( (L_{K,\bm X}+ \lambda I)^{-1/2} (L_{K}+ \lambda I)^{1/2} \right) \\
	& \qquad \left( (L_{K} + \lambda I)^{-1/2} (L_K - L_{K, \bm X}) (L_{K} + \lambda I)^{-(1-r)} \right) \left( (L_{K} + \lambda I)^{-r} L_K^r \right)  g_{\rho}\,.
	\end{split}
	\end{equation*} 
	Besides, using $\| A B ^{t} \| \leq \| A \|^{1-t} \| A B \|^{t}$ with $t \in [0,1]$ for any bounded linear operator $A$ and positive semi-definite operator $B$ in Proposition 9 in \cite{Rudi2017Generalization}, we have
	\begin{equation*}
	\begin{split}
	\| (L_{K} + \lambda I)^{-1/2} (L_K - L_{K, \bm X}) (L_{K} + \lambda I)^{-(1-r)} \| & \leq \| (L_{K} + \lambda I)^{-1/2} (L_K - L_{K, \bm X}) \|^r \\
	&  \| (L_{K} + \lambda I)^{-1/2} (L_K - L_{K, \bm X}) (L_{K} + \lambda I)^{-1} \|^{1-r}\,,
	\end{split}
	\end{equation*}
	where we choose $A:= (L_{K} + \lambda I)^{-1/2} (L_K - L_{K, \bm X})$, $B:= (L_{K} + \lambda I)^{-1}$, and $t:= 1 - r \in [0,1)$.
	Accordingly, we can conclude our proof due to $\| (L_{K, \bm X} + \lambda I)^{-1/2} \| \leq 1/\sqrt{\lambda} $ and $\| (L_{K} + \lambda I)^{-r} L_K^r \| \leq 1 $.
\end{proof}
{\bf Remark:} The proof framework of Lemma~\ref{decombias} is similar to Lemma 4 in \cite{Rudi2017Generalization} but we consider a more general case $0 < r \leq 1$ than $1/2 \leq r \leq 1$ in \cite{Rudi2017Generalization}.
Although $0 < r < 1/2$ appears to be unattainable as claimed in \cite{Rudi2017Generalization}, we follow with \cite{lin2017distributed,guo2017learning} on a quite general case with $r>0$.

To prove Theorem~\ref{thm:biass}, we also need the following two lemmas.

\begin{lemma}\label{lemmaterm2} (Proposition 6 in \cite{Rudi2017Generalization})
	Let $\delta \in (0,1/2]$, it holds with probability at least $1 - 2\delta$ that
	\begin{equation*}
	\begin{split}
	&\| (L_{K}+ \lambda I)^{-1/2}(L_K - L_{K,\bm X}) (L_{K}+ \lambda I)^{-1} \| \\
	& \leq  \| (L_{K}+ \lambda I)^{-1/2}(L_K - L_{K,\bm X}) (L_{K}+ \lambda I)^{-1/2} \|  \| (L_{K}+ \lambda I)^{-1/2} \| 
	\leq \left( \frac{\kappa^2}{3n \lambda} + \sqrt{\frac{\kappa^2}{n \lambda}} \right) \frac{1}{\sqrt{\lambda}} \,.
	\end{split}
	\end{equation*}
\end{lemma}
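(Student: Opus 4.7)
The statement consists of two inequalities: a deterministic operator-norm factorization and a concentration bound for an empirical average of self-adjoint operators. My plan is to dispatch the first in one line and spend the real work on the second via a Bernstein-type inequality in Hilbert space.

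For the first inequality, I would split $(L_K+\lambda I)^{-1} = (L_K+\lambda I)^{-1/2}(L_K+\lambda I)^{-1/2}$ and invoke sub-multiplicativity of the operator norm, $\|AMN\|\le\|AM\|\|N\|$, with $A=(L_K+\lambda I)^{-1/2}$, $M=(L_K-L_{K,\bm X})(L_K+\lambda I)^{-1/2}$ (so that $AM$ is the symmetrized three-term expression appearing in the middle of the inequality), and $N=(L_K+\lambda I)^{-1/2}$. Since $L_K\succeq 0$, the factor $\|(L_K+\lambda I)^{-1/2}\|$ is bounded by $1/\sqrt{\lambda}$, which supplies the trailing $1/\sqrt{\lambda}$ in the claimed bound.

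For the second inequality, I would write $L_{K,\bm X}=\frac{1}{n}\sum_{i=1}^{n}\xi_i$ with i.i.d.\ rank-one positive self-adjoint operators $\xi_i := k(\cdot,\bm x_i)\otimes k(\cdot,\bm x_i)$ satisfying $\mathbb{E}\xi_i = L_K$, and then define $\widetilde\xi_i := (L_K+\lambda I)^{-1/2}\xi_i(L_K+\lambda I)^{-1/2}$, so that the quantity to be bounded is exactly $\|\frac{1}{n}\sum_{i=1}^n\widetilde\xi_i-\mathbb{E}\widetilde\xi_1\|$. I would then apply the Hilbert-space Bernstein inequality (the Pinelis--Sakhanenko / infinite-dimensional operator Bernstein form used in Caponnetto--De Vito and Rudi--Rosasco--Camoriano), which requires two ingredients: (i) an almost-sure operator-norm bound $\|\widetilde\xi_i\| = \|(L_K+\lambda I)^{-1/2}k(\cdot,\bm x_i)\|_{\mathcal H}^{2} \le k(\bm x_i,\bm x_i)/\lambda \le \kappa^2/\lambda$, which follows from the reproducing property together with $\|(L_K+\lambda I)^{-1/2}\|\le 1/\sqrt{\lambda}$; and (ii) a second-moment bound $\mathbb{E}[\widetilde\xi_i^2]\preceq (\kappa^2/\lambda)\,\mathbb{E}[\widetilde\xi_i]$ in the positive operator order, whose operator norm is at most $\kappa^2/\lambda$ because $(L_K+\lambda I)^{-1/2}L_K(L_K+\lambda I)^{-1/2}$ is a contraction. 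Substituting these constants into the Bernstein deviation bound yields the stated $\kappa^2/(3n\lambda)+\sqrt{\kappa^2/(n\lambda)}$ with probability at least $1-2\delta$, matching the characteristic $1/3$ Bernstein factor in front of the linear-in-$1/n$ term and the variance-type square-root term; the logarithmic $\log(2/\delta)$ factor is absorbed into the stated constants, in line with the convention of Rudi et al.\ that the paper is citing.

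The main obstacle is the variance estimate: one must verify $\mathbb{E}[\widetilde\xi_i^2]\preceq L\cdot\mathbb{E}[\widetilde\xi_i]$ in the operator order with the correct constant $L=\kappa^2/\lambda$, and then recognize that the operator norm of $\mathbb{E}[\widetilde\xi_i]=(L_K+\lambda I)^{-1/2}L_K(L_K+\lambda I)^{-1/2}$ is bounded by $1$ (not by $\kappa^2/\lambda$). Missing this distinction inflates the square-root term to $\sqrt{\kappa^4/(n\lambda^2)}$ and loses the intended scaling. Once the two Bernstein hypotheses are correctly set up, the conclusion follows by a direct application of the concentration inequality, and the overall statement is obtained by combining the two steps.
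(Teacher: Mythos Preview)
The paper does not supply its own proof of this lemma: it is quoted verbatim as ``Proposition 6 in \cite{Rudi2017Generalization}'' and used as a black box. Your proposal therefore cannot be compared against any argument in the present paper, but it does reproduce the standard proof route in the cited reference: the deterministic factorization via $(L_K+\lambda I)^{-1}=(L_K+\lambda I)^{-1/2}(L_K+\lambda I)^{-1/2}$ and sub-multiplicativity, followed by a Bernstein-type concentration for the centered self-adjoint operators $\widetilde\xi_i=(L_K+\lambda I)^{-1/2}\big(k(\cdot,\bm x_i)\otimes k(\cdot,\bm x_i)\big)(L_K+\lambda I)^{-1/2}$ with the almost-sure bound $\kappa^2/\lambda$ and second-moment bound $\kappa^2/\lambda$.

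One point deserves a cleaner treatment. You invoke ``the Pinelis--Sakhanenko / infinite-dimensional operator Bernstein form'' somewhat interchangeably, but these are not the same tool: Pinelis-type inequalities control the Hilbert--Schmidt norm of a Hilbert-space-valued sum, whereas an operator-norm bound on a self-adjoint empirical mean requires a non-commutative Bernstein inequality (Tropp, Minsker) and in general carries an additional logarithmic intrinsic-dimension factor. In \cite{Rudi2017Generalization} the relevant bound is obtained through the latter route, and the statement as reproduced in the paper has silently absorbed the $\log(2/\delta)$ (and dimension-type) factors into the displayed constants. Your remark that ``the logarithmic $\log(2/\delta)$ factor is absorbed'' is correct in spirit, but you should be explicit that the displayed inequality is not literally what concentration yields; rather, it holds up to a multiplicative $\log(2/\delta)$ factor that the downstream use in Lemma~\ref{decombias} and Theorem~\ref{thm:biass} carries through anyway. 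With that caveat made explicit, your plan is correct and matches the argument in the cited source.
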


\begin{lemma}\label{lemmaterm1}
	For any $0 < \delta < 1$, with probability at least $1 - \delta$, we have
	\begin{equation*}
	\begin{split}
	&	\| (L_{K,\bm X}+ \lambda I)^{-1/2} (L_{K}+ \lambda I)^{1/2} \|  \leq   1+\frac{2 \kappa}{\sqrt{n \lambda}} \left\{ \frac{\kappa}{\sqrt{n \lambda}} + \sqrt{\mathcal{N}(\lambda)} \right\} \log \left(\frac{2}{\delta}\right) \,.
	\end{split}
	\end{equation*}
\end{lemma}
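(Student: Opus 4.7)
The plan is to reduce the two-sided norm $\|T_{\bm X}^{-1/2}T^{1/2}\|$ to a scalar concentration bound on a single self-adjoint operator, then invoke a Neumann-series expansion. Set $T := L_K + \lambda I$ and $T_{\bm X} := L_{K,\bm X}+\lambda I$; both are positive self-adjoint. Using the operator identity $\|A^{-1/2}B^{1/2}\|^2 = \|B^{1/2}A^{-1}B^{1/2}\|$, I would first write
\begin{equation*}
\|T_{\bm X}^{-1/2}T^{1/2}\|^2 \;=\; \|T^{1/2}T_{\bm X}^{-1}T^{1/2}\| \;=\; \bigl\|\,(T^{-1/2}T_{\bm X}T^{-1/2})^{-1}\,\bigr\|.
\end{equation*}
Because $T_{\bm X}-T = L_{K,\bm X}-L_K$, the interior operator is $I-\Delta$ with $\Delta := T^{-1/2}(L_K-L_{K,\bm X})T^{-1/2}$, which is self-adjoint.

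Assuming $\|\Delta\|<1$, the Neumann series yields $\|(I-\Delta)^{-1}\|\leq (1-\|\Delta\|)^{-1}$. Under the stronger condition $\|\Delta\|\leq 1/2$ (which holds on the event we will condition upon, since our regime forces $n\lambda$ to be large enough), the elementary inequality $(1-x)^{-1}\leq (1+x)^2$ for $0\leq x\leq 1/2$ gives
\begin{equation*}
\|T_{\bm X}^{-1/2}T^{1/2}\| \;\leq\; \sqrt{(1-\|\Delta\|)^{-1}} \;\leq\; 1+\|\Delta\|.
\end{equation*}
Hence the entire statement reduces to producing the high-probability bound
\begin{equation*}
\|\Delta\| \;\leq\; \frac{2\kappa}{\sqrt{n\lambda}}\!\left(\frac{\kappa}{\sqrt{n\lambda}}+\sqrt{\mathcal{N}(\lambda)}\right)\log\!\frac{2}{\delta}.
\end{equation*}

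For that final step I would write $L_{K,\bm X}-L_K = \frac{1}{n}\sum_{i=1}^n \xi_i$ as an average of i.i.d.\ centered self-adjoint Hilbert--Schmidt operators $\xi_i := k(\cdot,\bm x_i)\otimes k(\cdot,\bm x_i) - L_K$, and apply the Pinelis--Bernstein inequality to the conjugated summands $T^{-1/2}\xi_i T^{-1/2}$. The almost-sure bound $\|T^{-1/2}\xi_i T^{-1/2}\|\leq \kappa^2/\lambda$ follows from the reproducing property together with $\|T^{-1/2}\|\leq\lambda^{-1/2}$; the key variance estimate $\mathbb{E}\|T^{-1/2}\xi_i T^{-1/2}\|_{\mathrm{HS}}^2 \leq \kappa^2\mathcal{N}(\lambda)/\lambda$ uses the trace identity $\operatorname{tr}(T^{-1}L_K)=\mathcal{N}(\lambda)$. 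Combining these with the standard operator Bernstein tail yields exactly the required bound, and this is the main obstacle: one must identify the correct second-moment factor and track constants through the Bernstein estimate so that the $\kappa/\sqrt{n\lambda}$ (deterministic) and $\sqrt{\mathcal{N}(\lambda)}$ (variance) contributions come out with the precise prefactor $2\kappa/\sqrt{n\lambda}$. Once $\|\Delta\|$ is controlled and the event $\{\|\Delta\|\leq 1/2\}$ has the stated probability (which is implicit in the admissible range of $\lambda$), combining with the Neumann estimate finishes the proof.
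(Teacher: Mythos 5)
Your reduction to bounding $\|(I-\Delta)^{-1}\|$ with $\Delta = T^{-1/2}(L_K - L_{K,\bm X})T^{-1/2}$ is a clean and legitimate reformulation, and the Bernstein-type concentration sketch for $\|\Delta\|$ is on the right track. However, the Neumann-series step is a genuine gap: you invoke $\|(I-\Delta)^{-1}\|\leq(1-\|\Delta\|)^{-1}$ and then $(1-x)^{-1}\leq(1+x)^2$, both of which require $\|\Delta\|$ to be small (you state $\|\Delta\|\leq 1/2$). This condition is not part of the lemma's hypotheses. The quantity $\mathcal{A} := \frac{2\kappa}{\sqrt{n\lambda}}\bigl(\frac{\kappa}{\sqrt{n\lambda}}+\sqrt{\mathcal{N}(\lambda)}\bigr)\log\frac{2}{\delta}$ grows without bound as $\delta\to 0$ or as $n\lambda\to 0$, so the event $\{\|\Delta\|\leq 1/2\}$ need not have probability $\geq 1-\delta$; appealing to ``our regime forces $n\lambda$ to be large enough'' imports an unstated assumption. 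This matters here in particular: the paper emphasizes, in the remark following the lemma, that its bound holds \emph{without} extra conditions on $\lambda$, precisely in contrast with earlier results (e.g.\ Rudi et al.) that required $\lambda \gtrsim \frac{1}{n}\log\frac{n}{\delta}$ to obtain a $\sqrt{2}$-type bound via a Neumann-style argument.

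The paper avoids the convergence issue entirely. It first applies the Cordes/Heinz inequality $\|A^sB^s\|\leq\|AB\|^s$ with $s=1/2$ to reduce to $\|(L_{K,\bm X}+\lambda I)^{-1}(L_K+\lambda I)\|^{1/2}$, and then uses the \emph{exact} second-order resolvent identity
\begin{equation*}
A^{-1}B = I + B^{-1}(B-A) + B^{-1}(B-A)A^{-1}(B-A)\,,
\end{equation*}
valid for all invertible $A,B$ without any smallness condition. Taking operator norms and invoking the same concentration bound (the paper's Lemma~\ref{LKfirst}) gives $\|A^{-1}B\|\leq 1+\mathcal{A}+\mathcal{A}^2$, hence $\|A^{-1/2}B^{1/2}\|\leq\sqrt{1+\mathcal{A}+\mathcal{A}^2}\leq 1+\mathcal{A}$ unconditionally, since $(1+\mathcal{A})^2 = 1+2\mathcal{A}+\mathcal{A}^2 \geq 1+\mathcal{A}+\mathcal{A}^2$ for every $\mathcal{A}\geq 0$. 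To repair your argument you would either need to add an explicit hypothesis guaranteeing $\mathcal{A}\leq 1/2$ (which would weaken the lemma as the paper intends it), or replace the geometric-series estimate with an exact finite expansion of $(I-\Delta)^{-1}$ in the spirit of the identity above.
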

\begin{proof}[Proof of Lemma~\ref{lemmaterm1}]
	By virtue of a second order decomposition of operator difference in Lemma 16 \cite{lin2017distributed}, we have
	\begin{equation*}
	A^{-1} - B^{-1} = B^{-1}(B-A) A^{-1}(B-A) B^{-1}+B^{-1}(B-A) B^{-1} \,,
	\end{equation*}
	which leads to 
	\begin{equation}\label{bainv}
	A^{-1}B = I + B^{-1}(B-A) + B^{-1}(B-A) A^{-1}(B-A) \,.
	\end{equation}
	Accordingly, denote $A:= L_{K,\bm X}+ \lambda I$ and $B:= L_{K}+ \lambda I$, we can derive that 
	\begin{equation*}
	\begin{split}
	&	\| (L_{K,\bm X}+ \lambda I)^{-1/2} (L_{K}+ \lambda I)^{1/2} \|  \leq \| (L_{K,\bm X}+ \lambda I)^{-1} (L_{K}+ \lambda I) \|^{1/2} \\
	& \leq \sqrt{1 + \lambda^{-1/2}\| (L_{K}+ \lambda I)^{-1/2} (L_K - L_{K, \bm X}) \| + \lambda^{-1} \| (L_{K}+ \lambda I)^{-1/2} (L_K - L_{K, \bm X}) \|^2   } \\
	& \leq \sqrt{ 1 + \mathcal{A} + \mathcal{A}^2 } \leq 1 + \mathcal{A}\,,
	\end{split}
	\end{equation*}
	where $\mathcal{A} := \frac{2 \kappa}{\sqrt{n \lambda}} \left\{ \frac{\kappa}{\sqrt{n \lambda}} + \sqrt{\mathcal{N}(\lambda)} \right\} \log(2/\delta)$ by Lemma~\ref{LKfirst}. The first inequality holds by $\| A^s B^s \| \leq \| A B \|^s$ with $0 \leq s \leq 1$ for positive operators $A$ and $B$ on Hilbert spaces \cite{blanchard2010optimal}. The second inequality can be derived by Eq.~\eqref{bainv}, $\| (L_{K,\bm X}+ \lambda I)^{-1} \| \leq 1/\lambda $ and $\| (L_{K}+ \lambda I)^{-1/2} \| \leq 1/\sqrt{\lambda} $.
\end{proof}
{\bf Remark:} Lemma~7.2 in \cite{rudi2013sample} gives $\| (L_{K,\bm X}+ \lambda I)^{-1/2} (L_{K}+ \lambda I)^{1/2} \| \leq \sqrt{2}$ by assuming $\lambda > \frac{9}{n} \log \frac{n}{\delta}$; whereas our result does not require extra conditions on $\lambda$.

Based on the above lemmas, we are ready to prove Theorem~\ref{thm:biass}.

\begin{proof}[Proof of Theorem~\ref{thm:biass}]
	We first estimate $\| (L_{K,\bm X}+ \lambda I)^{-1/2} (L_{K}+ \lambda I)^{1/2} \|$ in Lemma~\ref{lemmaterm1} by taking $\lambda := \bar{c}n^{-\vartheta}$ and the capacity condition in Assumption~\ref{effectass}: $\mathcal{N}(\lambda) \leq Q^2 \lambda^{-\eta}$ with $\eta \in [0,1]$. Accordingly, we have
	\begin{equation*}
	\begin{split}
	\| (L_{K,\bm X}+ \lambda I)^{-1/2} (L_{K}+ \lambda I)^{1/2} \| & \leq 1+\frac{2 \kappa}{\sqrt{n \lambda}} \left\{ \frac{\kappa}{\sqrt{n \lambda}} + \sqrt{\mathcal{N}(\lambda)} \right\} \log \left(\frac{2}{\delta}\right) \\
	& \leq 1+ \left( \frac{2 \kappa^2}{\bar{c}} n^{-(1-\vartheta)} + {2 \kappa}\bar{c}^{-(\frac{1}{2} + \frac{\eta}{2})} Q n^{-\frac{1 - \vartheta - \vartheta \eta}{2}} \right) \log \left(\frac{2}{\delta}\right) \\
	& \leq \left( 1+ \frac{2\kappa(\kappa + Q)}{\bar{c}}n^{-\frac{1 - \vartheta - \vartheta \eta}{2}} \right) \log \left(\frac{2}{\delta}\right)\,,
	\end{split}
	\end{equation*}
	where we use $\log^r(2/\delta) \leq \log(2/\delta)$ due to $\log(2/\delta) > 1$  in the last inequality. Since $\| (L_{K,\bm X}+ \lambda I)^{-1/2} (L_{K}+ \lambda I)^{1/2} \|$ converges to zero when $n$ is large enough, we require $\vartheta < \frac{1}{1+\eta}$ to ensure a positive convergence rate, which implies $\vartheta \leq 1$.
	Then we bound $\| (L_{K} + \lambda I)^{-1/2} (L_K - L_{K, \bm X}) \|^r$ by Lemma~\ref{LKfirst}.
	By virtue of $(a+b)^r \leq a^r + b^r$ for any $r \in (0,1]$ and $a,b \geq 0$,  we have
	\begin{equation*}
	\begin{split}
	\| (L_{K} + \lambda I)^{-1/2} (L_K - L_{K, \bm X}) \|^r & \leq \left( \frac{2 \kappa}{\sqrt{n}} \right)^r \left\{ \frac{\kappa}{(n \lambda)^{\frac{r}{2}}} + [\mathcal{N}(\lambda)]^{\frac{r}{2}} \right\} \log \left(\frac{2}{\delta} \right)\\
	& \leq (2 \kappa)^r (n\bar{c})^{-\frac{r}{2}} \left[ \kappa n^{-\frac{r(1-\vartheta)}{2}} + \frac{Q}{{\bar{c}}^{\frac{\eta r}{2}}} n^{\frac{\vartheta \eta r}{2}}  \right]  \log \left(\frac{2}{\delta} \right) \\
	& \leq \frac{2\kappa (Q + \kappa)}{\bar{c}} n^{-\frac{(1 - \vartheta \eta) r}{2}} \log \left(\frac{2}{\delta} \right)\,,
	\end{split}
	\end{equation*}
	where the second one admits by the capacity condition in Assumption~\ref{effectass}.
	Similarly, to bound $\| (L_{K}+ \lambda I)^{-1/2}(L_K - L_{K,\bm X}) (L_{K}+ \lambda I)^{-1} \|^{1-r}$ by Lemma~\ref{lemmaterm2}, we can derive that 
	\begin{equation*}
	\begin{split}
	\| (L_{K}+ \lambda I)^{-1/2}(L_K - L_{K,\bm X}) (L_{K}+ \lambda I)^{-1} \|^{1-r} & \leq \lambda^{-\frac{1-r}{2}} \left( {\kappa^2}{(n \lambda)^{-(1-r)}} + {\kappa}{(n \lambda)^{-\frac{1-r}{2}}} \right) \\
	& \leq \frac{\kappa^2}{\bar{c}} \left( n^{(\frac{3}{2}\vartheta - 1)(1 -r)} + n^{(\vartheta -\frac{1}{2})(1-r)} \right) \\
	& \leq \frac{\kappa^2}{\bar{c}} n^{(\vartheta -\frac{1}{2})(1-r)}\,.
	\end{split}
	\end{equation*}
	Combining the above three inequalities, we have  
	\begin{equation*}
	\begin{split}
	\| f_{\bm X, \lambda} - f_{\lambda} \| & \leq \frac{4R\kappa^3(Q+\kappa)^2}{\bar{c}^3} n^{-\frac{(1-\vartheta \eta)r+\vartheta}{2}} n^{(\vartheta -\frac{1}{2})(1-r)} \log^2 \left(\frac{2}{\delta}\right) \\
	& \leq \widetilde{C_{R, Q, \kappa, \bar{c}}} n^{-\frac{1- \vartheta(\eta r + 1 - 2r)}{2}} \log^2 \bigg(\frac{2}{\delta}\bigg)\,,
	\end{split}
	\end{equation*}	
	where $\widetilde{C_{R, Q, \kappa, \bar{c}}} := 4R\kappa^3(Q+\kappa)^2/\bar{c}^3$ is independent of $n$ and $d$.
	
	Finally, the bias can be bounded by
	\begin{equation*}
	\begin{split}
	{\tt{B}} 
	& \leq 2 \| f_{\bm X, \lambda} - f_{\lambda} \|^2_{\mathcal{L}^2_{\rho_X}} + 2\| f_{\lambda} - f_{\rho} \|^2_{\mathcal{L}^2_{\rho_X}} \\
	& \leq 2R^2 n^{-2\vartheta r} + \widetilde{C_{1}} n^{-\left[{1- \vartheta(\eta r + 1 - 2r)}\right]} \log^4 \bigg(\frac{2}{\delta}\bigg)\\
	& \leq \widetilde{C} n^{-2\vartheta r} \log^4 \bigg(\frac{2}{\delta}\bigg) \,,
	\end{split}
	\end{equation*}
	where the third inequality holds by $2 \vartheta r \leq 1- \vartheta(\eta r + 1 - 2r)$ due to $\vartheta \leq \frac{1}{1+ \eta}$, and $\widetilde{C}, \widetilde{C_1}$ are some constants independent of $n$ and $d$.
	Accordingly, we can conclude the proof.
\end{proof}

\section{Proof for the variance}
\label{app:provar}


Formally, we have the following theorem to bound the variance.
\begin{theorem}\label{provar} (Variance)
	Under Assumptions~\ref{assber},~\ref{assum8m}, then for $0 < \delta < 1$ with probability $1 - \delta - d^{-2}$, $\theta = \frac{1}{2} - \frac{2}{8+m}$, and $d$ large enough, for any given $\varepsilon > 0$, we have 
	\begin{equation*}
	{\tt{V}} \lesssim {\tt{V}}_1 + {\tt{V}}_2\,,
	\end{equation*}
	where ${\tt{V}}_1 := \frac{\sigma^2\beta}{d} \mathcal{N}^{n\lambda + \gamma}_{\widetilde{\bm X}}$ and ${\tt{V}}_2$ is the residual term with 
	\begin{equation*}
	{\tt{V}}_2 := \left\{
	\begin{array}{rcl}
	\begin{split}
	&\frac{\sigma^2 \log ^{2+4\varepsilon} d}{(n\lambda+\gamma)^2 d^{4 \theta-1}} ,~\mbox{inner-product kernels} \\
	& \frac{\sigma^2}{(n\lambda+\gamma)^2} d^{-2 \theta} \log ^{1+\varepsilon} d,~\mbox{radial kernels}\,.
	\end{split}
	\end{array} \right.
	\end{equation*}
\end{theorem}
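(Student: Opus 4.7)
The plan is to reduce the variance to a trace of a resolvent against a ``test-time'' kernel Gram matrix, then linearize both factors in high dimension. First, conditionally on $\bm X$, Assumption~\ref{assber} gives $\mathbb{E}[\bm\epsilon\mid\bm X]=0$ and $\operatorname{Cov}(\bm\epsilon\mid\bm X)\preceq\sigma^2\bm I_n$; expanding the squared $\mathcal{L}^2_{\rho_X}$-norm in Eq.~\eqref{defvariance} and using Fubini yields
\begin{equation*}
{\tt V}\;\le\;\sigma^2\,\operatorname{tr}\!\bigl[(\bm K+n\lambda\bm I)^{-2}\,\bm\Sigma_K\bigr],\qquad
\bm\Sigma_K:=\int_X k(\bm x,\bm X)\,k(\bm x,\bm X)^{\!\top}\,d\rho_X(\bm x).
\end{equation*}
This is the object I will control, and the split into ${\tt V}_1$ and ${\tt V}_2$ will come precisely from a leading term plus a linearization-residual.

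Next I would linearize the two factors separately. For the resolvent I invoke the El~Karoui expansion of Eq.~\eqref{eqlinear}, writing $\bm K=\widetilde{\bm K^{\operatorname{lin}}}+\bm E_K$ with an operator-norm bound $\|\bm E_K\|_2\lesssim d^{-\theta}\,\mathrm{polylog}(d)$ (sharper polylog exponent in the radial case, coming from the additional $\bm T$ term and the concentration of the vector $\bm\psi$ in Table~\ref{tabparam}). For $\bm\Sigma_K$, a Taylor expansion of $k(\bm x,\bm x_i)$ around the concentrated value of $\langle\bm x,\bm x_i\rangle/d$ (resp.\ $\|\bm x-\bm x_i\|^2/d$), followed by integration over $\rho_X$ using $\mathbb{E}[\bm x\bm x^{\!\top}]=\bm\Sigma_d$, produces the key identity
\begin{equation*}
\bm\Sigma_K\;=\;h(0)^2\,\bm 1\bm 1^{\!\top}\;+\;\frac{\beta^2}{d}\cdot\frac{\bm X\bm\Sigma_d\bm X^{\!\top}}{d}\;+\;\bm T'\;+\;\bm E_\Sigma,
\end{equation*}
where the extra $1/d$ factor in the dominant piece is the source of the $\beta/d$ prefactor in ${\tt V}_1$ (one factor of $1/\sqrt d$ per kernel evaluation, squared). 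Substituting into the trace and using the resolvent identity
\begin{equation*}
(\bm K+n\lambda\bm I)^{-2}=(\widetilde{\bm K^{\operatorname{lin}}}+n\lambda\bm I)^{-2}+\mathcal O\!\left(\tfrac{\|\bm E_K\|_2}{(n\lambda+\gamma)^{3}}\right),
\end{equation*}
together with Weyl's interlacing to discard the rank-one $\bm 1\bm 1^{\!\top}$ shift and the low-rank $\bm T,\bm T'$ corrections, lets me reduce the leading term to $\tfrac{\sigma^2\beta}{d}\operatorname{tr}\!\bigl[(\widetilde{\bm X}+(n\lambda+\gamma)\bm I)^{-2}\widetilde{\bm X}\bigr]=\tfrac{\sigma^2\beta}{d}\mathcal{N}^{n\lambda+\gamma}_{\widetilde{\bm X}}={\tt V}_1$. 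The remainder, combining $\bm E_K,\bm E_\Sigma$ through $\|(\widetilde{\bm K^{\operatorname{lin}}}+n\lambda\bm I)^{-1}\|_2\le(n\lambda+\gamma)^{-1}$, is of order $\sigma^2(n\lambda+\gamma)^{-2}$ times the kernel-dependent $d^{-\theta}\mathrm{polylog}(d)$ factor, which is exactly ${\tt V}_2$.

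The hard part will be the quantitative operator-norm control $\|\bm K-\widetilde{\bm K^{\operatorname{lin}}}\|_2$ (and the analogous bound on $\|\bm\Sigma_K-\bm\Sigma_K^{\operatorname{lin}}\|_2$) under only the $(8{+}m)$-moment Assumption~\ref{assum8m}, rather than sub-Gaussianity. This forces a truncation of the entries $\bm t_i(j)$ at scale $d^{2/(8+m)}$ together with Hanson--Wright-type concentration applied to the quadratic and bilinear forms $\bm x_i^{\!\top}\bm x_j/d$ and $\|\bm x_i-\bm x_j\|^2/d$, following the refinement of the El~Karoui scheme used in \cite{liang2020just}; the exponent $\theta=\tfrac12-\tfrac{2}{8+m}$ in the statement is exactly what that truncation-plus-union-bound argument produces, and the different polylog exponents between inner-product and radial kernels come from the fact that for radial kernels one must additionally control the concentration of $\bm\psi$, whose entries converge at the slower rate $d^{-\theta}\log^{(1+\varepsilon)/2}d$, which feeds directly into the stated ${\tt V}_2$.
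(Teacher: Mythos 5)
Your overall strategy (reduce ${\tt V}$ to $\sigma^2\operatorname{tr}[(\bm K+n\lambda\bm I)^{-2}\bm\Sigma_K]$, linearize both factors in high dimension, extract ${\tt V}_1$ from the leading term, relegate the linearization errors to ${\tt V}_2$) is the same coarse plan the paper follows, and several of your intermediate computations are on target: $\mathbb{E}_{\bm x}[k^{\operatorname{lin}}(\bm x,\bm X)k^{\operatorname{lin}}(\bm x,\bm X)^{\!\top}]=h(0)^2\bm 1\bm 1^{\!\top}+\tfrac{\beta^2}{d^2}\bm X\bm\Sigma_d\bm X^{\!\top}$ is exactly the identity behind the paper's Eq.~\eqref{klambdn}, the bound $\bm X\bm\Sigma_d\bm X^{\!\top}\preceq\|\bm\Sigma_d\|_2\,\bm X\bm X^{\!\top}$ (which you leave implicit) is what converts it into $\tfrac{\beta}{d}\mathcal N^{n\lambda+\gamma}_{\widetilde{\bm X}}$, and the truncation/moment story for the exponent $\theta$ matches what the paper imports from \cite{liang2020just,el2010spectrum}.

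There is, however, a genuine gap in how you linearize the resolvent, and it is not a cosmetic one. You propose $(\bm K+n\lambda\bm I)^{-2}=(\widetilde{\bm K^{\operatorname{lin}}}+n\lambda\bm I)^{-2}+\Delta$ with $\|\Delta\|_2\lesssim\|\bm E_K\|_2/(n\lambda+\gamma)^{3}$ and then trace $\Delta$ against $\bm\Sigma_K$. But $\operatorname{tr}[\bm\Sigma_K]\sim n\sim d$ (the $h(0)^2\bm 1\bm 1^{\!\top}$ piece alone contributes $nh(0)^2$), so $|\operatorname{tr}[\Delta\,\bm\Sigma_K]|\lesssim n\|\bm E_K\|_2/(n\lambda+\gamma)^3\sim d^{1-\theta}\,\mathrm{polylog}(d)/(n\lambda+\gamma)^3$. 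This does not reduce to the claimed ${\tt V}_2\asymp d^{1-4\theta}\log^{2+4\varepsilon}d/(n\lambda+\gamma)^2$ (inner-product) or $d^{-2\theta}\log^{1+\varepsilon}d/(n\lambda+\gamma)^2$ (radial): you have an extra factor of order $d^{3\theta}/(n\lambda+\gamma)$, which is not $O(1)$ (for large regularization $n\lambda+\gamma=\mathcal O(1)$ while $\theta$ can approach $1/2$). The paper avoids this entirely by \emph{never linearizing the resolvent}. It writes
\begin{equation*}
(\bm K+n\lambda\bm I)^{-1}k(\bm x,\bm X)=\bigl[(\bm K+n\lambda\bm I)^{-1}\widetilde{\bm K^{\operatorname{lin}}}\bigr]\,[\widetilde{\bm K^{\operatorname{lin}}}]^{-1}k^{\operatorname{lin}}(\bm x,\bm X)+(\bm K+n\lambda\bm I)^{-1}\bigl[k(\bm x,\bm X)-k^{\operatorname{lin}}(\bm x,\bm X)\bigr]
\end{equation*}
and exploits that $\|(\bm K+n\lambda\bm I)^{-1}\widetilde{\bm K^{\operatorname{lin}}}\|_2\le 2$ once $d$ is large enough, because $\widetilde{\bm K^{\operatorname{lin}}}$ is defined to already include the $n\lambda\bm I$ shift. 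The leading piece thus carries \emph{no} extra $(n\lambda+\gamma)^{-1}$, and the residual piece is cleanly bounded by $\|(\bm K+n\lambda\bm I)^{-1}\|_2^2\,\mathbb E_{\bm x}\|k(\bm x,\bm X)-k^{\operatorname{lin}}(\bm x,\bm X)\|_2^2\le\tfrac{4}{(n\lambda+\gamma)^2}\cdot d^{-2\theta'}\mathrm{polylog}(d)$, where the $n\sim d$ scaling is absorbed into the $d$-exponent (Lemma~\ref{lemma1rbf} for the radial case). If you want to keep your trace-of-$\bm\Sigma_K$ formulation you would need the same algebraic trick (insert $\widetilde{\bm K^{\operatorname{lin}}}[\widetilde{\bm K^{\operatorname{lin}}}]^{-1}$ instead of expanding the resolvent), otherwise the cross term does not close.

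A second, smaller gap: for radial kernels, the linearized test vector $k^{\operatorname{lin}}(\bm x,\bm X)$ has an extra $-\tfrac{\beta}{2}\bm A(\bm x,\bm X)$ piece, so $\mathbb E_{\bm x}[k^{\operatorname{lin}}k^{\operatorname{lin}\top}]$ contains the additional matrices $\mathbb E_{\bm x}[\bm X\bm x\,\bm A(\bm X,\bm x)]$ and $\mathbb E_{\bm x}[\bm A(\bm x,\bm X)\bm A(\bm X,\bm x)]$; one must show these are low-rank with eigenvalues $\mathcal O(\sqrt{n/d})$ and $\mathcal O(n)$ respectively (Propositions~\ref{propexa} and~\ref{propeaa}) so that they contribute only $\mathcal O(1/n)$ after contracting with $[\widetilde{\bm K^{\operatorname{lin}}}]^{-2}$. ``Weyl interlacing to discard the low-rank corrections'' is the right instinct, but the quantitative eigenvalue sizes matter and you do not supply them. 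Finally, your characterization of the radial $\bm\psi$-concentration as ``slower'' is backwards — the radial ${\tt V}_2$ exponent $d^{-2\theta}$ is \emph{smaller} than the inner-product $d^{-(4\theta-1)}$ for $\theta<1/2$ — though the mechanism you name (per-entry control of $\bm\psi$ at rate $d^{-\theta}\log^{(1+\varepsilon)/2}d$, squared) is the right one.
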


For inner-product kernels, our proof framework follows \cite{liang2020just}, and is briefly discussed in Section~\ref{subsec:proof-inner-prod-V}.
Nevertheless, error bound on radial kernels has not been investigated in \cite{liang2020just} and is more subtle to handle (than that of inner-product kernels) due to the additionally introduced $\bm A$ and $\bm A \odot \bm A$ in Table~\ref{tabparam}. 
Accordingly, we mainly focus on proofs for radial kernels.

\subsection{Inner-product kernel matrices}\label{subsec:proof-inner-prod-V}
In this subsection, we consider the inner-product kernel case with $
k(\bm x, \bm x') = h\left( \langle \bm x, \bm x' \rangle / d \right)$.
We briefly introduce our results that can be derived from proofs of Theorem 2 in \cite{liang2020just} for completeness.

To prove Theorem~\ref{provar}, define 
\begin{equation}\label{klinmatrix}
\widetilde{\bm K^{\operatorname{lin}}}(\bm X, \bm X) :=(n \lambda + \gamma) \bm I+\alpha \bm 1\bm 1^{\!\top}+\beta \frac{\bm X \bm X^{\!\top}}{d} \in \mathbb{R}^{n \times n} \,,
\ k^{\operatorname{lin}}(\bm x, \bm X) := h(0) \bm 1 + \beta \frac{\bm X \bm x^{\!\top}}{d} \in \mathbb{R}^{n \times 1}\,,
\end{equation}
and $k^{\operatorname{lin}}(\bm X, \bm x)$ is the transpose of $k^{\operatorname{lin}}(\bm x, \bm X)$.
Note that $\gamma$ in $\widetilde{\bm K^{\operatorname{lin}}}$ corresponds to the \emph{implicit} regularization and $n \lambda$ corresponds to the \emph{explicit} regularization.
Now we prove Theorem~\ref{provar} for inner-product kernels.
\begin{proof}[Proof of Theorem~\ref{provar} for inner-product kernels]
	According to the definition of {\tt{V}}, we have
	\begin{equation}\label{varinitial}
	\begin{split}
	{\tt{V}} &= \mathbb{E}_{\bm x,y} \mathrm{tr}\left[ k(\bm x, \bm X)^{\!\top} (\bm K + n\lambda \bm I)^{-1} \bm \epsilon \bm \epsilon^{\!\top} (\bm K + n\lambda \bm I)^{-1} k(\bm x, \bm X) \right] 
	= \mathbb{E}_{\bm x}  \| (\bm K + n\lambda \bm I)^{-1} k(\bm x, \bm X) \|_2^2~ \mathbb{E}_{y | \bm x} \| \bm \epsilon \|_2^2  \\
	& \leq \sigma^2 \mathbb{E}_{\bm x} \| (\bm K + n\lambda \bm I)^{-1} k(\bm x, \bm X) \|_2^2 \\
	& \leq  \sigma^2 \| (\bm K + n\lambda \bm I)^{-1} \widetilde{\bm K^{\operatorname{lin}}}\|_2^2~ \mathbb{E}_{\bm x} \| [\widetilde{\bm K^{\operatorname{lin}}}]^{-1} k^{\operatorname{lin}}(\bm x, \bm X)   \|_2^2  + \sigma^2 \| (\bm K + n\lambda \bm I)^{-1} \|_2^2~ \mathbb{E}_{\bm x} \|  k(\bm x, \bm X) -  k^{\operatorname{lin}}(\bm x, \bm X) \|_2^2\,,
	\end{split}
	\end{equation}
	where the first inequality comes from Assumption~\ref{assber}.
	To bound the terms in Eq.~\eqref{varinitial}, we need
	\begin{equation}
	\begin{split}\label{klambdn}
	&\mathbb{E}_{\bm x} \| [\widetilde{\bm K^{\operatorname{lin}}}]^{-1} k^{\operatorname{lin}}(\bm x, \bm X)   \|_2^2
	=\mathbb{E}_{\bm x} \operatorname{tr}\left[ \left[\widetilde{\bm K^{\operatorname{lin}}} \right]^{-1} \Big(\beta \frac{\bm X \bm{x}}{d} + h(0) \bm 1 \Big) \Big( \beta \frac{\bm{x}^{\!\top} \bm X^{\!\top}}{d} + h(0) \bm 1^{\!\top} \Big) \left[\widetilde{\bm K^{\operatorname{lin}}}\right]^{-1}\right] \\
	&\leq \frac{1}{d}\left\|\bm \Sigma_{d}\right\|_2 \operatorname{tr}\left(\left[\widetilde{\bm K^{\operatorname{lin}}}\right]^{-1} \beta^{2} \frac{\bm X \bm X^{\!\top}}{d}\left[\widetilde{\bm K^{\operatorname{lin}}}\right]^{-1}\right) +  \frac{1}{d} \operatorname{tr}\left(\left[\widetilde{\bm K^{\operatorname{lin}}}\right]^{-1} h(0)^2 \bm 1 \bm 1^{\!\top} \left[\widetilde{\bm K^{\operatorname{lin}}}\right]^{-1}\right) \\
	&\leq \frac{\beta}{d}\left\|\bm \Sigma_{d}\right\|_2 \sum_{j=1}^n \frac{\lambda_{j}\left(\widetilde{\bm X}\right)}{\left[n\lambda + \gamma+\lambda_{j}\left(\widetilde{\bm X}\right)\right]^{2}} + \frac{1}{d} \frac{h(0)^2 n}{\left[n\lambda + \gamma+\lambda_{1}\left(\widetilde{\bm X} \right)\right]^{2}}\\
	& \asymp \frac{\beta}{d}   \mathcal{N}_{\widetilde{\bm X}}^{n \lambda + \gamma} + \mathcal{O}\left( \frac{1}{nd} \right) \,.
	\end{split}
	\end{equation}
	To bound the remaining terms in Eq.~\eqref{varinitial}, we also need the following results that can be obtained from \cite{liang2020just}:\\
	(i) By Proposition A.2 in \cite{liang2020just}, with probability at least $1-\delta - d^{-2}$, for $\theta=\frac{1}{2} - \frac{2}{8+m}$ and any given $\varepsilon > 0$, we have\\
	$\left\|\bm K + n\lambda \bm I -\widetilde{\bm K^{\operatorname{lin}}}\right\|_2 \leq d^{-\theta}\left(\delta^{-1 / 2}+\log ^{0.5+\varepsilon} d\right)$ and
	$\mathbb{E}_{\bm x} \left\|k(\bm x, \bm X)-k^{\operatorname{lin}}(\bm x, \bm X)\right\|_2^2 \leq \widetilde{C_1} d^{-(4 \theta-1)} \log ^{2+4 \varepsilon} d$.\\
	(ii) $\left\| (\bm K+ n\lambda \bm I)^{-1}\right\|_2 \leq \frac{2}{n\lambda + \gamma}$ and $\left\|(\bm K + n\lambda \bm I)^{-1} \widetilde{\bm K^{\operatorname{lin}}}\right\|_2 \leq 2$ provided $d$ is large enough such that $d^{-\theta}\left(\delta^{-1 / 2}+\log ^{0.5+\varepsilon} d\right) \leq \gamma/2$.
	
	Combining the above results, with probability at least $1-\delta - d^{-2}$, for any given $\varepsilon > 0$, The error bound for the variance in Eq.~\eqref{varinitial} can be further given by
	\begin{equation*}
	\begin{split}
	{\tt{V}}
	& \leq  \sigma^2 \mathbb{E}_{\bm x} \| (\bm K + n\lambda \bm I)^{-1} k(\bm x, \bm X) \|_2^2 \\
	& \leq 2\sigma^2 \left\|(\bm K + n\lambda \bm I)^{-1}\widetilde{\bm K^{\operatorname{lin}}}\right\|_2^2~ \mathbb{E}_{\bm x} \| [\widetilde{\bm K^{\operatorname{lin}}}]^{-1} k^{\operatorname{lin}}(\bm X, \bm x)   \|_2^2 + 2\sigma^2 \| \bm K^{-1} \|^2_2~ \mathbb{E}_{\bm x}\! \left\|k(\bm x, \bm X)-k^{\operatorname{lin}}(\bm x, \bm X)\right\|_2^2 \\
	& \asymp \frac{8\sigma^2 \beta}{d} \left\|\bm \Sigma_{d}\right\|_2 \sum_{j=1}^n \frac{\lambda_{j}(\widetilde{\bm X})}{\left[n\lambda + \gamma+\lambda_{j}(\widetilde{\bm X})\right]^{2}} + \frac{8\sigma^2}{(n\lambda+\gamma)^2} \widetilde{C_1} d^{-(4 \theta-1)} \log ^{2+4 \varepsilon} d \\
	& \asymp  \frac{\sigma^2 \beta}{d} \mathcal{N}_{\widetilde{\bm X}}^{n \lambda + \gamma} + \frac{\sigma^2}{(n\lambda+\gamma)^2} d^{-(4 \theta-1)} \log ^{2+4 \varepsilon} d\,,
	\end{split}
	\end{equation*}
	which concludes the proof.
\end{proof}

\subsection{Radial kernel matrices}
In this subsection, we consider the radial kernel case with $
k(\bm x, \bm x') = h\left( \frac{1}{d} \| \bm x - \bm x'\|_2^2 \right)$.
Since the linearization of radial kernel matrices incurs in two additionally terms $\bm A$ and $\bm A \odot \bm A$,
estimation for radial kernels is more technical than that of inner-product kernels.
Accordingly, to prove Theorem~\ref{provar} for radial kernels, we need to introduce the following notations and auxiliary results.  

\subsubsection{Auxiliary results}
Recall $\tau := \operatorname{tr}(\bm \Sigma_d)/d$, define
\begin{equation}\label{klinmatrixrbf}
\begin{split}
&\widetilde{\bm K^{\operatorname{lin}}}(\bm X, \bm X) :=(\gamma+ n\lambda) \bm I+\alpha \bm 1\bm 1^{\!\top}+\beta \frac{\bm X \bm X^{\!\top}}{d} + h^{\prime}(2 \tau ) \bm A + \frac{1}{2}h^{\prime \prime}(2 \tau) \bm A \odot \bm A \\
&
\ k^{\operatorname{lin}}(\bm x, \bm X) := h(2 \tau)\bm 1 + \beta \frac{\bm X \bm x^{\!\top}}{d} - \frac{\beta}{2}\bm A(\bm x, \bm X)  \in \mathbb{R}^{n \times 1}\,,
\end{split}
\end{equation}
where $\bm A(\bm x, \bm X) := \psi_{\bm x} + [\psi_1, \psi_2, \cdots, \psi_n]^{\!\top}$ with $\psi_{\bm x} = \| \bm x \|^2_2/d - \tau$ and $\psi_i = \| \bm x_i \|^2_2/d - \tau$ for $i=1,2,\ldots, n$.
As discussed in Appendix~\ref{sec:proofeig}, we conclude that $\widetilde{\bm K^{\operatorname{lin}}}$ admits the same eigenvalue decay as $\widetilde{\bm X}$ since $\bm A$ is a rank-2 matrix.
Accordingly, we have the following results.

\begin{proposition}\label{propexa}
	Given $\bm A(\bm x, \bm X)$ in Eq.~\eqref{klinmatrixrbf}, we have $\mathbb{E}_{\bm x} [\bm X \bm x \bm A(\bm X, \bm x)] = \mu_3 \bm X \bm \Sigma_d^{1/2} \diag(\bm \Sigma_d) \bm 1_n^{\!\top} $, where $\mu_3 := \mathbb{E}[\bm t(j)^3]$ does not depend on $j$ because each entry in $\bm t$ are independent for $j=1,2,\dots,d$. Further, $\mathbb{E}_{\bm x} [\bm X \bm x \bm A(\bm X, \bm x)]$ is a rank-one matrix with its eigenvalue $\lambda_1( \mathbb{E}_{\bm x} [\bm X \bm x \bm A(\bm X, \bm x)] ) = \mathcal{O}(\sqrt{n/d})$.
\end{proposition}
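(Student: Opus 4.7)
The plan is to evaluate the outer product $\mathbb{E}_{\bm x}[\bm X\bm x\, \bm A(\bm X,\bm x)^{\!\top}]$ entrywise, exploit the decomposition $\bm A(\bm X,\bm x)_j = \psi_{\bm x}+\psi_j$, and isolate the only moment of $\bm t$ that survives independence. First I would write the $(i,j)$-entry as
\[
\mathbb{E}_{\bm x}\!\left[(\bm x_i^{\!\top}\bm x)(\psi_{\bm x}+\psi_j)\right]
\;=\;\psi_j\,\mathbb{E}[\bm x_i^{\!\top}\bm x]\;+\;\mathbb{E}[(\bm x_i^{\!\top}\bm x)\psi_{\bm x}].
\]
The first term vanishes because $\mathbb{E}[\bm x]=\bm 0$ under Assumption~\ref{assum8m}. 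Similarly the constant $-\tau$ inside $\psi_{\bm x}$ contributes $0$, leaving $\tfrac{1}{d}\mathbb{E}[(\bm x_i^{\!\top}\bm x)\|\bm x\|_2^2]$, which is independent of $j$. This alone gives the rank-one shape $\bm v\,\bm 1_n^{\!\top}$ claimed in the statement.

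Second, I would carry out the third-moment computation. Substituting $\bm x=\bm \Sigma_d^{1/2}\bm t$ and setting $\bm u_i := \bm \Sigma_d^{1/2}\bm x_i$, the remaining expectation equals
\[
\mathbb{E}\!\left[(\bm t^{\!\top}\bm u_i)(\bm t^{\!\top}\bm \Sigma_d \bm t)\right]
\;=\; \sum_{k,l,m}(\bm u_i)_k(\bm \Sigma_d)_{lm}\,\mathbb{E}[t_k t_l t_m].
\]
By independence of the entries and $\mathbb{E}[t_k]=0$, $\mathbb{V}[t_k]=1$, all triples $(k,l,m)$ with any repeated pair but a distinct third index give $0$, and the only surviving configuration is $k=l=m$, with value $\mu_3$. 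The triple sum thus collapses to $\mu_3\,\bm u_i^{\!\top}\mathrm{diag}(\bm \Sigma_d)=\mu_3\,\bm x_i^{\!\top}\bm \Sigma_d^{1/2}\mathrm{diag}(\bm \Sigma_d)$, where $\mathrm{diag}(\bm \Sigma_d)$ is viewed as the column of diagonal entries. Stacking over $i=1,\ldots,n$ yields $\mathbb{E}_{\bm x}[\bm X\bm x\, \bm A(\bm X,\bm x)^{\!\top}]\propto \bm X\bm \Sigma_d^{1/2}\mathrm{diag}(\bm \Sigma_d)\,\bm 1_n^{\!\top}$, i.e.\ the stated identity up to the $1/d$ scale already absorbed in $\psi_{\bm x}$.

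For the spectral claim, the matrix $\bm v\,\bm 1_n^{\!\top}$ with $\bm v\propto \bm X\bm \Sigma_d^{1/2}\mathrm{diag}(\bm \Sigma_d)$ is rank one by inspection, so its unique nonzero eigenvalue equals the trace $\bm 1_n^{\!\top}\bm v$. Writing $\bm 1_n^{\!\top}\bm X=\sum_i \bm x_i^{\!\top}$ as a sum of $n$ i.i.d.\ mean-zero vectors of covariance $\bm \Sigma_d$, one obtains $\|\bm 1_n^{\!\top}\bm X\|_2=\mathcal{O}(\sqrt{n\,\|\bm \Sigma_d\|_2})$ under Assumption~\ref{assum8m}. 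Combined with $\|\mathrm{diag}(\bm \Sigma_d)\|_2\le \sqrt{d}\,\|\bm \Sigma_d\|_2$ and the leading $1/d$ normalization, Cauchy--Schwarz then yields $|\lambda_1|=\mathcal{O}(\sqrt{n/d})$.

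The moment step is mechanical but must be handled with care; the only genuine obstacle is tracking the two separate $1/d$ normalizations (one from $\psi_{\bm x}=\|\bm x\|^2/d-\tau$, one inherited from the kernel scaling $\bm X\bm x/d$) so that the final size works out to $\sqrt{n/d}$ rather than the cruder $\sqrt{n}$ that a worst-case triangle-inequality bound on $\|\bm X\bm \Sigma_d^{1/2}\mathrm{diag}(\bm \Sigma_d)\|_2$ would produce; passing through the trace $\bm 1_n^{\!\top}\bm v$ rather than the operator norm $\|\bm v\|_2\|\bm 1_n\|_2$ is what recovers the claimed rate.
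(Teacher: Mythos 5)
Your calculation of $\mathbb{E}_{\bm x}[\bm X\bm x\,\bm A(\bm X,\bm x)]$ — killing the $\psi_j$ and $-\tau$ contributions by $\mathbb{E}[\bm x]=0$, then reducing to the third-moment tensor $\mathbb{E}[t_kt_lt_m]$, which survives only when $k=l=m$ — is correct and is essentially the route the paper takes, and the rank-one structure $\bm v\,\bm 1_n^{\!\top}$ follows exactly as you say.

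The gap is in the spectral bound. You claim $\|\bm 1_n^{\!\top}\bm X\|_2=\mathcal{O}(\sqrt{n\,\|\bm\Sigma_d\|_2})$, but this is wrong: with $\bm s:=\sum_i\bm x_i$ one has $\mathbb{E}\|\bm s\|_2^2=n\operatorname{tr}(\bm\Sigma_d)=\mathcal{O}(nd)$, so the correct estimate is $\|\bm 1_n^{\!\top}\bm X\|_2=\mathcal{O}(\sqrt{nd})$, not $\mathcal{O}(\sqrt{n})$. Plugging the \emph{correct} norm into your Cauchy--Schwarz step gives
\[
\frac{1}{d}\bigl|\bm 1_n^{\!\top}\bm X\,\bm\Sigma_d^{1/2}\diag(\bm\Sigma_d)\bigr|
\;\le\;\frac{1}{d}\,\|\bm 1_n^{\!\top}\bm X\|_2\,\|\bm\Sigma_d^{1/2}\diag(\bm\Sigma_d)\|_2
\;=\;\frac{1}{d}\,\mathcal{O}(\sqrt{nd})\,\mathcal{O}(\sqrt{d})
\;=\;\mathcal{O}(\sqrt{n}),
\]
which is off by a factor $\sqrt{d}$. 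Cauchy--Schwarz on the $d$-dimensional inner product is genuinely too crude here: $\bm 1_n^{\!\top}\bm X$ is a mean-zero random vector that is not aligned with the fixed direction $\bm\Sigma_d^{1/2}\diag(\bm\Sigma_d)$, so the inner product concentrates at the ``random alignment'' scale, a factor $\sqrt{d}$ below the worst case. The paper gets the tight rate by bounding the \emph{scalar} directly: writing
$\lambda_1=\tfrac{\mu_3}{d}\sum_{i=1}^n\bm t_i^{\!\top}\bm\Sigma_d\diag(\bm\Sigma_d)$,
each summand is iid, mean zero, with variance $\|\bm\Sigma_d\diag(\bm\Sigma_d)\|_2^2\le\|\bm\Sigma_d\|_2^2\|\diag(\bm\Sigma_d)\|_2^2=\mathcal{O}(d)$, so the sum has typical size $\mathcal{O}(\sqrt{nd})$ by a CLT/variance argument, and dividing by $d$ gives $\mathcal{O}(\sqrt{n/d})$. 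You in fact gesture at this distinction in your closing paragraph, but your execution replaces it with a Cauchy--Schwarz on the column index plus an incorrect norm estimate, so the argument does not establish the stated rate. To repair it, replace the norm-splitting with the variance computation on the one-dimensional sum $\sum_i\bm t_i^{\!\top}\bm\Sigma_d\diag(\bm\Sigma_d)$.
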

\begin{proof}[Proof of Proposition~\ref{propexa}]
	According to the definition in Assumption~\ref{assum8m}, $\bm x_i = \bm \Sigma_d^{1/2} \bm t_i$ with $\mathbb{E}[\bm t_i(j)] = 0$ and $\mathbb{V}[\bm t_i(j)] = 1$, 
	we have the following expression
	\begin{equation*}
	\begin{split}
	\mathbb{E}_{\bm t} [\bm t \bm t^{\!\top} \bm \Sigma_d \bm t] = \mathbb{E}_{\bm t} \left[\bm t \sum_{i,j=1}^d \bm t(i) (\bm \Sigma_d)_{ij} \bm t(j) \right]= \mu_3[ (\bm \Sigma_d)_{11},  (\bm \Sigma_d)_{22}, \cdots, (\bm \Sigma_d)_{dd}]^{\!\top} \,,
	\end{split}
	\end{equation*}
	where $\mu_3 := \mathbb{E}(t_i^3)$.
	Accordingly, $\mathbb{E}_{\bm x} [\bm X \bm x \bm A(\bm X, \bm x)]$ can be computed by
	\begin{equation*}
	\begin{split}
	\mathbb{E}_{\bm x} [\bm X \bm x \bm A(\bm X, \bm x)] & = \mathbb{E}_{\bm x} [\bm X \bm x(\psi_1 + \psi_{\bm x}), \bm X \bm x(\psi_2 + \psi_{\bm x}) \cdots, \bm X \bm x(\psi_n + \psi_{\bm x})] \\
	& = \mathbb{E}_{\bm x} [\bm X \bm x \psi_{\bm x}, \bm X \bm x \psi_{\bm x}, \cdots, \bm X \bm x \psi_{\bm x} ] \\
	& =  \bm X \bm \Sigma_d^{1/2} \left[ \frac{ \mathbb{E}_{\bm t} [\bm t \bm t^{\!\top} \bm \Sigma_d \bm t] }{d}, \frac{ \mathbb{E}_{\bm t} [\bm t \bm t^{\!\top} \bm \Sigma_d \bm t] }{d}, \cdots,  \frac{ \mathbb{E}_{\bm t} [\bm t \bm t^{\!\top} \bm \Sigma_d \bm t] }{d} \right] \\
	& =  \mu_3 \bm X \bm \Sigma_d^{1/2} \diag(\bm \Sigma_d) \bm 1_n^{\!\top} \,.
	\end{split}
	\end{equation*}
	Note that, the matrix $\diag(\bm \Sigma_d) \bm 1_n^{\!\top}$ is a rank-one matrix, which implies $\operatorname{rank}(\bm X \bm \Sigma_d^{1/2} \diag(\bm \Sigma_d) \bm 1_n^{\!\top}) \leq 1$. Accordingly, its non-zero eigenvalue $\lambda_1(\bm X \bm \Sigma_d^{1/2} \diag(\bm \Sigma_d) \bm 1_n^{\!\top} )$ admits
	\begin{equation*}
	\begin{split}
	\frac{1}{d}\lambda_1(\bm X \bm \Sigma_d^{1/2} \diag(\bm \Sigma_d) \bm 1_n^{\!\top} ) = \frac{1}{d}\sum_{i=1}^n \bm x_i^{\!\top} \bm \Sigma_d^{\frac{1}{2}} \diag(\bm \Sigma_d) = \frac{1}{d}\sum_{i=1}^n \bm t_i^{\!\top} \bm \Sigma_d \diag(\bm \Sigma_d) \,.
	\end{split}
	\end{equation*}
	Due to $\mathbb{E}[\bm t_i^{\!\top} \bm \Sigma_d \diag(\bm \Sigma_d)] = 0$ and $\mathbb{V}[\bm t_i^{\!\top} \bm \Sigma_d \diag(\bm \Sigma_d)] = \| \bm \Sigma_d \diag(\bm \Sigma_d) \|_2^2 $, which, with a central limit theorem argument,
	implies $\sum_{i=1}^n \bm t_i^{\!\top} \bm \Sigma_d \diag(\bm \Sigma_d) = \mathcal{O}(\sqrt{nd}) $ due to $\| \bm \Sigma_d \diag(\bm \Sigma_d)\|_2 \leq \| \bm \Sigma_d \|_2 \| \diag(\bm \Sigma_d) \|_2 \leq \widetilde{C} \| \diag(\bm \Sigma_d) \|_2$.
	Accordingly, we can conclude that 
	$\frac{1}{d}\lambda_1(\bm X \bm \Sigma_d^{1/2} \diag(\bm \Sigma_d) \bm 1_n^{\!\top} ) = \mathcal{O}(\sqrt{n/d}) $.
\end{proof}

\begin{proposition}\label{propeaa}
	Given $\bm A(\bm x, \bm X)$ in Eq.~\eqref{klinmatrixrbf}, we have $\mathbb{E}_{\bm x} [\bm A(\bm x, \bm X) \bm A(\bm X, \bm x)] = \bm \psi \bm \psi^{\!\top} + \mathcal{O}(1/d)$. Further, it has only one non-zero eigenvalue that admits $\lambda_1(\mathbb{E}_{\bm x} [\bm A(\bm x, \bm X) \bm A(\bm X, \bm x)]) = \mathcal{O}(n)$.
\end{proposition}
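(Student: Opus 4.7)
The strategy is a direct outer-product expansion that isolates the single scalar $\psi_{\bm x}:=\|\bm x\|_2^2/d-\tau$ carrying all dependence on the test point $\bm x$. Writing $\bm A(\bm x,\bm X)=\psi_{\bm x}\bm 1_n+\bm\psi$ with $\bm\psi:=[\psi_1,\ldots,\psi_n]^{\!\top}$ independent of $\bm x$, the outer product expands as
\[
\bm A(\bm x,\bm X)\bm A(\bm X,\bm x)=\psi_{\bm x}^{2}\bm 1\bm 1^{\!\top}+\psi_{\bm x}\,(\bm 1\bm\psi^{\!\top}+\bm\psi\bm 1^{\!\top})+\bm\psi\bm\psi^{\!\top}.
\]
The whole task thus reduces to controlling the two scalar quantities $\mathbb{E}_{\bm x}[\psi_{\bm x}]$ and $\mathbb{E}_{\bm x}[\psi_{\bm x}^{2}]$, followed by a rank-one eigenvalue reading.

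Using Assumption~\ref{assum8m} to write $\bm x=\bm\Sigma_d^{1/2}\bm t$ with zero-mean unit-variance entries, one has $\mathbb{E}[\bm t^{\!\top}\bm\Sigma_d\bm t]=\operatorname{tr}(\bm\Sigma_d)=d\tau$, so $\mathbb{E}[\psi_{\bm x}]=0$ and the two cross terms vanish after $\mathbb{E}_{\bm x}$. For the second moment I would invoke the standard variance formula for a quadratic form of i.i.d. entries,
\[
\mathbb{V}[\bm t^{\!\top}\bm\Sigma_d\bm t]=2\operatorname{tr}(\bm\Sigma_d^{2})+(\mu_4-3)\sum_{i=1}^d(\bm\Sigma_d)_{ii}^{2},
\]
where $\mu_4:=\mathbb{E}[\bm t(j)^{4}]$ is finite by the $(8+m)$-moment assumption. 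Combined with $\|\bm\Sigma_d\|_2\leq C$, both sums are $\mathcal{O}(d)$, yielding $\mathbb{E}[\psi_{\bm x}^{2}]=\mathcal{O}(1/d)$ and therefore
\[
\mathbb{E}_{\bm x}\bigl[\bm A(\bm x,\bm X)\bm A(\bm X,\bm x)\bigr]=\bm\psi\bm\psi^{\!\top}+\mathbb{E}[\psi_{\bm x}^{2}]\,\bm 1\bm 1^{\!\top}=\bm\psi\bm\psi^{\!\top}+\mathcal{O}(1/d),
\]
where the residual is a rank-one matrix whose spectral norm is $\mathcal{O}(n/d)=\mathcal{O}(1)$ in the regime $n\asymp d$. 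This establishes the first claim.

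For the spectral statement, the leading term $\bm\psi\bm\psi^{\!\top}$ is by construction rank one, so its unique non-zero eigenvalue equals $\|\bm\psi\|_2^{2}=\sum_{i=1}^{n}\psi_i^{2}$. Applying the second-moment bound of the previous paragraph entrywise gives $\mathbb{E}[\psi_i^{2}]=\mathcal{O}(1/d)$, hence $\mathbb{E}\|\bm\psi\|_2^{2}=\mathcal{O}(n/d)$; a Chebyshev argument using $\mathbb{E}[\psi_i^{4}]=\mathcal{O}(1/d^{2})$ — again finite thanks to the $(8+m)$-moment assumption — upgrades this to $\|\bm\psi\|_2^{2}=\mathcal{O}(n/d)$ with high probability, which is certainly $\mathcal{O}(n)$ as claimed. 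Since the $\mathcal{O}(1/d)\,\bm 1\bm 1^{\!\top}$ residual is itself rank one with operator norm of the same order, it does not affect the leading-order spectral structure.

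The only non-mechanical ingredient is the variance-of-quadratic-form computation in Step~2, as this is the sole place where Assumption~\ref{assum8m} is genuinely used — finiteness of $\mu_4$ gives the $\mathcal{O}(1/d)$ expectation, and finiteness of $\mu_8$ is what later converts it into the high-probability eigenvalue bound. Everything else is algebraic bookkeeping that mirrors the argument of Proposition~\ref{propexa}.
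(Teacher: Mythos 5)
Your proof is correct and follows essentially the same route as the paper: both expand $\bm A(\bm x,\bm X)\bm A(\bm X,\bm x)$ around $\bm\psi$, use $\mathbb{E}_{\bm x}[\psi_{\bm x}]=0$ to kill the cross terms, and invoke the variance-of-quadratic-form identity to show $\mathbb{E}_{\bm x}[\psi_{\bm x}^2]=\mathcal{O}(1/d)$, after which the rank-one eigenvalue statement is immediate. Your vectorized outer-product presentation is a cleaner way to organize the same entrywise computation the paper does, and the extra Chebyshev remark for $\|\bm\psi\|_2^2$ is a small tightening ($\mathcal{O}(n/d)$ rather than the crude $\mathcal{O}(n)$) that the paper does not bother to carry out.
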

\begin{proof}[Proof of Proposition~\ref{propeaa}]
	By virtue of the following results \cite{el2010spectrum}
	\begin{equation*}
	\begin{split}
	\frac{1}{d}\mathbb{E}_{\bm x} \| \bm x \|_2^2 &= \frac{1}{d} \mathbb{E}_{\bm t}[\bm t^{\!\top} \bm \Sigma_d \bm t] = \tau \\
	\mathbb{V}_{\bm x} \left[\frac{\| \bm x \|_2^2}{d} \right] & =\frac{1}{d^2} \bigg( (\mu_4 - 3) \sum_{i=1}^d ( (\bm \Sigma_d)_{ii})^2 + 2 \operatorname{tr}(\bm \Sigma_d^2) \bigg) = \mathcal{O}\left(\frac{1}{d} \right) \,,
	\end{split}
	\end{equation*}
	where $\mu_4 := \mathbb{E}[\bm t(i)^4]$ does not depend on $i$. Accordingly, each entry in $\mathbb{E}_{\bm x} [\bm A(\bm x, \bm X) \bm A(\bm X, \bm x)]$ can be computed as
	\begin{equation*}
	\begin{split}
	\mathbb{E}_{\bm x} [\bm A(\bm x, \bm X) \bm A(\bm X, \bm x)]_{ij} &= \mathbb{E}_{\bm x}[(\psi_{i} + \psi_{\bm x}) (\psi_{j} + \psi_{\bm x})] \\
	& = \psi_{i} \psi_{j} + (\psi_{i} + \psi_{j})\mathbb{E}_{\bm x} \psi_{\bm x} + \mathbb{E}_{\bm x} [\psi_{\bm x}^2] \\
	& = \psi_{i} \psi_{j} + \mathbb{V}_{\bm x} \left[\frac{\| \bm x \|_2^2}{d} \right] \\
	& = \psi_{i} \psi_{j}  + \frac{\mu_4 - 3}{d^2} \operatorname{tr}(\bm \Sigma_d \odot \bm \Sigma_d) + \frac{2\operatorname{tr}(\bm \Sigma_d^2)}{d^2}\,.
	\end{split}
	\end{equation*}
	Then we have
	\begin{equation*}
	\mathbb{E}_{\bm x} [\bm A(\bm x, \bm X) \bm A(\bm X, \bm x)] = \bm \psi \bm \psi^{\!\top} + \mathcal{O}(1/d)\,.
	\end{equation*}
	Therefore, $\bm \psi \bm \psi^{\!\top}$ is a rank-one matrix with $\lambda_1(\bm \psi \bm \psi^{\!\top}) = \| \bm \psi \|_2^2 = \mathcal{O}(n)$.
	Then $\lambda_1(\mathbb{E}_{\bm x} [\bm A(\bm x, \bm X) \bm A(\bm X, \bm x)])$ can be estimated by
	\begin{equation*}
	\| \bm \psi \|_2^2 	\leq \lambda_1(\mathbb{E}_{\bm x} [\bm A(\bm x, \bm X) \bm A(\bm X, \bm x)]) \leq \| \bm \psi \|_2^2 + n \underbrace{\left[\frac{\mu_4 - 3}{d^2} \operatorname{tr}(\bm \Sigma_d \odot \bm \Sigma_d) + \frac{2\operatorname{tr}(\bm \Sigma_d^2)}{d^2} \right]}_{= \mathcal{O}(1/d)}\,,
	\end{equation*}
	which implies $\lambda_1(\mathbb{E}_{\bm x} [\bm A(\bm x, \bm X) \bm A(\bm X, \bm x)]) = \mathcal{O}(n)$.
\end{proof}

\begin{lemma}\label{lemma1rbf}
	Given a radial kernel, under Assumption~\ref{assum8m}, for $\theta = \frac{1}{2} - \frac{2}{8+m}$, we have with probability at least $1-d^{-2}$ with respect to the draw of $\bm X$, for $d$ large enough, for any given $\varepsilon > 0$, we have
	\begin{equation*}
	\mathbb{E}_{\bm x} \left\|k(\bm x, \bm X)-k^{\operatorname{lin}}(\bm x, \bm X)\right\|_2^2 \leq \widetilde{C_1} d^{-2 \theta} \log ^{1+\varepsilon} d\,,
	\end{equation*}
	where $\widetilde{C_1}$ is some constant independent of $n$ and $d$.
\end{lemma}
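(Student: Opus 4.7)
}

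My plan is to view each entry of the residual $k(\bm x,\bm x_j) - k^{\operatorname{lin}}(\bm x,\bm x_j)$ as a second-order Taylor remainder in the scalar quantity $u_j := \|\bm x - \bm x_j\|_2^2/d - 2\tau$, and then control $u_j$ uniformly in $j$ using the $(8+m)$-moment condition of Assumption~\ref{assum8m}.

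\textbf{Step 1 (Taylor remainder identity).} First I would expand
\[
u_j \;=\; \psi_{\bm x} + \psi_j - \frac{2\bm x^\top \bm x_j}{d},
\]
so that $k(\bm x,\bm x_j) = h(2\tau + u_j)$. Inspection of Eq.~\eqref{klinmatrixrbf}, together with $\beta = -2h'(2\tau)$ from Table~\ref{tabparam}, shows that $k^{\operatorname{lin}}(\bm x,\bm X)_j$ is exactly the first-order Taylor polynomial $h(2\tau) + h'(2\tau)\,u_j$. Hence, since $h$ is smooth, Taylor's theorem gives
\[
k(\bm x,\bm x_j) - k^{\operatorname{lin}}(\bm x,\bm x_j) \;=\; \tfrac{1}{2} h''(\xi_j)\, u_j^2
\]
for some $\xi_j$ between $2\tau$ and $2\tau + u_j$.

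\textbf{Step 2 (uniform concentration of $u_j$).} Next I would establish a good event $\mathcal{E}$ of probability at least $1 - d^{-2}$ with respect to $\bm X$ on which
\[
\max_{1\le j\le n}|u_j| \;\le\; d^{-\theta}\log^{1/2+\varepsilon/2}d.
\]
The three summands of $u_j$ are treated separately: $\psi_{\bm x} = \bm t^\top \bm\Sigma_d\bm t/d - \tau$ and $\psi_j$ are centered quadratic forms in the coordinates of $\bm t$ and $\bm t_j$, while $\bm x^\top \bm x_j/d = \bm t^\top \bm\Sigma_d^{1/2}\bm x_j/d$ is a linear functional of $\bm t$ once $\bm x_j$ is fixed. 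Applying a high-order Markov (Rosenthal-type) inequality of order $(8+m)/2$ to each of these three pieces, then union-bounding over $j \le n \asymp d$, yields the stated tail; the exponent $\theta = 1/2 - 2/(8+m)$ comes precisely from balancing the Markov tail $\mathbb{E}|\cdot|^{(8+m)/2}/t^{(8+m)/2}$ against the union-bound budget $d^{-2}/n$, with the $\log^{1/2+\varepsilon/2}d$ factor absorbing the union-bound blow-up for any $\varepsilon>0$.

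\textbf{Step 3 (assembling the bound).} On $\mathcal{E}$, each $\xi_j$ lies in a compact neighborhood of $2\tau$, so smoothness of $h$ gives $|h''(\xi_j)| \le M$ for a constant $M$. Combining with Step 1,
\[
\|k(\bm x,\bm X) - k^{\operatorname{lin}}(\bm x,\bm X)\|_2^2 \;\le\; \tfrac{M^2}{4}\sum_{j=1}^n u_j^4 \;\le\; \tfrac{M^2}{4}\Big(\max_{j}u_j^2\Big)\sum_{j=1}^n u_j^2.
\]
Taking $\mathbb{E}_{\bm x}$ and noting that each $\mathbb{E}_{\bm x}[u_j^2] = O(1/d)$ (by direct computation using $\mathbb{E}_{\bm x}\psi_{\bm x}^2 = O(1/d)$, $\psi_j^2 = O(1/d)$ on $\mathcal E$, and $\mathbb{E}_{\bm x}(\bm x^\top\bm x_j)^2/d^2 \le \|\bm\Sigma_d\|_2 \|\bm x_j\|_2^2/d^2 = O(1/d)$), the sum $\sum_j \mathbb{E}_{\bm x} u_j^2$ is $O(n/d) = O(1)$. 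Multiplying by the uniform bound $\max_j u_j^2 \le d^{-2\theta}\log^{1+\varepsilon}d$ gives the claimed rate. Off $\mathcal{E}$, the kernel values are uniformly bounded on the compact range of the arguments and the $d^{-2}$ probability makes the contribution absorbed into $\widetilde C_1$.

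\textbf{Main obstacle.} The delicate point is Step 2: obtaining the $d^{-\theta}$ concentration of $u_j$ simultaneously for every $j \asymp d$ with only $(8+m)$-moments. This requires sharp high-moment bounds both for the Hanson--Wright-type quadratic deviation $\bm t^\top\bm\Sigma_d\bm t - \operatorname{tr}(\bm\Sigma_d)$ and for the linear pairing $\bm t^\top\bm\Sigma_d^{1/2}\bm x_j$, and a union bound tight enough to leave only a $\log^{1+\varepsilon}d$ factor. The rest of the argument is essentially algebraic once this concentration is in hand.
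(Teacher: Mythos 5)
Your Step~1 is exactly the paper's starting point: the paper's proof begins with the entry-wise Taylor expansion of $h$ at $2\tau$ and identifies $k^{\operatorname{lin}}(\bm x,\bm x_j)$ with the first-order Taylor polynomial $h(2\tau)+h'(2\tau)u_j$, so the residual is the second-order term. After that the routes diverge, and there are two issues worth flagging.

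\textbf{Step~2 is under-specified and the claimed exponent does not follow from the argument given.} The paper does not re-derive the concentration of $u_j := \tfrac1d\|\bm x-\bm x_j\|_2^2 - 2\tau$: it simply invokes Corollary~2 of \cite{el2010spectrum} to bound each entry with high probability and remarks that only $(5+m)$-moments are needed. You instead propose a Rosenthal/Markov argument of order $(8+m)/2$ followed by a union bound over $j$. If one actually runs that computation, $\mathbb{E}|u_j|^{(8+m)/2}\lesssim d^{-(8+m)/4}$ and a union bound over $n\asymp d$ indices with failure budget $d^{-2}$ force the threshold $t\gtrsim d^{-1/2 + 6/(8+m)}$, which is strictly weaker than the $t\asymp d^{-\theta}=d^{-1/2+2/(8+m)}$ you assert. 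So the ``balancing'' you describe does not produce the advertised rate; the $d^{-\theta}\log^{1/2+\varepsilon/2}d$ uniform bound on $\max_j|u_j|$ would need a sharper tool (a Hanson--Wright-type or truncation argument, essentially what El Karoui's lemma provides), and as stated the step is a genuine gap.

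\textbf{Step~3 interchanges a random maximum with $\mathbb{E}_{\bm x}$.} The quantity $\max_j u_j^2$ depends on the test point $\bm x$, so the inequality
\[
\mathbb{E}_{\bm x}\Big[\big(\max_j u_j^2\big)\sum_j u_j^2\Big] \;\le\; \big(\max_j u_j^2\big)\sum_j \mathbb{E}_{\bm x} u_j^2
\]
is not valid unless $\max_j u_j^2$ is replaced by a bound that is uniform in $\bm x$. You place the concentration event $\mathcal E$ on the draw of $\bm X$ only, so on $\mathcal E$ there remain atypical $\bm x$ where $|u_j|$ is large, and those contribute to $\mathbb{E}_{\bm x}$; your ``off $\mathcal E$'' remark addresses the wrong exceptional set. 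The paper side-steps all of this by treating the bound $|k(\bm x,\bm x_j)-k^{\operatorname{lin}}(\bm x,\bm x_j)|\le \widetilde C\, d^{-1+4/m}(\log d)^{(1+\varepsilon)/2}$ as uniform over the entries, so $\|k(\bm x,\bm X)-k^{\operatorname{lin}}(\bm x,\bm X)\|_2 \le \sqrt n \cdot \widetilde C\, d^{-1+4/m}(\log d)^{(1+\varepsilon)/2}$, squares, and passes the bound through $\mathbb{E}_{\bm x}$ trivially. An alternative fix for your route that avoids both problems is to bound $\mathbb{E}_{\bm x}\sum_j u_j^4 = \sum_j\mathbb{E}_{\bm x}u_j^4$ directly via a fourth-moment computation (the $(8+m)$-moment assumption makes $\mathbb{E}_{\bm x}u_j^4$ finite and of order $d^{-2}$), without any uniform-in-$(\bm x,j)$ concentration; that actually yields a stronger rate than $d^{-2\theta}$, but it is a structurally different proof than either yours or the paper's.
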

\noindent{\bf Remark:} In fact, we only need the $(5+m)$-moment in Assumption~\ref{assum8m} but we still follow with it for simplicity.

\begin{proof}[Proof of Lemma~\ref{lemma1rbf}]
	We start with the entry-wise Taylor expansion for the smooth kernel at $2 \tau$ with $\tau := \operatorname{tr}(\bm \Sigma_d)/d$
	\begin{equation*}
	\begin{split}
	k(\bm x, \bm x_j) = h(\frac{1}{d}\| \bm x - \bm x_j \|_2^2) &= h(2 \tau) + h'(2 \tau) (\frac{1}{d}\| \bm x - \bm x_j \|^2 - 2 \tau) + \frac{h^{\prime \prime}(2 \tau)}{2} \left(\frac{1}{d}\| \bm x - \bm x_j \|^2 - 2 \tau \right)^2 + \mathcal{O}(d^{-3/2}) \\
	& = h(2 \tau) + h'(2 \tau)(\psi_{\bm x} + \psi_j - \frac{2\bm x^{\!\top} \bm x_j}{d}) + \frac{h^{\prime \prime}(2 \tau)}{2} \left(\psi_{\bm x} + \psi_j - \frac{2\bm x^{\!\top} \bm x_j}{d} \right)^2  + \mathcal{O}(d^{-3/2})\,,
	\end{split}
	\end{equation*}
	where $\psi_{j} = \| \bm x_j \|^2_2/d - \tau$ for $j = 1,2,\dots,n$ as defined before.
	Accordingly, by virtue of $k^{\operatorname{lin}}(\bm x, \bm x_j) = \frac{\beta \bm x^{\!\top} \bm x_j}{d} - \frac{\beta}{2}(\psi_{\bm x} + \psi_j)$ and Corollary 2 in \cite{el2010spectrum}, with probability at least $1 - d^{-2}$, for any $\varepsilon > 0$, we have
	\begin{equation*}
	\begin{split}
	k(\bm x, \bm x_j) - k^{\operatorname{lin}}(\bm x, \bm x_j) & = \frac{h^{\prime \prime}(2\tau)}{2} \left(\frac{1}{d}\| \bm x - \bm x_j \|_2^2 - 2\tau \right)^2  \leq \widetilde{C} d^{-1+\frac{4}{m}} (\log d)^{\frac{1+\varepsilon}{2}} \,,
	\end{split}
	\end{equation*}
	where we only need $(5+m)$-moment.
	Therefore, with probability at least $1-d^{-2}$, for any given $\varepsilon > 0$, we have
	\begin{equation*}
	\left\|k(\bm x, \bm X)-k^{\operatorname{lin}}(\bm x, \bm X)\right\|_2 \leq C_1 d^{-1/2+\frac{4}{m}} (\log d)^{\frac{1+\varepsilon}{2}} \leq \widetilde{C_1} d^{-\theta} (\log d)^{\frac{1+\varepsilon}{2}}\,,
	\end{equation*}
	which implies 
	\begin{equation*}
	\mathbb{E}_{\bm x} \left\|k(\bm x, \bm X)-k^{\operatorname{lin}}(\bm x, \bm X)\right\|_2^2 \leq \widetilde{C_2} d^{-2 \theta} \log ^{1+\varepsilon} d\,,
	\end{equation*}
	where $\widetilde{C_1}$ and $\widetilde{C_2}$ are some constant independent of $n$ and $d$.
\end{proof}

\subsubsection{Proofs of Theorem~\ref{provar} for radial kernels}
Now we are ready to prove Theorem~\ref{provar} for radial kernels.
\begin{proof}[Proof of Theorem~\ref{provar} for radial kernels]
	Similar to Eq.~\eqref{varinitial}, to estimate ${\tt{V}} \leq \sigma^2 \mathbb{E}_{\bm x} \| (\bm K + n\lambda \bm I)^{-1} k(\bm x, \bm X) \|_2^2 $, we need to bound subsequently the following terms: $\left\|\bm K + n\lambda \bm I -\widetilde{\bm K^{\operatorname{lin}}}(\bm X,\bm X)\right\|_2$, $\left\| (\bm K+ n\lambda \bm I)^{-1}\right\|_2$, $\left\|(\bm K + n\lambda \bm I)^{-1} \widetilde{\bm K^{\operatorname{lin}}}(\bm X, \bm X)\right\|_2$, $\mathbb{E}_{\bm x}\| [\widetilde{\bm K^{\operatorname{lin}}}(\bm X, \bm X)]^{-1} k^{\operatorname{lin}}(\bm x, \bm X)   \|_2^2$, and $\mathbb{E}_{\bm x} \|  k(\bm x, \bm X) -  k^{\operatorname{lin}}(\bm x, \bm X) \|_2^2$.
	
	In \cite{el2010spectrum}, the approximation error between radial kernel matrices and their linearization can be decomposed into three parts: the first-order term $A_1$, the second-order term $A_2$, and the third-order term $A_3$
	\begin{equation*}
	\left\|\bm K + n\lambda \bm I -\widetilde{\bm K^{\operatorname{lin}}}(\bm X,\bm X)\right\|_2 := A_1 + A_2 + A_3\,,
	\end{equation*}
	where $A_1$ and $A_3$ admit $\| A_1 \|_2 \leq d^{-\theta} \log^{2+4\varepsilon}  d$ and $\| A_3 \|_2 \leq d^{-\theta} \log^{2+4\varepsilon} d$. The second-order term $A_2$ admits $\operatorname{Pr}(\| A_2 \| \leq d^{-\theta} \delta^{-1/2}) \leq \delta$ by Proposition A.2 in \cite{liang2020just} and \cite{el2010spectrum}. 
	Accordingly, with probability at least $1-\delta - d^{-2}$, for $\theta=\frac{1}{2} - \frac{2}{8+m}$ and any given $\varepsilon > 0$, we have
	\begin{equation*}
	\left\|\bm K + n\lambda \bm I -\widetilde{\bm K^{\operatorname{lin}}}(\bm X,\bm X)\right\|_2 \leq d^{-\theta}\left(\delta^{-1 / 2}+\log ^{2+4\varepsilon} d\right) \,.
	\end{equation*}

	According to Proposition~\ref{propexa} and \ref{propeaa}, we have
	\begin{equation}\label{vargauklin1}
	\begin{split}
	&\mathbb{E}_{\bm x} \| [\widetilde{\bm K^{\operatorname{lin}}}(\bm X, \bm X)]^{-1} k^{\operatorname{lin}}(\bm X, \bm x)   \|_2^2 \\
	&=\beta^2 \mathbb{E}_{\bm x} \operatorname{tr}\left[ [\widetilde{\bm K^{\operatorname{lin}}}]^{-1} \Big( \frac{\bm X \bm x \bm x^{\!\top} \bm X^{\!\top}}{d^2} - \frac{\bm X \bm x \bm A(\bm X, \bm x)}{d} + \frac{1}{4} \bm A(\bm x, \bm X) \bm A(\bm X, \bm x) + h(2\tau)^2 \bm 1 \bm 1^{\!\top} \Big)  [\widetilde{\bm K^{\operatorname{lin}}}]^{-1}\right] \\
	& \leq \beta^2 \operatorname{tr}\left([\widetilde{\bm K^{\operatorname{lin}}}]^{-1} \Big( \frac{\bm X \bm X^{\!\top} \| \bm \Sigma_d \|_2}{d^2} - \frac{\mu_3 \bm X \bm \Sigma_d^{1/2} \diag(\bm \Sigma_d) \bm 1_n^{\!\top}}{d} + \frac{1}{4} \bm A(\bm x, \bm X) \bm A(\bm X, \bm x) + h(2\tau)^2 \bm 1 \bm 1^{\!\top} \Big)  [\widetilde{\bm K^{\operatorname{lin}}}]^{-1}\right) \\
	& = \frac{\beta^2 \| \bm \Sigma_d \|_2}{d} \sum_{i=1}^n \frac{\lambda_i(\bm X \bm X^{\!\top}/d)}{[\lambda_i(\widetilde{\bm K^{\operatorname{lin}}})]^2} - \frac{\beta^2 \mu_3}{d} \frac{\lambda_1(\bm X \bm \Sigma_d^{1/2} \diag(\bm \Sigma_d) \bm 1_n^{\!\top} )}{[\lambda_1(\widetilde{\bm K^{\operatorname{lin}}})]^2} + \beta^2   \frac{4\lambda_1(\mathbb{E}_{\bm x}[\bm A(\bm x, \bm X) \bm A(\bm X, \bm x)]) + h(2\tau)^2 n}{[\lambda_1(\widetilde{\bm K^{\operatorname{lin}}})]^2} \\
	& \asymp \frac{\beta^2 \left\|\bm \Sigma_{d}\right\|_2 }{d} \sum_{i=1}^n \frac{\lambda_i(\bm X \bm X^{\!\top}/d)}{[\lambda_1(\widetilde{\bm K^{\operatorname{lin}}})]^2} + \frac{\mathcal{O}(\sqrt{n/d})}{[\lambda_1(\widetilde{\bm K^{\operatorname{lin}}})]^2} +  \frac{ \mathcal{O}(n) }{[\lambda_1(\widetilde{\bm K^{\operatorname{lin}}})]^2} +  \frac{ \mathcal{O}(n) }{[\lambda_1(\widetilde{\bm K^{\operatorname{lin}}})]^2} \\
	& \asymp \frac{\beta}{d} \mathcal{N}_{\widetilde{\bm X}}^{n \lambda + \gamma} + \mathcal{O}\left(\frac{1}{n}\right) \,.
	\end{split}
	\end{equation}
	It can be found that, the above error bounds are the same as that of inner-product kernels, except two additional terms due to the considered $\bm A$ and $\bm A \odot \bm A$ in the linearization, which can be shown small in the large $n,d$ regime.
	
	By virtue of $\left\| (\bm K+ n\lambda \bm I)^{-1}\right\|_2 \leq \frac{2}{n\lambda + \gamma} $ and $\left\|(\bm K + n\lambda \bm I)^{-1} \widetilde{\bm K^{\operatorname{lin}}}\right\|_2 \leq 2$ in \cite{liang2020just}, Lemma~\ref{lemma1rbf}, and the above equations, with probability at least $1-\delta - d^{-2}$, for any given $\varepsilon > 0$, we have
	\begin{equation}\label{variancerbf}
	\begin{split}
	{\tt{V}}
	& \leq  \sigma^2 \mathbb{E}_{\bm x} \| (\bm K + n\lambda \bm I)^{-1} k(\bm x, \bm X) \|_2^2 \\
	& \leq 2\sigma^2 \left\|(\bm K + n\lambda \bm I)^{-1} \widetilde{\bm K^{\operatorname{lin}}}\right\|_2^2~ \mathbb{E}_{\bm x} \| [\widetilde{\bm K^{\operatorname{lin}}}]^{-1} k^{\operatorname{lin}}(\bm X, \bm x)   \|_2^2 + 2\sigma^2 \| \bm K^{-1} \|^2_2~ \mathbb{E}_{\bm x}\! \left\|k(\bm x, \bm X)-k^{\operatorname{lin}}(\bm x, \bm X)\right\|_2^2 \\
	& \leq 8 \sigma^2 \mathbb{E}_{\bm x} \| [\widetilde{\bm K^{\operatorname{lin}}}]^{-1} k^{\operatorname{lin}}(\bm X, \bm x)   \|_2^2 + \frac{8\sigma^2}{(n\lambda+\gamma)^2} \widetilde{C_1} d^{-2 \theta} \log ^{1+\varepsilon} d \\
	& \asymp  \frac{\sigma^2 \beta}{d} \mathcal{N}_{\widetilde{\bm X}}^{n \lambda + \gamma}  + \frac{\sigma^2}{(n\lambda+\gamma)^2} d^{-2 \theta} \log ^{1+\varepsilon} d \,,
	\end{split}
	\end{equation}
	where the second inequality admits by Lemma~\ref{lemma1rbf}, and the last inequality follows by Eq.~\eqref{vargauklin1}.
	Finally, we conclude the proof.
\end{proof}

\section{Proof of Proposition~\ref{pro:decay}}
\label{sec:proofdecay}

In this section, we discuss $\mathcal{N}^{n\lambda + \gamma}_{\widetilde{\bm X}} $ based on three eigenvalue decays: \emph{harmonic decay}, \emph{polynomial decay}, and \emph{exponential decay} under two regimes $n < d$ and $n > d$.

\subsection{$n < d$ case}
\label{sec:appnleqd}
Recall $b:= n\lambda + \gamma > 0$, and $\mathcal{N}^b_{\widetilde{\bm X}} := \sum_{i=1}^n \frac{\lambda_{i}(\widetilde{\bm X} )}{\left[b +\lambda_{i}(\widetilde{\bm X})\right]^{2}}$, define 
$F(\lambda_i) := \frac{\lambda_i}{(b+\lambda_i)^2} $ where $\lambda_i$ is short for $\lambda_{i}(\widetilde{\bm X})$.
We notice that, when $\lambda_i \leq b$, $F(\lambda_i)$ is an increasing function of $\lambda_i$, and thus a decreasing function of $i$ when the above three eigenvalue decays are considered.
Likewise, when $\lambda_i \geq b$, $F(\lambda_i)$ is a decreasing function of $\lambda_i$, and thus an increasing function of $i$.
Without loss of generality, we assume that the first $q$ eigenvalues satisfy $\lambda_i \geq b$ with $i=1,2,\cdots,q$
and the remaining $n-q$ eigenvalues satisfy $\lambda_i \leq b$ with $i=m+1, m+2 \cdots, n$.
Clearly, the integer $q$ can be chosen from $0$ to $n$.
Accordingly, denote $r_* := \operatorname{rank}(\widetilde{\bm X})$ which includes the rank-deficient case, $\mathcal{N}^b_{\widetilde{\bm X}}$ can be upper bounded by the Riemann sum as follows.


{\bf Harmonic decay} $\lambda_i( \widetilde{\bm X}) \propto n/i$ for $i \in \{ 1, 2, \dots, r_* \}$ and $\lambda_i( \widetilde{\bm X}) = 0$ for $i \in \{r_*+1, \dots, n  \}$
\begin{equation*}
\begin{split}
\frac{1}{d} \mathcal{N}^b_{\widetilde{\bm X}} = \frac{1}{d}\sum_{i=1}^{r_*} \frac{n/i}{\left(b+n/i \right)^{2}} &= \frac{1}{d}\sum_{i=1}^{q} \frac{n/i}{\left(b+n/i \right)^{2}}  + \frac{1}{d}\sum_{i=q+1}^{r_*} \frac{n/i}{\left(b+n/i \right)^{2}} \\
& \leqslant \frac{1}{nd}\int_{1}^{q+1} \frac{t}{\left(1+\frac{bt}{n} \right)^{2}} \mathrm{d} t + \frac{1}{nd}\int_{q+1}^{r_*+1} \frac{t}{\left(1+\frac{bt}{n} \right)^{2}} \mathrm{d} t \\
& = \frac{n}{b^2d}  \int_{\frac{b}{n}}^{\frac{(r_*+1)b}{n}} \frac{u}{\left(1+u \right)^{2}} \mathrm{d} u~\mbox{with the change of variable $u=tb/n$} \\
& = \frac{n}{b^2d} \left[ \ln \frac{n+(r_*+1)b}{n+b} + \frac{n}{n+b+r_*b} - \frac{n}{n+b}  \right] \\
& \leqslant \frac{n}{b^2d} \ln \frac{n+(r_*+1)b}{n+b} = \mathcal{O}(\frac{n}{b^2d})\,.
\end{split}
\end{equation*} 


{\bf Polynomial decay:} $\lambda_i( \widetilde{\bm X}) \propto ni^{-2a}$ with $a > 1/2$ for $i \in \{ 1, 2, \dots, r_* \}$ and $\lambda_i( \widetilde{\bm X}) = 0$ for $i \in \{r_*+1, \dots, n  \}$.
Hence, we actually aim to bound
\begin{equation*}
\begin{split}
\frac{1}{d} \mathcal{N}^b_{\widetilde{\bm X}} = \frac{1}{d}\sum_{i=1}^{r_*} \frac{ni^{-2 a}}{\left(b+ni^{-2 a} \right)^{2}} & = \frac{1}{d}\sum_{i=1}^{q} \frac{ni^{-2 a}}{\left(b+ni^{-2 a} \right)^{2}} + \frac{1}{d}\sum_{i=q+1}^{r_*+1} \frac{ni^{-2 a}}{\left(b+ni^{-2 a} \right)^{2}} \\
& \leqslant \frac{1}{nd}\int_{1}^{r_*+1} \frac{t^{2a}}{\left(1+ \frac{t^{2 a}b}{n} \right)^{2}} \mathrm{d} t \\
& = \frac{1}{2ab d} \left( \frac{n}{b} \right)^{\frac{1}{2a}} \int_{b/n}^{(r_*+1)^{2a}b/n} \frac{u^{\frac{1}{2a}}}{\left(1+u \right)^{2}} \mathrm{d} u~\mbox{with the change of variable $u=t^{2a}b/n$} \\
& \leqslant \widetilde{C} \frac{1}{2ab d} \left( \frac{n}{b} \right)^{\frac{1}{2a}}  ~\quad \mbox{since the integral is finite due to $2a > 1$} 
\end{split}
\end{equation*}

{\bf Exponential decay:} $\lambda_i( \widetilde{\bm X}) \propto ne^{-ai}$ with $a > 0$ for $i \in \{ 1, 2, \dots, r_* \}$ and $\lambda_i( \widetilde{\bm X}) = 0$ for $i \in \{r_*+1, \dots, n  \}$.

We aim to bound the sum as
\begin{equation*}
\begin{split}
\frac{1}{d} \mathcal{N}^b_{\widetilde{\bm X}} = \frac{1}{d}\sum_{i=1}^{r_*} \frac{ne^{-ai}}{\left(b+ne^{-ai} \right)^{2}} & = \frac{1}{d}\sum_{i=1}^{q} \frac{ne^{-ai}}{\left(b+ne^{-ai} \right)^{2}} + \frac{1}{d}\sum_{i=q+1}^{r_*} \frac{ne^{-ai}}{\left(b+ne^{-ai} \right)^{2}} \\
& \leqslant \frac{1}{d}\int_{1}^{r_*+1} \frac{ne^{-at }}{\left(b+ne^{-at} \right)^{2}} \mathrm{d} t \\
& = \frac{1}{ad} \int_{ne^{-a(r_*+1)}}^{{ne^{-a}}} \frac{1}{\left(b+u \right)^{2}} \mathrm{d} u~\mbox{with the change of variable $u=ne^{-at}$} \\
& = \frac{1}{ad} \left( \frac{1}{b+ne^{-a(r_*+1)}} - \frac{1}{b+ne^{-a}} \right) \,.
\end{split}
\end{equation*}
Note that, the monotonicity of $\mathcal{N}^b_{\widetilde{\bm X}}$ (also ${\tt{V}}_1$) with respect to $n$ is relatively clear for \emph{harmonic decay} and \emph{polynomial decay} but is unclear in the case of \emph{exponential decay}.
Here we study the monotonicity in the \emph{exponential decay}.
Denote the function $G(n) := \big( \frac{1}{b+ne^{-a(r_*+1)}} \!-\! \frac{1}{b+ne^{-a}} \big)$ with $b:=n\lambda + \gamma$, taking $\lambda := \bar{c} n^{-\vartheta}$, its derivation is
\begin{equation}\label{Gderin}
G'(n) = \frac{-\bar{c}(1-\vartheta) n^{-\vartheta} - e^{-a(r_*+1)}}{\left[ cn^{1-\vartheta} + \gamma + n e^{-a(r_*+1)} \right]^2} + \frac{\bar{c}(1-\vartheta) n^{-\vartheta} + e^{-a}}{\left[ cn^{(1-\vartheta)} + \gamma + ne^{-a} \right]^2} \,,
\end{equation}
which can be rewritten as
\begin{equation*}
G'(n) = \frac{\bar{c}(1-\vartheta) n^{-\vartheta} + e^{-a}}{\left[ cn^{1-\vartheta} + \gamma + n e^{-a(r_*+1)} \right]^2} \left( \underbrace{ \frac{\left[ \bar{c}n^{1-\vartheta} + \gamma + n e^{-a(r_*+1)} \right]^2}{\left[ \bar{c}n^{(1-\vartheta)} + \gamma + ne^{-a} \right]^2} }_{\triangleq H_1(n)} - \underbrace{ \frac{\bar{c}(1-\vartheta) n^{-\vartheta} + e^{-a(r_*+1)}}{\bar{c}(1-\vartheta) n^{-\vartheta} + e^{-a}} }_{\triangleq H_2(n)} \right)\,.
\end{equation*}
It can be found that both $H_1(n)$ and $H_2(n)$ are decreasing functions with $n$. More specifically, their maximum and minimum can be achieved with
\begin{equation*}
\max_n H_1(n) = H_1(1) = \left( \frac{\bar{c}+ \gamma + e^{-a(r_*+1)}}{\bar{c}+\gamma + e^{-a}} \right)^2,\quad \min_n H_1(n)  = \lim\limits_{n \rightarrow \infty} H_1(n) = \left( \frac{e^{-a(r_*+1)}}{e^{-a}} \right)^2\,,
\end{equation*}
and 
\begin{equation*}
\max_n H_2(n) = H_2(1) = \frac{\bar{c}(1-\vartheta) + e^{-a(r_*+1)}}{\bar{c}(1-\vartheta) + e^{-a}} ,\quad \min_n H_2(n)  = \lim\limits_{n \rightarrow \infty} H_2(n) =  \frac{e^{-a(r_*+1)}}{e^{-a}} \,.
\end{equation*}

Accordingly, if $H_1(1) < H_2(1)$, we obtain a decreasing function $G(n)$ of $n$, which implies that $\mathcal{N}^b_{\widetilde{\bm X}}$ will decrease with $n$.
Here the condition $H_1(1) < H_2(1)$ indicates
\begin{equation*}
\left( \frac{\bar{c}+ \gamma + e^{-a(r_*+1)}}{\bar{c}+\gamma + e^{-a}} \right)^2 \leq \frac{\bar{c}(1-\vartheta) + e^{-a(r_*+1)}}{\bar{c}(1-\vartheta) + e^{-a}}\,,
\end{equation*}
which is equivalent to
\begin{equation}\label{dericondition}
(\vartheta \bar{c} + \gamma)^2 \leq \left[ e^{-a} + (1-\vartheta) \bar{c} \right] \left[ e^{-a(r_*+1)} + (1-\vartheta) \bar{c} \right]\,.
\end{equation}
Accordingly, if the above inequality holds,  $\mathcal{N}^b_{\widetilde{\bm X}}$ will decrease with $n$.
In Section~\ref{sec:appexpsyn}, we will experimentally check whether this condition holds or not. 

\subsection{$n > d$ case and the large $n$ limit}
\label{sec:ngeqd}
In this section, we consider the $n > d$ case, and further study the trend of ${\tt{V}}_1$ as $n \rightarrow \infty$.
Note that, in this case, ${\bm X \bm X^{\!\top}}/{d}$ has at most $r_* \leq d$ non-zero eigenvalues.
Accordingly, the Riemann sum is counted to $r_*$ instead of $n$. 
Similar to the above description, we also consider the following three eigenvalue decays.


{\bf Harmonic decay} $\lambda_i( \widetilde{\bm X}) \propto n/i$, $i \in \{ 1, 2, \cdots, d \}$
\begin{equation*}
\begin{split}
\frac{1}{d} \mathcal{N}^b_{\widetilde{\bm X}} = \frac{1}{d}\sum_{i=1}^{r_*} \frac{n/i}{\left(b+n/i \right)^{2}} &= \frac{1}{d}\sum_{i=1}^{q} \frac{n/i}{\left(b+n/i \right)^{2}}  + \frac{1}{d}\sum_{i=q+1}^{r_*} \frac{n/i}{\left(b+n/i \right)^{2}} \\
& \leqslant \frac{n}{b^2d}  \int_{\frac{b}{n}}^{\frac{(r_*+1)b}{n}} \frac{u}{\left(1+u \right)^{2}} \mathrm{d} u \\
& = \frac{n}{b^2d} \left[ \ln \frac{n+(r_*+1)b}{n+b} + \frac{n}{n+b+r_*b} - \frac{n}{n+b}  \right] \,.
\end{split}
\end{equation*} 
In particular, taking the limit of $n \rightarrow \infty$, we have
\begin{equation*}
\begin{split}
\lim\limits_{n \rightarrow \infty} \frac{1}{d} \mathcal{N}^b_{\widetilde{\bm X}} & = 	\lim\limits_{n \rightarrow \infty} \frac{n}{b^2d} \left[ \ln \frac{n+(r_*+1)b}{n+b} + \frac{n}{n+b+r_*b} - \frac{n}{n+b}  \right] \\
& = 	\lim\limits_{n \rightarrow \infty} \frac{n}{b^2d} \ln \frac{n+(r_*+1)b}{n+b} + \lim\limits_{n \rightarrow \infty} \frac{n}{b^2d} \left( \frac{n}{n+b+r_*b} - \frac{n}{n+b}  \right) \\
& = \frac{r_*}{d} \left( \lim\limits_{n \rightarrow \infty} \frac{1}{b} \frac{n}{n+b} - \lim\limits_{n \rightarrow \infty} \frac{n^2}{b(n+b+r_*)(n+b)} \right) \\
& \leq \lim\limits_{n \rightarrow \infty} \frac{1}{b} \frac{n}{n+b} - \lim\limits_{n \rightarrow \infty} \frac{n^2}{b(n+b+r_*)(n+b)} \\
& = 0\,.
\end{split}
\end{equation*}
Accordingly, by the squeeze theorem, we can conclude, given $d$, $\mathcal{N}^b_{\widetilde{\bm X}}$ tends to zero when $n \rightarrow \infty$.

{\bf Polynomial decay:} $\lambda_i( \widetilde{\bm X}) \propto ni^{-2a}$ with $a > 1/2$, $i \in \{ 1, 2, \cdots, d \}$
\begin{equation*}
\begin{split}
\frac{1}{d} \mathcal{N}^b_{\widetilde{\bm X}} = \frac{1}{d}\sum_{i=1}^{r_*} \frac{ni^{-2 a}}{\left(b+ni^{-2 a} \right)^{2}} 
& \leqslant \frac{1}{2ab d} \left( \frac{n}{b} \right)^{\frac{1}{2a}} \int_{b/n}^{(r_*+1)^{2a}b/n} \frac{u^{\frac{1}{2a}}}{\left(1+u \right)^{2}} \mathrm{d} u \\
& \leqslant \frac{1}{2ab d} \left( \frac{n}{b} \right)^{\frac{1}{2a}} \int_{0}^{\infty} \frac{u^{\frac{1}{2a}}}{\left(1+u \right)^{2}} \mathrm{d} u  \\
& \leqslant \widetilde{C} \frac{1}{2ab d} \left( \frac{n}{b} \right)^{\frac{1}{2a}}  ~\quad \mbox{since the integral is finite due to $2a > 1$} 
\end{split}
\end{equation*}
Since the integral $\int \frac{u^{\frac{1}{2a}}}{\left(1+u \right)^{2}} \mathrm{d} u$ can behave rather differently for different choices of $a$, here we take $a=1$ as an example.
Taking the limit of $n \rightarrow \infty$, we have 
\begin{equation*}
\begin{split}
\lim\limits_{n \rightarrow \infty} \frac{1}{d} \mathcal{N}^b_{\widetilde{\bm X}} & = 	\lim\limits_{n \rightarrow \infty} \frac{1}{2b d} \left( \frac{n}{b} \right)^{\frac{1}{2}} \int_{b/n}^{(r_*+1)^{2}b/n} \frac{u^{\frac{1}{2}}}{\left(1+u \right)^{2}} \mathrm{d} u \\
& = \frac{1}{2b d} \lim\limits_{n \rightarrow \infty} \left( \frac{n}{b} \right)^{\frac{1}{2}} \left( \arctan(\sqrt{u}) - \frac{\sqrt{u}}{u+1} \right) \Bigg|_{b/n}^{(r_*+1)^{2}b/n} \\
& = \frac{1}{2b d} \lim\limits_{n \rightarrow \infty} \sqrt{\frac{n}{b}} \left( (r_*+1)\sqrt{b/n} - \frac{(r_*+1)\sqrt{b/n}}{(r_*+1)^2b/n} - \sqrt{b/n} + \frac{\sqrt{b/n}}{b/n+1}  \right)~\mbox{using $\lim\limits_{x \rightarrow 0} \frac{\arctan x}{x} = 1$.}\\
& = 0\,.
\end{split}
\end{equation*}

{\bf Exponential decay:} $\lambda_i( \widetilde{\bm X}) \propto ne^{-ai}$ with $a > 0$, $i \in \{ 1, 2, \cdots, d \}$
\begin{equation*}
\begin{split}
\frac{1}{d} \mathcal{N}^b_{\widetilde{\bm X}} = \frac{1}{d}\sum_{i=1}^{r_*} \frac{ne^{-ai}}{\left(b+ne^{-ai} \right)^{2}} & = \frac{1}{d}\sum_{i=1}^{q} \frac{ne^{-ai}}{\left(b+ne^{-ai} \right)^{2}} + \frac{1}{d}\sum_{i=q+1}^{r_*} \frac{ne^{-ai}}{\left(b+ne^{-ai} \right)^{2}} \\
& \leqslant \frac{1}{ad} \int_{ne^{-a(r_*+1)}}^{{ne^{-a}}} \frac{1}{\left(b+u \right)^{2}} \mathrm{d} u \\
& = \frac{1}{ad} \left( \frac{1}{b+ne^{-a(r_*+1)}} - \frac{1}{b+ne^{-a}} \right) .
\end{split}
\end{equation*}
Taking the limit of $n \rightarrow \infty$, we can directly have $\lim\limits_{n \rightarrow \infty} \frac{1}{d} \mathcal{N}^b_{\widetilde{\bm X}} = 0$.

\section{Additional Experiments}
\label{sec:appexp}

In this section, we present additional experiments including the following parts:
\begin{itemize}
	\item In Section~\ref{sec:appexpeig}, we add the \emph{MNIST} dataset \cite{L1998Gradient} to verify the eigenvalue decay equivalence, and evaluate the effect by different orders in polynomial kernel.
	\item In Section~\ref{sec:appexpsyn}, our model works in a polynomial kernel setting under the \emph{polynomial decay} and \emph{exponential decay} of $\widetilde{\bm X}$ on the synthetic dataset.
\end{itemize}
\subsection{Eigenvalue decay equivalence}
\label{sec:appexpeig}
Apart from the \emph{YearPredictionMSD} dataset in the main text, we add the \emph{MNIST} dataset \cite{L1998Gradient} to verify the eigenvalue decay equivalence.
We also compute eigenvalues of $\widetilde{\bm X} := \beta \bm X \bm X^{\!\top}/d + \alpha \bm 1 \bm 1^{\!\top}$ for validation.
Here the parameters $\alpha$ depends on the covariate $\bm \Sigma_d$, which can be empirically estimated by the sample covariance $ \frac1n \sum_{i=1}^n (\bm x_i - \frac1n \sum_{j=1}^n \bm x_j) (\bm x_i - \frac1n \sum_{j=1}^n \bm x_j)^{\!\top}$.

Results on the polynomial kernel with order 3 and the Gaussian kernel are presented in Figure~\ref{mnistpoly} and~\ref{mnistgauss}, respectively. 
It can be observed that, the nonlinear kernel matrix $\bm K$ admits almost the same eigenvalue as $\widetilde{\bm X} := \beta \bm X \bm X^{\!\top}/d + \alpha \bm 1 \bm 1^{\!\top}$ with a constant shift $\gamma$, and accordingly exhibits the same eigenvalue decay with $\widetilde{\bm X}$ and $\bm X \bm X^{\!\top}/d$.

\begin{figure*}[!htb]
	\centering
	\subfigure[digit 1]{
		\includegraphics[width=0.18\textwidth]{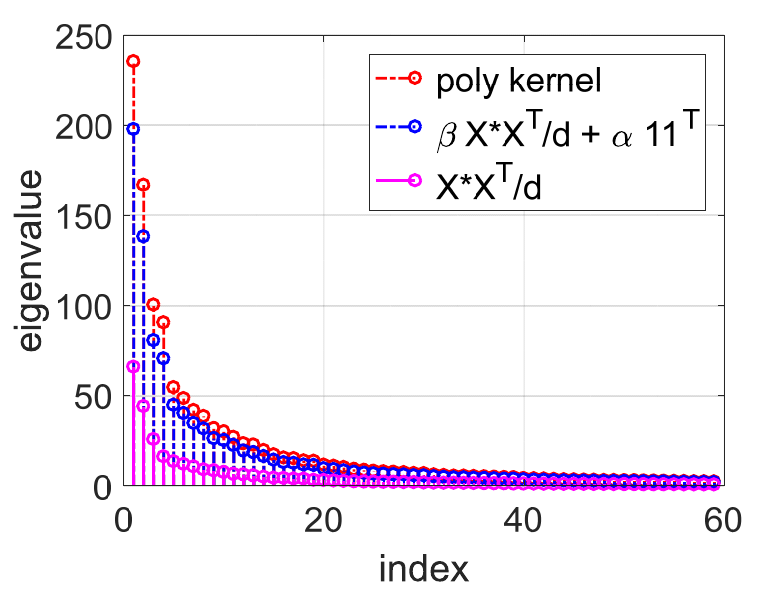}}
	\subfigure[digit 3]{
		\includegraphics[width=0.18\textwidth]{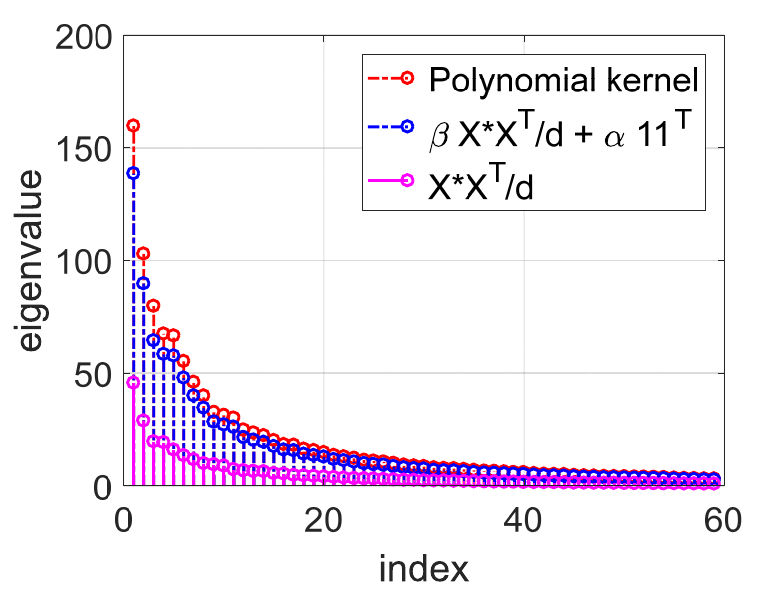}}
	\subfigure[digit 5]{
		\includegraphics[width=0.18\textwidth]{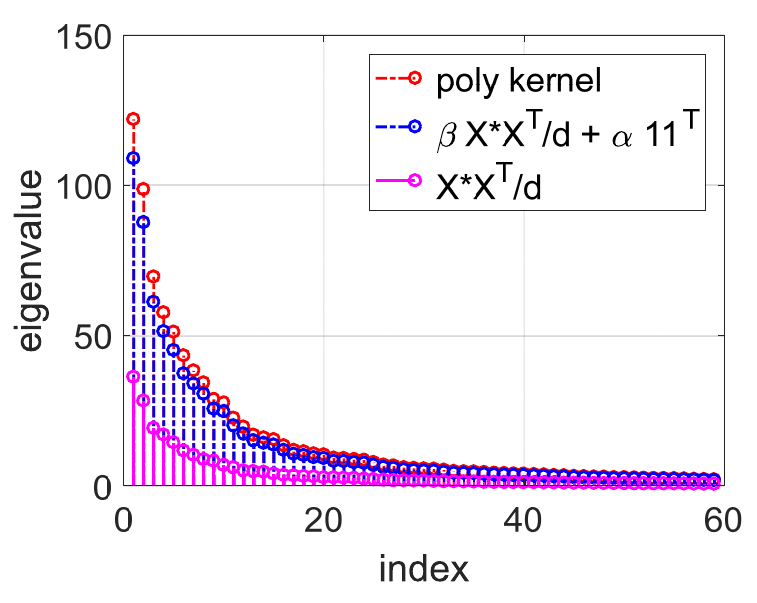}}
	\subfigure[digit 7]{
		\includegraphics[width=0.18\textwidth]{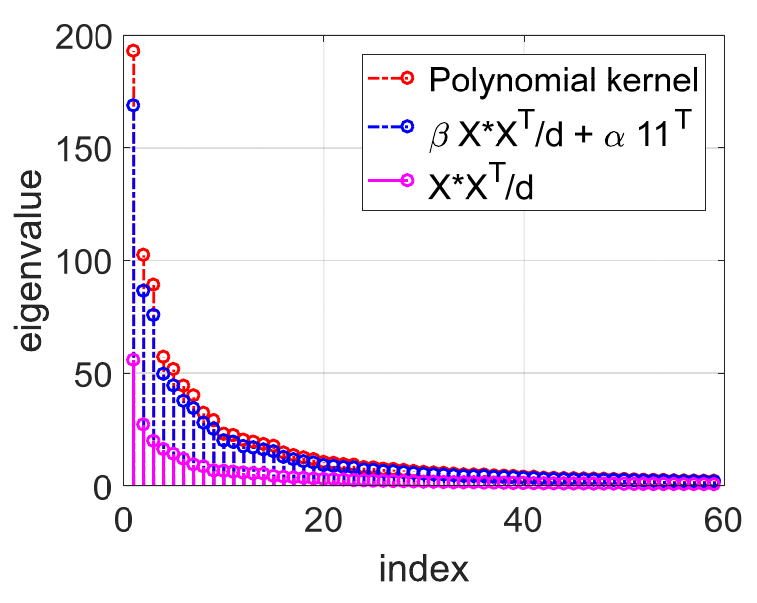}}
	\subfigure[digit 10]{
		\includegraphics[width=0.18\textwidth]{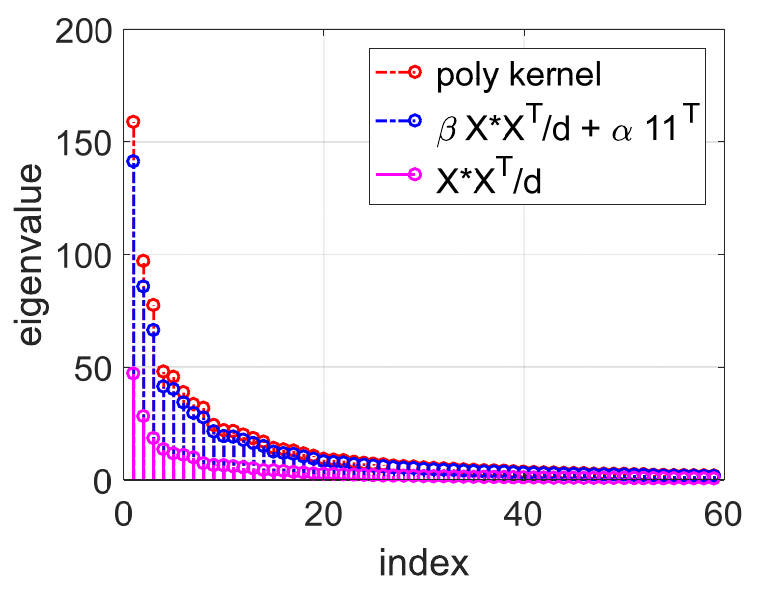}}
	\caption{Top 60 eigenvalues of Polynomial kernel with order 3 and its linearization on the MNIST dataset. Note that the largest eigenvalue $\lambda_1$ is not plotted for better display.}\label{mnistpoly}
\end{figure*}
\begin{figure*}[!htb]
	\centering
	\subfigure[digit 1]{
		\includegraphics[width=0.18\textwidth]{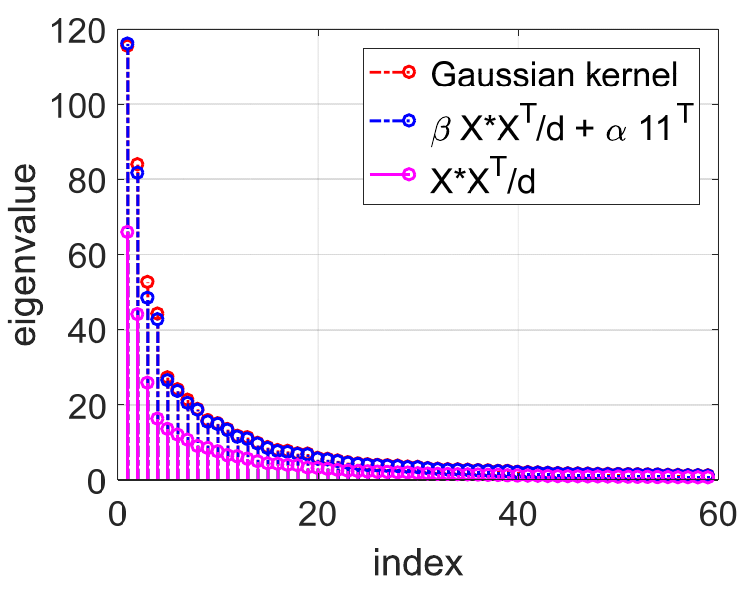}}
	\subfigure[digit 3]{
		\includegraphics[width=0.18\textwidth]{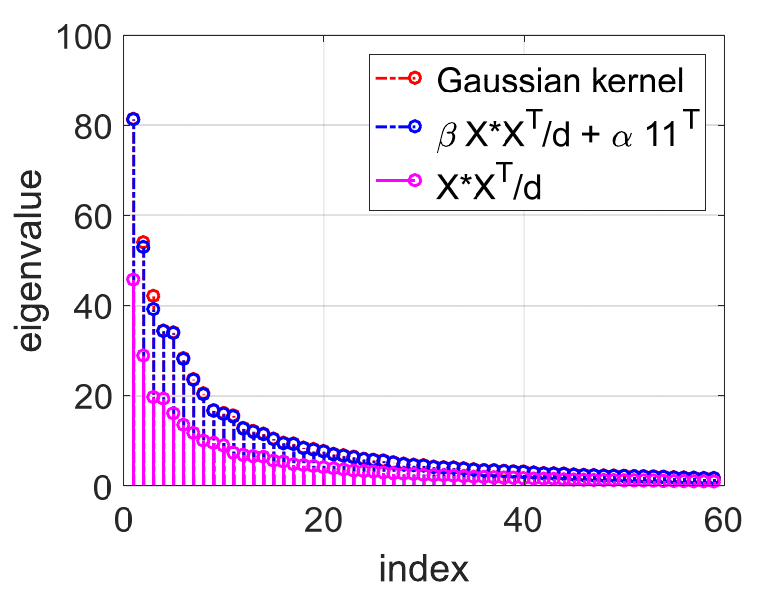}}
	\subfigure[digit 5]{
		\includegraphics[width=0.18\textwidth]{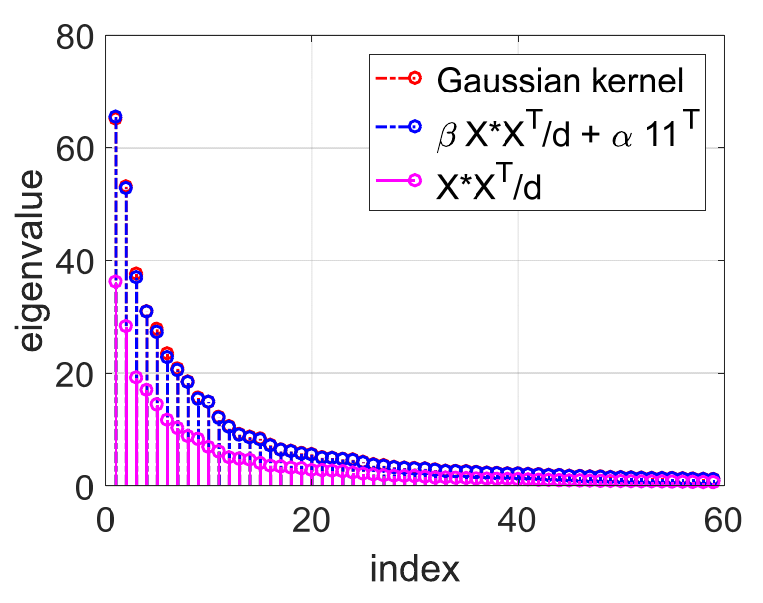}}
	\subfigure[digit 7]{
		\includegraphics[width=0.18\textwidth]{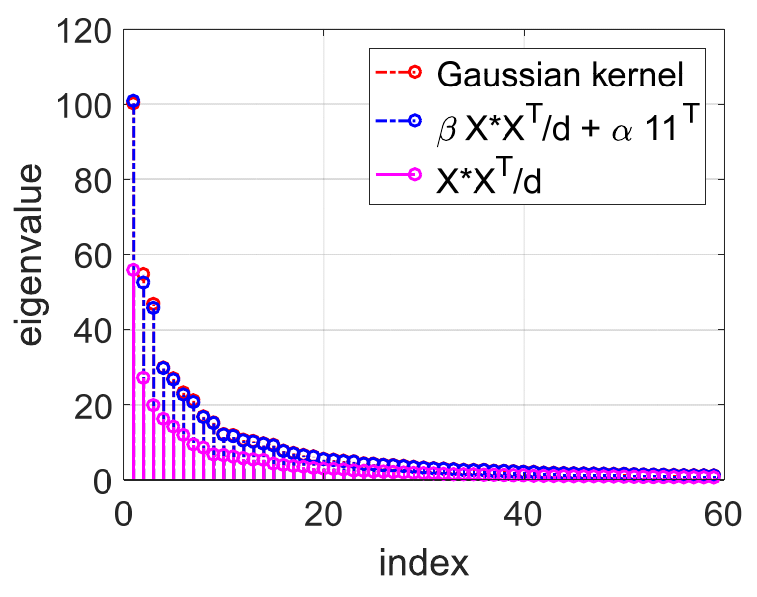}}
	\subfigure[digit 10]{
		\includegraphics[width=0.18\textwidth]{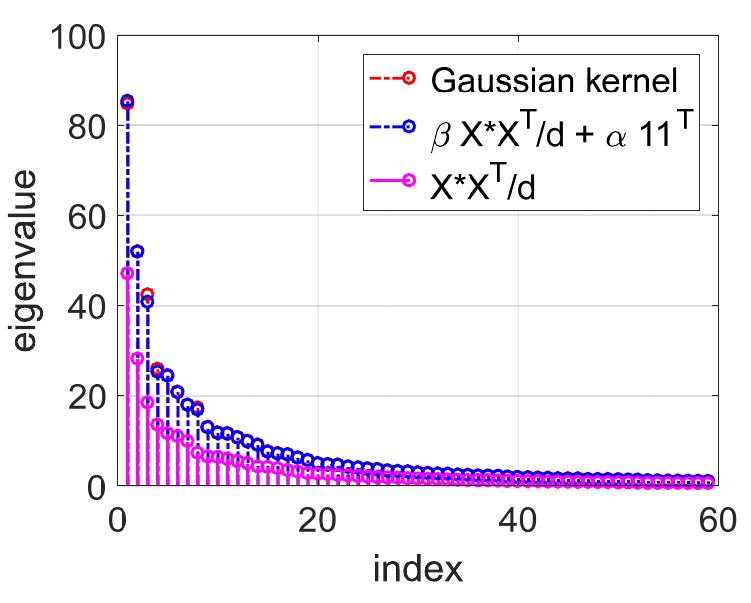}}
	\caption{Top 60 eigenvalues of Gaussian kernel and its linearization on the MNIST dataset. Note that the largest eigenvalue $\lambda_1$ is not plotted for better display.}\label{mnistgauss}
\end{figure*}

Besides, to study eigenvalue decay effected by the order in polynomial kernels, we present results of the order $p=5$ and $p=10$ in Figure~\ref{polymnistorder}.
Experimental results show that, there is some gap between the original kernel and its linearization in higher orders.
This is because, nonlinear kernel approximated by linear model here is based on Taylor expansion, which would incur in some residual errors as higher order in polynomial kernels brings in stronger non-linearity.

\begin{figure}[!htb]
	\centering
	\subfigure[\emph{order 5}]{
		\includegraphics[width=0.33\textwidth]{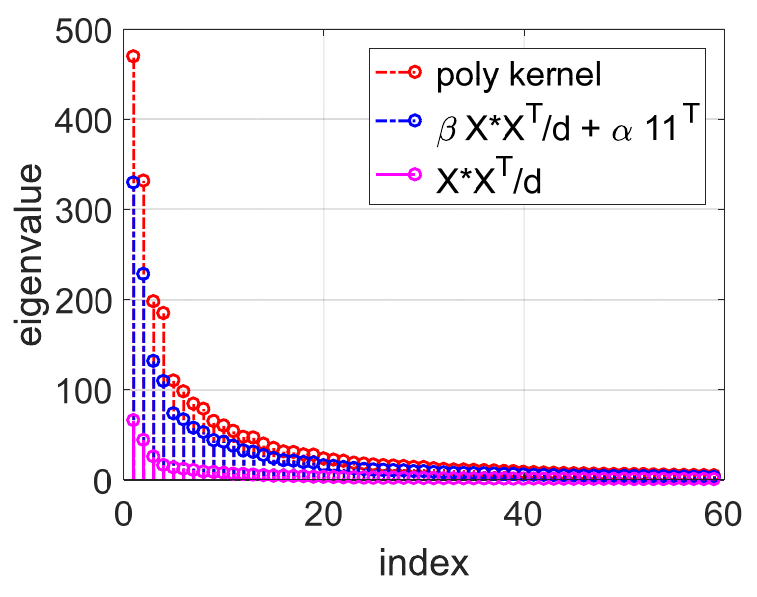}}
	\subfigure[\emph{order 10}]{
		\includegraphics[width=0.33\textwidth]{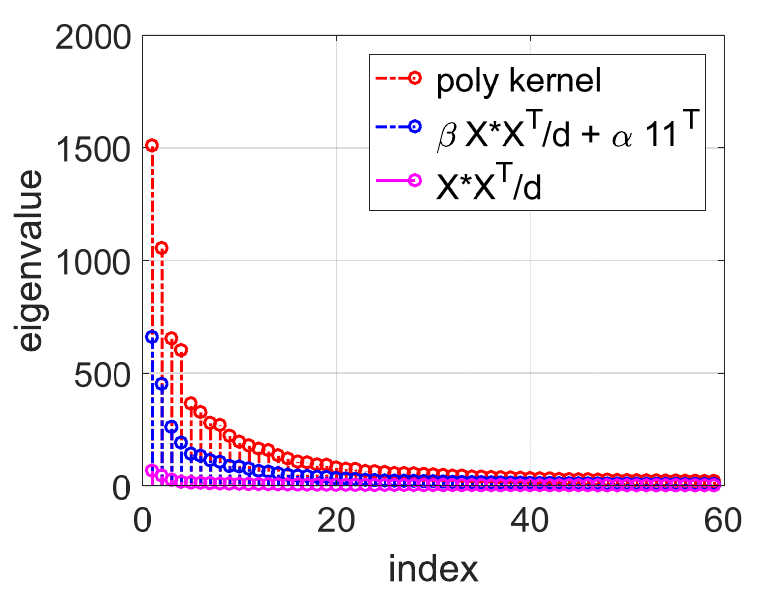}}
	\caption{Top 60 eigenvalues of polynomial kernel matrices and their linearizations on the MNIST dataset (digit 1). Note that the largest eigenvalue $\lambda_1$ is not plotted for better display.}\label{polymnistorder}
\end{figure}

\begin{figure*}[!htb]
	\centering
	\subfigure[$\vartheta=2/3$]{
		\includegraphics[width=0.23\textwidth]{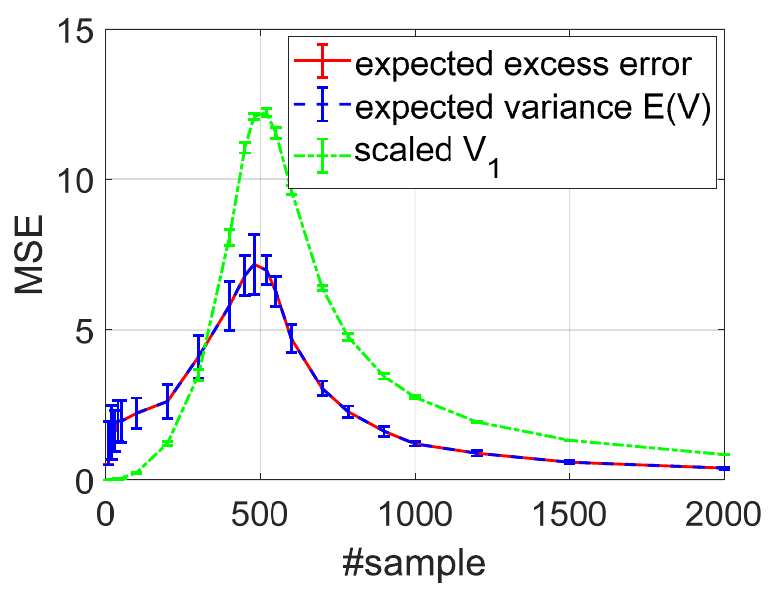}}
	\subfigure[$\vartheta=2/3$]{
		\includegraphics[width=0.23\textwidth]{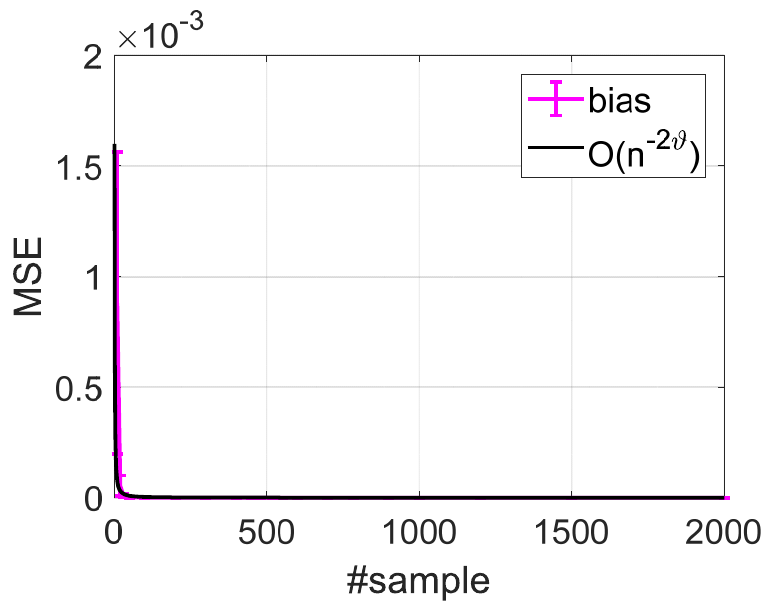}}
	\subfigure[$\vartheta=1/3$]{
		\includegraphics[width=0.23\textwidth]{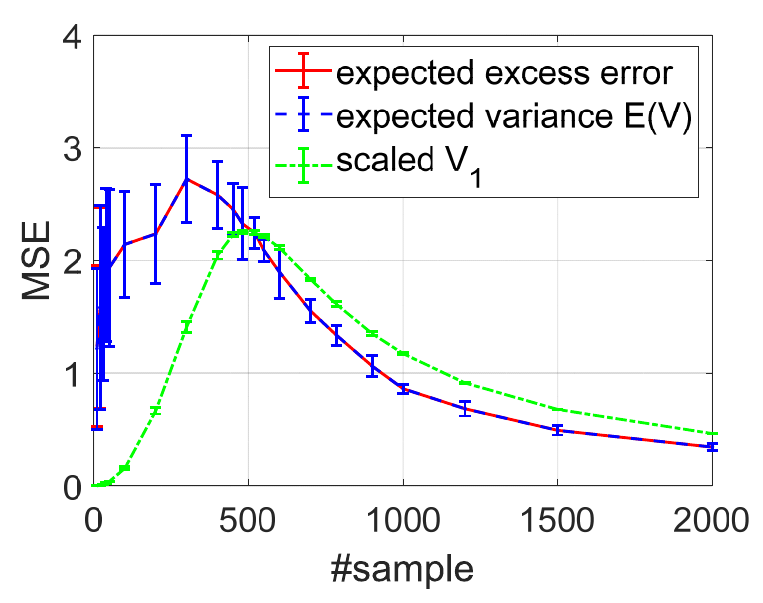}}
	\subfigure[$\vartheta=1/3$]{
		\includegraphics[width=0.23\textwidth]{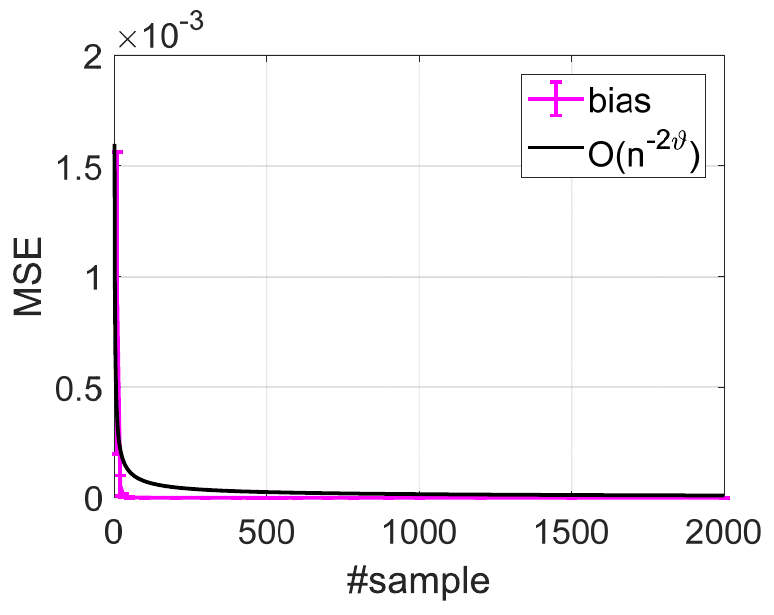}}
	\caption{Polynomial decay of $\widetilde{\bm X}$ in the polynomial kernel case: MSE of the expected excess risk, the variance in Eq.~\eqref{defvariance}, our derived ${\tt V_1}$, the bias in Eq.~\eqref{biaslemma1}, and our derived convergence rate $\mathcal{O}(n^{-2\vartheta r})$ with $r=1$ in Theorem~\ref{promain} under different $\vartheta$.}\label{fig-polykernelpolydec}
\end{figure*}

\begin{figure*}[!htb]
	\centering
	\subfigure[$\vartheta=2/3$]{
		\includegraphics[width=0.23\textwidth]{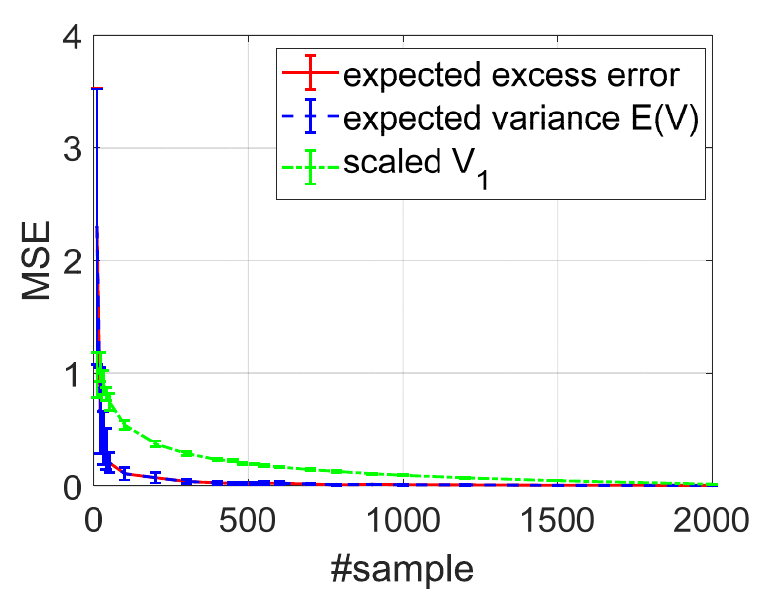}}
	\subfigure[$\vartheta=2/3$]{
		\includegraphics[width=0.23\textwidth]{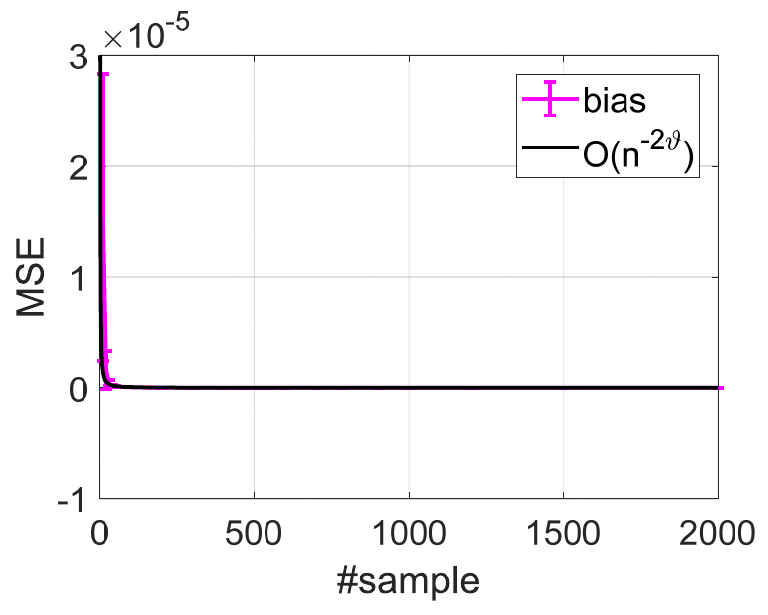}}
	\subfigure[$\vartheta=1/3$]{
		\includegraphics[width=0.23\textwidth]{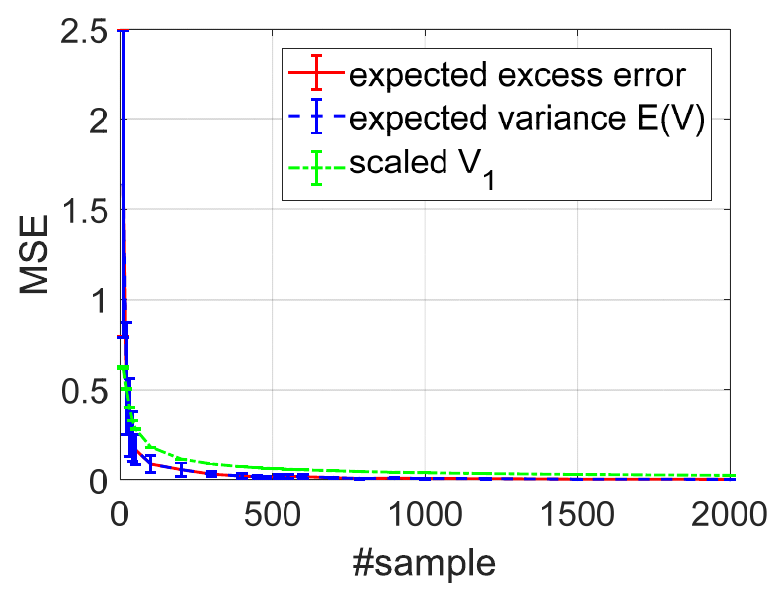}}
	\subfigure[$\vartheta=1/3$]{
		\includegraphics[width=0.23\textwidth]{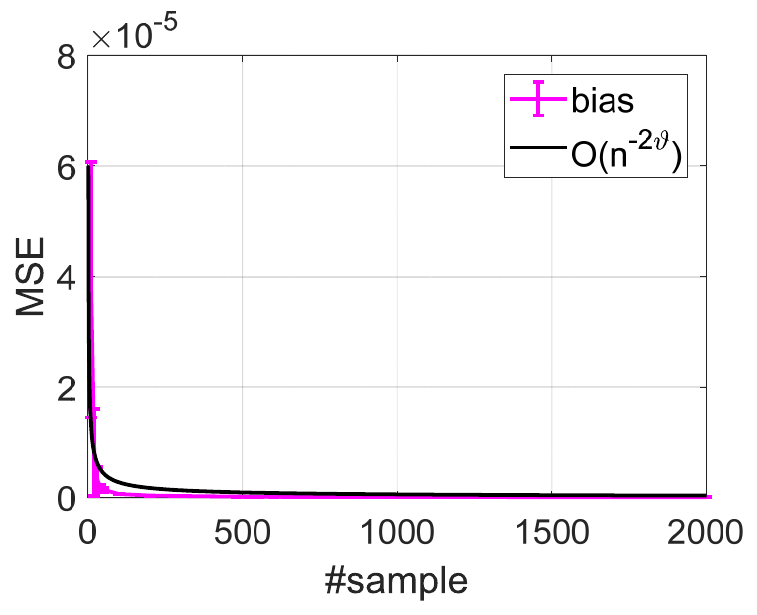}}
	\caption{Exponential decay of $\widetilde{\bm X}$ in the polynomial kernel case: MSE of the expected excess risk, the variance in Eq.~\eqref{defvariance}, our derived ${\tt V_1}$, the bias in Eq.~\eqref{biaslemma1}, and our derived convergence rate $\mathcal{O}(n^{-2\vartheta r})$ with $r=1$ in Theorem~\ref{promain} under different $\vartheta$.}\label{fig-polykernelexpdec}
\end{figure*}

\subsection{Results on the synthetic dataset}
\label{sec:appexpsyn}

Here we evaluate our model with the polynomial kernel on the synthetic dataset under the polynomial/exponential decay of $\bm \Sigma_d$.
The data generation process follows with our experiments part in the main text such that $\widetilde{\bm X}$ admits the polynomial/exponential decay.

Results on the \emph{polynomial decay} and the \emph{exponential decay} are shown in Figure~\ref{fig-polykernelpolydec} and Figure~\ref{fig-polykernelexpdec}, respectively.
We find that, the bias achieves the certain $\mathcal{O}(n^{-2 \vartheta r})$ convergence rate on both decays; while the variance shows different configurations on these two decays.
To be specific, the tend of ${\tt{V}}_1$ on the \emph{polynomial decay} is unimodal, and thus the risk curve is  bell-shaped.
However, in Figure~\ref{fig-polykernelexpdec}, ${\tt{V}}_1$ on the \emph{exponential decay} monotonically decreases with $n$ even if we set $\bar{c}$ to $10^{-5}$, $10^{-8}$ for a small regularization scheme.

Here we attempt to explain this phenomenon.
In our setting, $\gamma$ is set to zero. The condition in Eq.~\eqref{dericondition} can be reformulated as
\begin{equation*}
(2\vartheta - 1) \bar{c} \leq e^{-a}(1-\vartheta)\,.
\end{equation*}
Clearly, if we choose $0< \vartheta < 1/2$, the condition in Eq.~\eqref{dericondition} always holds.
Hence, ${\tt{V}}_1$ will monotonically decreases with $n$.
If $1/2< \vartheta < 1$, we examine our result with $a=1$ and $\vartheta=2/3$.
We conclude that the used $\bar{c}=0.01 < e^{-1}$, so the tend of ${\tt{V}}_1$ is monotonically decreasing with $n$.

\end{document}